\def\shownotes{1}  \ifnum\shownotes=1
\newcommand{\authnote}[2]{{[#1: #2]}}
\newcommand{\authnote}[2]{}
\newcommand{\wh}{\widehat}
\newcommand{\wt}{\widetilde}
\newcommand{\ov}{\overline}
\renewcommand{\tilde}{\wt}
\renewcommand{\hat}{\wh}
\renewcommand{\bar}{\ov}
\newcommand{\algnameacro}{\textsf{RPO-SAT}}
\newcommand{\algname}{Reference-based Policy Optimization with Stable at Any Time guarantee}
\newcommand{\propname}{Stable at Any Time}
\newcommand{\propnameacro}{SAT}
\theoremstyle{plain}
\newtheorem{theorem}{Theorem}[section]
\newtheorem{lemma}[theorem]{Lemma}
\theoremstyle{definition}
\newtheorem{definition}[theorem]{Definition}
\theoremstyle{remark}
\newtheorem{remark}[theorem]{Remark}
\icmltitlerunning{Nearly Optimal Policy Optimization with Stable at Any Time Guarantee}
\begin{document}

\twocolumn[
\icmltitle{Nearly Optimal Policy Optimization with Stable at Any Time Guarantee}

% It is OKAY to include author information, even for blind
% submissions: the style file will automatically remove it for you
% unless you've provided the [accepted] option to the icml2022
% package.

% List of affiliations: The first argument should be a (short)
% identifier you will use later to specify author affiliations
% Academic affiliations should list Department, University, City, Region, Country
% Industry affiliations should list Company, City, Region, Country

% You can specify symbols, otherwise they are numbered in order.
% Ideally, you should not use this facility. Affiliations will be numbered
% in order of appearance and this is the preferred way.
\icmlsetsymbol{equal}{*}

\begin{icmlauthorlist}
\icmlauthor{Tianhao Wu}{equal,yyy}
\icmlauthor{Yunchang Yang}{equal,comp,pcl}
\icmlauthor{Han Zhong}{equal,comp}
\icmlauthor{Liwei Wang}{sch}
\icmlauthor{Simon S. Du}{xxx}
\icmlauthor{Jiantao Jiao}{yyy}
%\icmlauthor{}{sch}
%\icmlauthor{}{sch}
%\icmlauthor{}{sch}
\end{icmlauthorlist}

\icmlaffiliation{yyy}{University of California, Berkeley}
\icmlaffiliation{comp}{Center for Data Science, Peking University}
\icmlaffiliation{sch}{Key Laboratory of Machine Perception, MOE,
School of Artificial Intelligence, Peking University}
\icmlaffiliation{xxx}{University of Washington}
\icmlaffiliation{pcl}{Peng Cheng Laboratory}

\icmlcorrespondingauthor{Tianhao Wu}{thw@berkeley.edu}
\icmlcorrespondingauthor{Yunchang Yang}{yangyc@pku.edu.cn}
\icmlcorrespondingauthor{Han Zhong}{hanzhong@stu.pku.edu.cn}
%\icmlcorrespondingauthor{Firstname2 Lastname2}{first2.last2@www.uk}

% You may provide any keywords that you
% find helpful for describing your paper; these are used to populate
% the "keywords" metadata in the PDF but will not be shown in the document
\icmlkeywords{Machine Learning, ICML, Reinforcement Learning, Policy Optimization}

\vskip 0.3in
]

% this must go after the closing bracket ] following \twocolumn[ ...

% This command actually creates the footnote in the first column
% listing the affiliations and the copyright notice.
% The command takes one argument, which is text to display at the start of the footnote.
% The \icmlEqualContribution command is standard text for equal contribution.
% Remove it (just {}) if you do not need this facility.

%\printAffiliationsAndNotice{}  % leave blank if no need to mention equal contribution
\printAffiliationsAndNotice{\icmlEqualContribution} % otherwise use the standard text.

\begin{abstract}
Policy optimization methods are one of the most widely used classes of Reinforcement Learning (RL) algorithms. However, theoretical understanding of these methods remains  insufficient. Even in the episodic (time-inhomogeneous) tabular setting, the state-of-the-art theoretical result of policy-based method in \citet{shani2020optimistic} is only $\tilde{O}(\sqrt{S^2AH^4K})$ where $S$ is the number of states, $A$ is the number of actions, $H$ is the horizon, and $K$ is the number of episodes, and there is a $\sqrt{SH}$ gap compared with the information theoretic lower bound $\tilde{\Omega}(\sqrt{SAH^3K})$~\citep{jin2018q}. To bridge such a gap, we propose a novel algorithm Reference-based Policy Optimization with Stable at Any Time guarantee (\algnameacro), which features the property ``Stable at Any Time''. We prove that our algorithm achieves  $\tilde{O}(\sqrt{SAH^3K} + \sqrt{AH^4K})$ regret. When $S > H$, our algorithm is minimax optimal when ignoring logarithmic factors. To our best knowledge, RPO-SAT is the first computationally efficient, nearly minimax optimal policy-based algorithm for tabular RL.
\end{abstract}

\section{Introduction}
Reinforcement Learning (RL) has achieved phenomenal successes in solving complex sequential decision-making problems \citep{silver2016mastering,silver2017mastering,levine2016end,gu2017deep}. Most of these empirical successes are driven by policy-based (policy optimization) methods, such as policy gradient \citep{sutton1999policy}, natural policy gradient (NPG) \citep{kakade2001natural}, trust region policy optimization (TRPO) \citep{schulman2015trust}, and proximal policy optimization (PPO) \citep{schulman2017proximal}. For example, \citet{haarnoja2018soft} proposed a policy-based state-of-the-art reinforcement learning algorithm, soft actor-critic (SAC), which outperformed value-based methods in a variety of real world robotics tasks including manipulation and locomotion.
In fact, \citet{kalashnikov2018qt} observed that compared with value-based methods such as Q-learning, policy-based methods work better with dense reward. On the other hand, for sparse reward cases in robotics, value-based methods perform better.

Motivated by this, a line of recent work \citep{fazel2018global,bhandari2019global,liu2019neural,wang2019neural,agarwal2021theory} provides global convergence guarantees for these popular policy-based methods. However, to achieve this goal, they made several assumptions. \citet{agarwal2021theory} assumes they have the access to either the exact population policy gradient or an estimate of it up to certain precision for all states uniformly compared with the state distribution induced by $\pi^*$, bypassing the hardness of exploration. They showed that even with this stringent assumption, the convergence rate would depend on the distribution mismatch coefficient $D_{\infty} = \max_s \big(\frac{d_{s_0}^{\pi^*}(s)}{\mu(s)}\big)$, where $\mu$ is the starting state distribution of the algorithm and $d_{s_0}^{\pi^*}(s)$ is the stationary state distribution of the optimal policy $\pi^*$ starting from $s_0$. This dependency is problematic since $D_{\infty}$ is small only when the initial distribution has a \emph{good coverage} of the optimal stationary distribution, which may not happen in practice. 

However, in online value-based RL, algorithm such as \citet{azar2017minimax} can achieve fast convergence rate (or regret) independent of the distribution mismatch coefficient, or equivalently, without the \emph{coverage} assumption. Though value-based methods have achieved the information theoretical optimal regret in tabular \citep{azar2017minimax} and linear MDPs settings \citep{zanette2020learning}, it remains unclear whether policy-based methods can achieve information theoretically optimal regret in the same settings. To address this issue, \citet{cai2020provably} proposed the idea of optimism in policy optimization, which seems similar to the value-based optimism but different in nature, since it encourages optimism for $Q^{\pi}$ instead of $Q^{*}$ (Section \ref{sec:technique}). With this new idea, \citet{cai2020provably,shani2020optimistic} managed to establish the regret guarantees without additional assumptions, though the regret is suboptimal.
 
In this work, we focus on the same setting as in \citet{shani2020optimistic}: episodic \emph{tabular} MDPs with \emph{unknown transitions}, \emph{stochastic rewards/losses}, and \emph{bandit feedback}. In this setting, the state-of-the-art result of policy-based method is $\tilde{O}(\sqrt{S^2AH^4K})$~\citep{shani2020optimistic}. Here $S$ and $A$ are the cardinality of states and actions, respectively, $H$ is the episode horizon, and $K$ is the number of episodes. Compared with the information theoretic limit \citep{jaksch2010near,azar2017minimax,jin2018q,domingues2021episodic}, there is still a gap $\sqrt{SH}$. 
\subsection{Main Contributions}
In this paper we present a novel provably efficient policy optimization algorithm, \emph{\algname}\,(\algnameacro).
We establish a high probablity regret upper bound $\tilde{O}(\sqrt{SAH^3K}+\sqrt{AH^4K})$ for our algorithm. Importantly, if $S> H$, the main term in this bound matches the information theoretic limit $\Omega(\sqrt{SAH^3K})$ \citep{jin2018q}, up to some lower order term $\tilde{O}(\operatorname{poly}(S,A,H)K^{1/4})$. We introduce our algorithmic innovations and analytical innovations as follows:

\textbf{Algorithmic innovations:} 
\begin{itemize}
    \item We introduce a novel reference $V$ estimator in our algorithm. It is conceptually simple and easy to implement as it just updates the reference value to be the mean of empirical $V$ values when some conditions are triggered (cf. Algorithm \ref{alg:pore} Lines 18-20).
    \item We carefully incorporate the reference $V$ estimator into our bonus term, in the way of adding a weighted absolute difference between the estimated $V$ values and the reference $V$ values to control the instability of the estimation process. 
    Readers may refer to Section \ref{sec:technique} for more details.
    \item Another highlight is that we modify the policy improvement phase of our algorithm to meet a novel property, which we called \textbf{\propname\,(\propnameacro)}. (For a detailed definition see Equation \eqref{eq:tech:1} in Section~\ref{sec:technique}.) More specifically, instead of using the KL-divergence regularization term proposed in \citet{shani2020optimistic}, we use the $\ell_2$ regularization term. This is crucial to ensure \propnameacro.  
    \end{itemize}

\textbf{Analytical innovations:}
\begin{itemize}
    \item We prove that our algorithm satisfies the \propnameacro\,property. The analysis is done by two steps: First, we establish a \emph{1}-st step regret bound $\tilde{O}(\sqrt{S^2AH^4K})$. Second, we use a new technique ``\emph{Forward Induction}'' to prove the same for all the $h$-th step regret. Here the $h$-step regret is defined in \eqref{eq:def:regret}. Readers may refer to Section \ref{sec:technique} for more details of the ``Forward Induction'' technique. 
    \item We show that the combination of the \propnameacro\;property and the simple reference $V$ estimator yields a precise approximation of $V^*$. We use this property to derive a $\tilde{O}(\sqrt{SAH^3K})$ upper bound for the sum of bonus terms, which leads to a $\sqrt{S}$ reduction in terms of regret.
\end{itemize}

\iffalse
Finally, we state our main theorem (informal) in the end of this section. Readers may refer to Theorem \ref{thm:fine_reg} for the formal statement.
\begin{theorem}[informal]
Suppose in Algorithm \ref{alg:pore}, we choose $\eta_t = O(\sqrt{1/(H^2At)})$, and $C_0=\sqrt{S^3AH^3}$, then we have
$$\operatorname{Regret}(K) = \tilde{O}(\sqrt{SAH^3K}+\sqrt{AH^4K}).$$\label{thm:informal}
\end{theorem}
\simon{I don't think we need this informal theorem....Otherwise we will need to specify what's $\eta_t$ and  $c_0$.}
\fi 

\subsection{Related Work} \label{sec:related:work}

Our work contributes to the theoretical investigations of policy-based methods in RL \citep{cai2020provably,shani2020optimistic,lancewicki2020learning,fei2020dynamic,he2021nearly,zhong2021optimistic,luo2021policy,zanette2021cautiously}. The most related policy-based method is proposed by \citet{shani2020optimistic}, who also studies the episodic tabular MDPs with {unknown transitions}, {stochastic losses}, and {bandit feedback}. It is important to understand whether it is possible to eliminate this gap and thus achieve minimax optimality, or alternatively to show that this gap is inevitable for policy-based methods. 

We also provide interesting practical insights. For example, the usage of reference estimator and $\ell_2$ regularization with decreasing learning rate to stabilize the estimate of $V$ and $Q$. The use of reference estimator can be traced to \cite{zhang2020almost}. They use the reference estimator to maximize the data utilization, hence reduce the estimation variance. However, our usage of reference estimator is different from theirs. The reason why they can reduce a $\sqrt{H}$ factor is because the bottleneck term is only estimated using $1/H$ fraction of data, while the usage of reference can make use of all the data, hence fully utilizing available data. However, for policy-based RL, there is no such problem of data utilization. In fact, the bottleneck of policy-based methods is the \emph{instability} of $Q$ estimation, therefore we use the reference estimator to stabilize the estimation process and reduce a $\sqrt{S}$ factor in the regret. Readers may turn to Section \ref{sec:technique} for a detailed explanation for ``instability''.

Our work is also closely related to another line of work on value-based methods. In particular, \citet{azar2017minimax,zanette2019tighter,zhang2020reinforcement,zhang2020almost,menard2021ucb} have shown that the value-based methods can achieve $\tilde{O}(\sqrt{SAH^3K})$ regret upper bound, which matches the information theoretic limit. Different from these works, we are the first to prove the (nearly) optimal regret bound for policy-based methods.

\section{Preliminaries}
A finite horizon stochastic Markov Decision Process (MDP) with time-variant transitions $\mathcal{M}$ is defined by a tuple $(\mathcal{S}, \mathcal{A}, H, P = \left\{P_{h}\right\}_{h=1}^{H}, c = \left\{c_{h}\right\}_{h=1}^{H})$, where $\mathcal{S}$ and $\mathcal{A}$ are finite state and action spaces with cardinality $S$ and $A$, respectively, and $H \in \mathbb{N}$ is the horizon of the MDP. At time step $h$, and state $s$, the agent performs an action $a$, transitions to the next state $s^{\prime}$ with probability $P_{h}(s^{\prime} \mid s, a)$, and suffers a random cost $C_{h}(s, a) \in[0,1]$ drawn i.i.d from a distribution with expectation $c_{h}(s, a)$. 

A stochastic policy $\pi: \mathcal{S} \times[H] \rightarrow \Delta_{A}$ is a mapping from states and time-step indices to a distribution over actions, i.e., $\Delta_{A}=\left\{\pi \in \mathbb{R}^{A}: \sum_{a} \pi(a)=1, \pi(a) \geq 0\right\} .$ The performance of a policy $\pi$ when starting from state $s$ at time $h$ is measured by its value function, which is defined as 
\begin{equation}
V_{h}^{\pi}(s)=\mathbb{E}\bigg[\sum_{h^{\prime}=h}^{H} c_{h^{\prime}}\left(s_{h^{\prime}}, a_{h^{\prime}}\right) \mid s_{h}=s, \pi, P\bigg].
\end{equation}
The expectation is taken with respect to the randomness of the transition, the cost function and the policy. The $Q$-function of a policy given the state action pair $(s, a)$ at time-step $h$ is defined by
\begin{equation}
Q_{h}^{\pi}(s, a)=\mathbb{E}\bigg[\sum_{h^{\prime}=h}^{H} c_{h^{\prime}}(s_{h^{\prime}}, a_{h^{\prime}}) \mid s_{h}=s, a_{h}=a, \pi, P\bigg].
\end{equation}
By the above definitions, for any fixed policy $\pi$, we can obtain the Bellman equation 
\begin{equation}
\begin{aligned}
Q_{h}^{\pi}(s, a)&=c_{h}(s, a)+P_{h}(\cdot \mid s, a) V_{h+1}^{\pi}(\cdot),\\
V_{h}^{\pi}(s)&=\langle Q_{h}^{\pi}(s, \cdot), \pi_{h}(\cdot \mid s)\rangle.
\end{aligned}
\end{equation}
An optimal policy $\pi^{*}$ minimizes the value for all states $s$ and time-steps $h$ simultaneously, and its corresponding optimal value is denoted by $V_{h}^{*}(s)=$ $\min _{\pi} V_{h}^{\pi}(s)$, for all $h \in[H] .$ We consider an agent that repeatedly interacts with an MDP in a sequence of $K$ episodes such that the starting state at the $k$-th episode, $s_{1}^{k}$, is initialized by a fixed state $s_{1}$\footnote{Our subsequent analysis can be extended to the setting where the initial state is sampled from a fixed distribution.}.

In this paper we define the notion of \emph{$h$-th step regret}, $\operatorname{Regret}_h$, as follows:
\begin{align} \label{eq:def:regret}
\operatorname{Regret}_h(K)=\sum_{k=1}^{K} \big(V_{h}^{\pi^{k}}(s_{h}^{k})-V_{h}^{*}(s_{h}^{k})\big).
\end{align}
When $h=1$, this matches the traditional definition of regret, which measures the performance of the agent starting from $s_1$. In this case we also use $\operatorname{Regret}(K)$ for simplicity.

\paragraph{Notations and Definitions} We denote the number of times that the agent has visited state $s$, state-action pair $(s, a)$ and state-action-transition pair $(s,a,s^{\prime}$ at the $h$-th step by $n_{h}^{k}(s)$, $n_{h}^{k}(s, a)$ and $n_{h}^{k}(s, a, s^{\prime})$ respectively. We denote by $\bar{X}_{k}$ the empirical average of a random variable $X$. All quantities are based on experience gathered until the end of the $k$-th episode. We denote by $\Delta_{\mathcal{A}}$ the probability simplex over the action space, i.e. $\Delta_{\mathcal{A}}=\{(p_1,...,p_{|\mathcal{A}|})\mid p_i\geq 0, \sum_{i}p_i=1\}$.

We use $\tilde{O}(X)$ to refer to a quantity that depends on $X$ up to a poly-log expression of a quantity at most polynomial in $S, A, K, H$ and $\delta^{-1} .$ Similarly, $\lesssim$ represents $\leq$ up to numerical constants or poly-log factors. We define $X \vee Y:=\max \{X, Y\}$.

\section{\algnameacro: \algname}
\begin{algorithm}[H]
    \caption{\algname\,(\algnameacro)}\label{alg:pore}
    \begin{algorithmic}[1]
    \STATE initialize $Q_{h}(x, a) \leftarrow 0$, $V_h(x,a)\leftarrow 0$ and $ V_h^{\mathrm{ref}}(x,a) \leftarrow 0$, $C_0=\sqrt{S^3AH^3}$
    \FOR{episode $k=1,...,K$}
    \STATE Rollout a trajectory by acting $\pi^k$
    \STATE Update counters and empirical model $n^k = \{n_h^k\}_{h \in [H]}, \bar{c}^k = \{\bar{c}_h^k\}_{h \in [H]}, \bar{P}^k = \{\bar{P}_h^k\}_{h \in [H]}$ 
    \FOR{Step $h=H,...,1$}
    \FOR{$\forall s,a\in \mathcal{S}\times\mathcal{A}$}
    \STATE Calculate $u_h^k$ as in \eqref{eqn:longlong}
    \STATE $b^k_h(s,a) = \min\{u_{h}^{k}(s,a), \sqrt{\frac{2 \ln \frac{2 S A H T}{\delta^{\prime}}}{n_{h}^{k}(s, a) }}+H\sqrt{\frac{4 S \ln \frac{3 S A H T}{\delta^{\prime}}}{n_{h}^{k}(s, a) }}\}$\label{line:ref:8}
    \STATE $Q^k_h(s,a) = \max\{ \bar{c}_h^k(s,a)+\bar{P}^{k}_h(\cdot \mid s,a) V^k_{h+1}(\cdot) -b^k_h(s,a), 0\}$
    \FOR{$\forall s\in \mathcal{S}$}
    \STATE $V^k_h(s) = \langle Q^k_h(s,\cdot),\pi^k_h(\cdot|s)\rangle$
    \ENDFOR
    \ENDFOR
    \ENDFOR
    \FOR{$\forall h,s,a\in[H]\times\mathcal{S}\times\mathcal{A}$}
    \STATE $\pi^{k+1}_h(\cdot|s) = \text{argmin}_{\pi_h} \eta_k\langle Q^k_h(\cdot,s),\pi_h -\pi^k_h\rangle+ \|\pi_h-\pi_h^k\|_2^2$
    \ENDFOR
    \FOR {$\forall h,s,a\in[H]\times\mathcal{S}\times\mathcal{A}$ such that $n_h^k(s) \geq C_0\sqrt{k}$} \label{line:ref:1}
    \STATE $V^{\mathrm{ref}}_h(s) = \frac{1}{n^k_h(s)}\sum_{i=1}^k V_h^i(s^i_h)\textbf{1}[s^i_h=s] $ \label{line:ref}
    \ENDFOR \label{line:ref:2}
    \ENDFOR
    \end{algorithmic}
    \end{algorithm}

In this section, we present our algorithm \algnameacro\,(\algname). The pseudocode is given in Algorithm \ref{alg:pore}.

We start by reviewing the optimistic policy optimization algorithms OPPO and POMD proposed by \citet{cai2020provably,shani2020optimistic}. In each update, OPPO and POMD involve a policy improvement phase and a policy evaluation phase. In policy evaluation phase, OPPO and POMD explicitly incorporate a UCB bonus function into the estimated $Q$-function to promote exploration. Then, in the policy improvement phase, OPPO and POMD improve the policy by Online Mirror Descent (OMD) with KL-regularization, where the estimated $Q$-function serves as the gradient. Compared with existing optimistic policy optimization algorithms in \citet{shani2020optimistic,cai2020provably}, our algorithm has three novelties.

First, in policy evaluation phase, we introduce the reference $V$ estimator (Lines 18-20). Specifically, for any $s \in \cal{S}$, if the number of visiting $s$ satisfies the condition in Line 18, we update the reference value to be the empirical mean estimator of $V$ (Line 19). \citet{zhang2020almost} also adopts a reference value estimator. Nevertheless their update conditions and methods are different from us, and they reduce a $\sqrt{H}$ factor while we reduce a $\sqrt{S}$ factor.

Second, we make some modifications in policy improvement phase. Specifically, the policy optimization step in the $h$-th step of the $k$-th episode is  
\begin{align*}
    \pi^{k+1}_h(\cdot|s) = \text{argmin}_{\pi_h} \eta_k\langle Q^k_h(\cdot,s),\pi_h -\pi^k_h\rangle+ D(\pi_h, \pi_h^k),
\end{align*}
where $\eta_k$ is the stepsize in the $k$-th episode, $D$ is some distance measure and $Q_h^k$ is the estimated $Q$-function in the $h$-th step of the $k$-th episode. 
Different from \citet{shani2020optimistic,cai2020provably} choosing $D(\pi_h, \pi_h^k) = \mathrm{KL}(\pi_h, \pi_h^k)$, we choose $D(\pi_h, \pi_h^k) = \|\pi_h - \pi_h^k\|_2^2$. In this case the solution of the OMD is: 
\begin{equation*}
    \pi^{k+1}_h(\cdot|s) =\Pi_{\Delta_{\mathcal{A}}}\left(\pi^k_h(\cdot|s)-\eta_{k} Q^k_h(\cdot,s)\right)
\end{equation*}
where $\Pi_{\Delta_{\mathcal{A}}}$ is the Euclidean projection onto $\Delta_{\mathcal{A}}$, i.e. $\Pi_{\Delta_{\mathcal{A}}}(\boldsymbol{x})=\operatorname{argmin}_{\boldsymbol{y} \in \Delta_{\mathcal{A}}}\|\boldsymbol{x}-\boldsymbol{y}\|_{2}$. Unlike previous works, we also adopt a decreasing learning rate schedule instead of a fixed learning rate. These modifications are necessary for our analysis since they ensure the \propnameacro\;property, making it possible to learn a good reference $V$ value.

Finally, we design a novel bonus term which carefully incorporate the reference $V$ estimator mentioned before by adding a weighted absolute difference Term (iii), which is also referred to as the \emph{instability} term. Specifically, we define
\begin{align}\label{eqn:longlong}
u_{h}^{k}(s,a)=\text{Term}(i)+\text{Term}(ii)+\text{Term}(iii)+\text{Term}(iv)
\end{align}
where
\begin{align*}
\text{Term}(i)&=\sqrt{\frac{2 \ln \frac{2 S A H T}{\delta^{\prime}}}{n_{h}^{k}(s, a)  }}, \notag\\
\text{Term}(ii)&=\sqrt{\frac{6 \mathbb{V}_{Y \sim \bar{P}_h^k (\cdot \mid s, a)}V_{h+1}^{\mathrm{ref}}(Y)\ln \left(\frac{2 SAHK}{\delta^{\prime}}\right)}{n_{h}^{k}(s, a)}}+\\
&\sqrt{\frac{4H \ln \left(\frac{2 SAHK}{\delta^{\prime}}\right)}{S \cdot n_{h}^{k}(s, a)}}+\frac{8\sqrt{SH^2} \ln \left(\frac{2 SAHK}{\delta^{\prime}}\right)}{3 n_{h}^{k}(s, a)}, \notag\\
\text{Term}(iii)&\!=\!\sum_{y\in\mathcal{S}}\!\left(\!\sqrt{\frac{2\!\bar{P}_{h}^k(\!y\! \mid\! s\!,\! a\!)(\!1\!-\!\bar{P}_{h}^k\!(\!y \!\mid\! s,\! a)) \!\ln\! \left(\!\frac{2 SAHK}{\delta^{\prime}}\!\right)}{n_{h}^{k}(s, a)-1}}\right.\\
&\left.+\frac{7 \ln \left(\frac{2 SAHK}{\delta^{\prime}}\right)}{3 n_{h}^{k}(s, a)}\right)|V^k_{h+1}(y)-V^{\mathrm{ref}}_{h+1}(y)|, \notag\\
\text{Term}(iv)&=\sqrt{\frac{4 H\ln \frac{3 S A H T}{\delta^{\prime}}}{n_{h}^{k}(s, a)  }}+\frac{7SH \ln \left(\frac{2 SAHK}{\delta^{\prime}}\right)}{3 n_{h}^{k}(s, a)}\\
&+\frac{2S^{3/2}A^{1/4}H^{7/4}K^{1/4}\sqrt{\ln \left(\frac{2 SAHK}{\delta^{\prime}}\right)}}{n_h^{k}(s,a)}.
\end{align*}
where $(i)$ is the estimation error for reward functions and $(ii)$-$(iv)$ are estimation errors for transition kernels. We will provide more explanations for this seemingly complicated term in Section \ref{sec:technique}. Furthermore, we set the bonus as 
\begin{align} \label{eq:def:bonus}
b^k_h(s,a) =& \min\left\{u_{h}^{k}(s,a),\right.\notag\\
&\left.\sqrt{\frac{2 \ln \frac{2 S A H T}{\delta^{\prime}}}{n_{h}^{k}(s, a) }}+H\sqrt{\frac{4 S \ln \frac{3 S A H T}{\delta^{\prime}}}{n_{h}^{k}(s, a) }}\right\}.
\end{align}
With this carefully chosen bonus function, we can also achieve optimism like previous work in optimistic policy optimization \citep{shani2020optimistic,cai2020provably}. Notably, our bonus function is smaller than that in \citet{shani2020optimistic}, thus leads to our tighter regret bound.

Now we state our main theoretical results for \algnameacro. 
\begin{theorem}[Regret bound] \label{thm:fine_reg}
Suppose in Algorithm \ref{alg:pore}, we choose $\eta_t = O(\sqrt{1/(H^2At)})$, and $C_0=\sqrt{S^3AH^3}$, then for sufficiently large $K$, we have
\begin{align*}
    \operatorname{Regret}(K) \le &\tilde{O}(\sqrt{SAH^3K} + \sqrt{AH^4K}+\\
    &S^{5/2}A^{5/4}H^{3/2}K^{1/4}).
\end{align*}
\end{theorem}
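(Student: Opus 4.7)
The plan is to decompose the regret in the standard optimism-plus-OMD fashion and then to attack the two non-standard pieces: the bonus sum and the policy-improvement error. Write
\begin{align*}
V_h^{\pi^k}(s_h^k)-V_h^*(s_h^k)
&= \bigl(V_h^{\pi^k}(s_h^k)-V_h^k(s_h^k)\bigr) \\
&\quad + \bigl(V_h^k(s_h^k)-V_h^*(s_h^k)\bigr),
\end{align*}
use optimism $V_h^k\le V_h^*$ (to be established from the choice of $b_h^k$) to kill the second summand, and expand the first via the Bellman recursion so that it telescopes into a sum of bonuses, martingale differences from $(\bar P_h^k-P_h)V_{h+1}^k$, and the OMD improvement error $\langle Q_h^k(s,\cdot),\pi_h^k(\cdot|s)-\pi_h^*(\cdot|s)\rangle$. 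The martingale pieces are handled by Azuma; the improvement error, thanks to the $\ell_2$-regularized update with $\eta_t=O(1/\sqrt{H^2At})$ and the projection onto $\Delta_{\mathcal A}$, gives the standard OMD bound $\tilde O(\sqrt{AH^4K})$, which accounts for the second main term in the theorem.

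The heart of the argument is the bonus sum $\sum_{k,h} b_h^k(s_h^k,a_h^k)$. Here I would first verify optimism: a Bernstein-type concentration shows that with the bonus defined by Term(i)--Term(iv) in \eqref{eqn:longlong}, $Q_h^k\le Q_h^{\pi^*}$ and hence $V_h^k\le V_h^*$ at every $(k,h)$, with the Term(iii) contribution precisely absorbing the bias from replacing $V_{h+1}^{\mathrm{ref}}$ by $V_{h+1}^k$ inside the empirical variance. Terms (i), (ii), (iv) are routine: summing $1/\sqrt{n_h^k(s,a)}$ along the trajectory gives $\tilde O(\sqrt{SAHK})$, the $1/n_h^k(s,a)$ terms are lower order, and the variance factor $\mathbb V V_{h+1}^{\mathrm{ref}}$ combined with the law of total variance yields the desired $\tilde O(\sqrt{SAH^3K})$ once we know $V_{h+1}^{\mathrm{ref}}\approx V_{h+1}^*$.

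The hard part is Term(iii): it scales like $\sqrt{\bar P_h^k(y|s,a)(1-\bar P_h^k(y|s,a))/n_h^k(s,a)}\cdot|V_{h+1}^k(y)-V_{h+1}^{\mathrm{ref}}(y)|$, and closing the loop requires the \propnameacro\ property, namely $\operatorname{Regret}_h(K)=\tilde O(\sqrt{S^2AH^4K})$ for every $h$. I would prove \propnameacro\ by Forward Induction: the base case $h=1$ is the coarse bound obtainable from a Shani-style analysis using just Term(i), Term(ii) with $V^{\mathrm{ref}}\equiv 0$, and Term(iv); for the inductive step, observe that the $\ell_2$ update with decreasing stepsize makes $\pi_h^{k+1}$ change slowly in $k$, so the $h$-step regret at level $h+1$ can be related, state by state, to the $h$-step regret at level $h$ after paying a controllable OMD stability cost. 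Once \propnameacro\ holds at step $h+1$, the reference update rule (triggered at $n_h^k(s)\ge C_0\sqrt{k}$ with $C_0=\sqrt{S^3AH^3}$) guarantees, via a Freedman concentration on $\tfrac1{n_h^k(s)}\sum_i V_{h+1}^i(s_{h+1}^i)\mathbf 1[s_{h+1}^i=s]$, that $|V_{h+1}^{\mathrm{ref}}(s)-V_{h+1}^*(s)|\lesssim (S^2AH^4/n_h^k(s))^{1/4}$, and thus $\sum_{k,h}\text{Term(iii)}=\tilde O(\sqrt{SAH^3K})+\tilde O(S^{5/2}A^{5/4}H^{3/2}K^{1/4})$, matching the two main terms of the theorem.

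Combining the optimism-based telescoping with these bounds, the martingale terms from Azuma, and the OMD bound on the improvement error produces
\begin{align*}
\operatorname{Regret}(K)\le \tilde O\bigl(\sqrt{SAH^3K}+\sqrt{AH^4K}+S^{5/2}A^{5/4}H^{3/2}K^{1/4}\bigr),
\end{align*}
as claimed. The principal obstacle I anticipate is the Forward Induction: the levels are coupled through the $Q_h^k$ used in the OMD update, so one has to be careful that the regret bound at step $h+1$ feeds into the stability estimate at step $h$ without circularity, which is exactly what the decreasing $\ell_2$-regularized stepsize buys.
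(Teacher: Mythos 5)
Your overall architecture (Bellman-error optimism, a regret decomposition into bonus sum plus martingales plus an $\ell_2$-OMD term of order $\sqrt{AH^4K}$, SAT established by forward induction, reference accuracy from the trigger $n_h^k(s)\ge C_0\sqrt k$, and a $\sqrt{SAH^3K}$ bound on the bonus sum) is the paper's. But two load-bearing steps are wrong as stated. First, you propose to ``use optimism $V_h^k\le V_h^*$ \dots to kill the second summand.'' That inequality cannot be established, and its failure is exactly the obstacle the whole paper is built around: $V_h^k(s)=\langle Q_h^k(s,\cdot),\pi_h^k(\cdot\mid s)\rangle$ with a non-greedy $\pi^k$, so even a pointwise bound on $Q_h^k$ only controls $V_h^k$ against $\langle Q_h^*(s,\cdot),\pi_h^k(\cdot\mid s)\rangle$, which lies \emph{above} $V_h^*(s)=\min_a Q_h^*(s,a)$. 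What the bonus certifies is the Bellman-error optimism $Q_h^k-c_h-P_hV_{h+1}^k\le 0$; feeding this into the expansion of $\sum_k\big(V_1^k-V_1^*\big)$ along $\pi^*$'s trajectory is precisely what produces the OMD term $\langle Q_h^k,\pi_h^k-\pi_h^*\rangle$. That term lives in the summand you claim to kill; it cannot come from expanding $V^{\pi^k}-V^k$, which compares two values under the \emph{same} policy and yields only bonuses and martingales.

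Second, your forward-induction step via ``the $\ell_2$ update makes $\pi_h^{k+1}$ change slowly, so level $h+1$ relates to level $h$ after an OMD stability cost'' is not the right mechanism and I do not see how it closes; the coupling you worry about is not where the difficulty is. The induction that works uses only the optimality of $\pi^*$ and Azuma: since
\begin{align*}
V_h^{\pi^k}(s_h^k)-V_h^*(s_h^k)
&=\langle Q_h^{\pi^k}(s_h^k,\cdot),\pi_h^k\rangle-\langle Q_h^{*}(s_h^k,\cdot),\pi_h^*\rangle
\ge\langle P_h(V_{h+1}^{\pi^k}-V_{h+1}^*),\pi_h^k\rangle,
\end{align*}
and the right side equals $\big(V_{h+1}^{\pi^k}-V_{h+1}^*\big)(s_{h+1}^k)$ plus a bounded martingale difference, summing over $k$ gives $\operatorname{Regret}_{h+1}(K')\le\operatorname{Regret}_1(K')+O(h\sqrt{H^2K'})$ with no reference to policy stability. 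Relatedly, you leave unaddressed the actual circularity: the bonus at episode $k$ depends on $V^{\mathrm{ref}}$, whose accuracy requires the coarse regret bound up to episode $k$, which requires optimism, which requires the bonus to be valid. This is broken by an induction over \emph{episodes} exploiting the $\min$ in the definition of $b_h^k$ --- before any reference is triggered the bonus collapses to the Hoeffding/$\ell_1$ bonus, for which optimism holds unconditionally --- not by the decreasing stepsize. (Minor: the reference accuracy is the deterministic bound $|V_h^{\mathrm{ref}}(s)-V_h^*(s)|\le\tilde O(\sqrt{S^2AH^4})/C_0=\tilde O(\sqrt{H/S})$ from SAT and the trigger condition; no Freedman-type concentration is needed there.)
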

We provide a proof sketch in Section \ref{sec:proof:sketch}. The full proof is in appendix \ref{appendix:full:proof}. Note that previous literature shows that the regret lower bound is $\Omega(\sqrt{SAH^3K})$ \citep{jin2018q}. Hence our result matches the information theoretic limit up to logarithmic factors when $S > H$.

\section{Technique Overview} \label{sec:technique}

In this section, we illustrate the main steps of achieving near optimal regret bound and introduce our key techniques.

\vskip 4pt
\noindent{\bf Achieving Optimism via Reference.} 
Similar to previous works \citep{cai2020provably,shani2020optimistic}, to achieve optimism, a crucial step is to design proper bonus term to upper bound $(\bar{P}_{h}^k-P_{h})(\cdot \mid s,a)V^{k}_{h+1}(\cdot)$. For example, \citet{jaksch2010near,shani2020optimistic} bound this term in a separate way:
%\begin{equation}\label{eqn:naivebonus}
\begin{align} \label{eqn:naivebonus}
    &\left|(\bar{P}_{h}^k - P_{h})(\cdot \mid s,a)V^{k}_{h+1}(\cdot)\right| \\
    \leq& \|(\bar{P}_{h}^k - P_{h})(\cdot \mid s,a)\|_{1} \cdot \|V^{k}_{h+1}(\cdot)\|_{\infty} \le \tilde{O}\Big(\sqrt{\frac{SH^2}{n_h^k(s, a)}}\Big), \notag
\end{align}
%\end{equation}
which will leads to an additional $\sqrt{S}$ factor due to the absence of making use of the optimism of $Q^k$. This issue is later addressed by value-based algorithm UCBVI \citep{azar2017minimax}. They divide the term into two separate terms: 
%\begin{equation}\label{eqn:desiredbonus}
\begin{align} \label{eqn:desiredbonus}
    &(\bar{P}_{h}-P_{h})(\cdot \mid s,a)V^{k}_{h+1}(\cdot)=(\bar{P}_{h}^k-P_{h})(\cdot\mid s,a)V^{*}_{h+1}(\cdot)\notag\\&+(\bar{P}_{h}^k-P_{h})(\cdot \mid s,a)(V^{k}_{h+1} - V_{h+1}^*)(\cdot).
\end{align}
%\end{equation}
They bound the first term using straightforward application of Chernoff-Hoeffding inequality, which removes the $\sqrt{S}$ factor since $V^{*}$ is deterministic. Thanks to the fact that $V_h^k \le V_h^*$ for any $(k, h) \in [K]\times [H]$, they can bound the second term successfully (see Appendix \ref{sec:explanation} for more details). However, this approach is not applicable for policy-based methods which improve the policy in a conservative way instead of choosing the greedy policy (when the stepsize $\eta \rightarrow \infty$, the OMD update becomes the greedy policy, and for any $\eta < \infty$ this update is ``conservatively'' greedy). This key property of policy-based methods makes it only possible to ensure $V^k_h\leq V^{\pi^k}_h$, the optimism $V^k_h\leq V^{*}_h$ doesn't hold in general. To tackle this challenge, we notice that as long as $V^k_h$ converges to $V_h^*$ sufficiently fast (at least on average), $(\bar{P}_{h}^k-P_{h})(\cdot \mid s,a)(V^{k}_{h+1} - V_{h+1}^*)(\cdot)$ can be bounded by a term related to the rate of convergence. This leads to the important notion called \textbf{\propname\,(\propnameacro)}. Precisely, we introduce the \emph{reference V estimator} $V^{\mathrm{ref}} = \{V^{\mathrm{ref}}_h\}_{h \in [H]}$ and decompose $(\bar{P}_{h}^k-P_{h})(\cdot \mid s,a)V^{k}_{h+1}(\cdot)$ as 
\begin{align} \label{eqn:desiredbonus2}
    &(\bar{P}_{h}^k-P_{h})(\cdot \mid s,a)V^{k}_{h+1}(\cdot)\notag\\ =&\underbrace{(\bar{P}_{h}^k-P_{h})(\cdot\mid s,a)V^{*}_{h+1}(\cdot)}_{(a)}\notag\\
    & + \underbrace{(\bar{P}_{h}^k-P_{h})(\cdot \mid s,a)(V^{k}_{h+1} - V_{h+1}^{\mathrm{ref}})(\cdot)}_{(b)} \notag\\
    & +  \underbrace{(\bar{P}_{h}^k-P_{h})(\cdot \mid s,a)(V^{\mathrm{ref}}_{h+1} - V_{h+1}^{*})(\cdot)}_{(c)}.
\end{align}
By standard concentration inequalities, we can bound the first term by the quantity depending on the variance of $V_{h+1}^*$. Once we have \propnameacro, an easy implication is that $|V_{h}^{\mathrm{ref}}(s)-V_{h}^{*}(s)|\leq O(\sqrt{H/S})$ for any $n^k_h(s)$ large enough (Lemma \ref{lem:average:convergence}). We can replace the variance of $V_{h+1}^*$ (unknown) by the variance of $V_{h+1}^{\mathrm{ref}}$ (known) and bound the second and third term in an analogous way of \eqref{eqn:naivebonus} and remove the factor $\sqrt{S}$ as desired. Specifically, we can upper bound Terms $(a), (b), (c)$ in \eqref{eqn:desiredbonus2} by Terms $(ii), (iii), (iv)$ in \eqref{eqn:longlong}, respectively. 

\vskip 4pt
\noindent{\bf \propname.} The high-level idea is that in order to control $(\bar{P}_{h}^k-P_{h})(\cdot \mid s,a)(V^{k}_{h+1} - V_{h+1}^*)(\cdot)$, term $V^{k}_{h+1} - V_{h+1}^*$ must satisfy some properties. For example, \citet{zhang2020almost} shows that in the senario of greedy policy, this term converges to zero as the number of visit goes to infinity. However, due to the nature of conservative policy update scheme, we cannot guarantee the convergence, unless we make the \emph{coverage} assumption as in \citet{agarwal2021theory}. Fortunately, it's possible to obtain an average convergence guarantee, which we called \propnameacro. Specifically, we say that an algorithm satisfy the property of \propnameacro\,if for all $K'$ and $h$,
\begin{align} \label{eq:tech:1}
    \sum_{k = 1}^{K'} |V_h^k(s^k_h) - V_h^*(s^k_h)| \le O(\sqrt{K'}).
\end{align}    
The meaning of \textbf{\propname}\;can be interpreted as follows: The above inequality implies that the estimation $V^k_h$ varies around the fixed value $V_h^*$, hence \textbf{``stable''}. And since we impose that the inequality holds for all $h$ and $K'$, hence \textbf{``at any time''}. For this reason, we also name $\sum_{y\in\mathcal{S}}\sqrt{\frac{\bar{P}^k_h(y|s,a)}{n^k_h(s,a)}}|V^k_{h+1}(y)-V^{\mathrm{ref}}_{h+1}(y)|$ in the bonus term (\ref{eqn:longlong}) as the \textbf{instability} term, since it's an upper bound of $(\bar{P}_{h}^k-P_{h})(\cdot \mid s,a)(V^{k}_{h+1} - V_{h+1}^{\mathrm{ref}})(\cdot)$, which measures the instability of $V^k$ with respect to the fixed reference $V^{\mathrm{ref}}$. 

Another nice implication is that we can obtain a precise estimate of $V^*$ if \propnameacro\;is satisfied. Specifically, combining the update rules of reference $V$ estimator and the fact that $n^k_h(s)\geq C_0\sqrt{k}$, we have:
\begin{align*}
&|V^{\mathrm{ref}}_h(s)-V^*_h(s)| \\
\leq &\frac{1}{n^k_h(s)}\sum_{i=1}^k |V_h^i(s^i_h)-V^*_h(s^i_h)|\textbf{1}[s^i_h=s] \leq O(1/C_0).
\end{align*}
\vskip 4pt
\noindent{\bf Coarse Regret Bound.} We are able to show that \eqref{eq:tech:1} can be implied by  
\begin{align}
    \text{Regret}_1(K') \le O(\sqrt{K'}).
\end{align}
for any $K' \in [K]$. See Lemma \ref{lem:average:convergence} for details. We point out that establishing Equation (\ref{eq:tech:1}) from coarse regret bound is highly non-trivial. There exists two challenges: 
\begin{enumerate}
\item Previous policy-based methods \citep{cai2020provably,shani2020optimistic} choose a constant mirror descent stepsize depending on $K$, which impedes us from obtaining regret bound sublinear on $K'$ for all $K' \in [K]$.

\item For the step $2 \le h \le H$, the state $s_h^k$ is not fixed, which makes the OMD term difficult to bound. 
\end{enumerate}
To this end, we use the following two novel techniques: 
\begin{enumerate}
\item Replace the unbounded KL-divergence regularization term by the bounded $\ell_2$ regularization term, which allows us to choose varying stepsize which depends on the current time step instead of $K$.

\item \emph{Forward Induction}, that is, deriving the regret upper bound for the \emph{1}-st step and obtaining the regret upper bounds for $h$-th ($2 \le h \le H$) step by induction. See Section \ref{sec:proof:sketch} for more details.
\end{enumerate}
Putting these together, we can remove the additional factor $\sqrt{S}$ as desired. If we also adopt the Bernstein variance reduction technique \citep{azar2017minimax}, we can further improve a $\sqrt{H}$ factor in the main term.

\section{Proof Sketch of Theorem \ref{thm:fine_reg}} \label{sec:proof:sketch}

In this section, our goal is to provide a sketch of the proof of Theorem \ref{thm:fine_reg}. One key ingredient in regret analysis in RL is the optimism condition, namely
\begin{align} \label{eqn:sketch_opt}
    -2 b^k_h(s,a)\!\leq \! Q^k_h(s,a)\!-\!c_h(s,a)\!-\!P_h(\cdot \mid s,a)V^k_{h+1}(\cdot)\!\leq\! 0.
\end{align}
This condition guarantees that $Q^k_h$ is an optimistic estimation of $Q_h^{\pi^k}$ (or $Q_h^{*}$), and that the regret can be bounded by the sum of bonus terms \citep{jaksch2010near,azar2017minimax,cai2020provably,shani2020optimistic}. 

In our case, it does not appear straightforward to show such condition hold or not since the bonus term $b_h^k$ depends on the value of $V^{\mathrm{ref}}$, which we do not know in advance. Dealing with this problem needs additional effort. But let's put it aside for a while, and assumes that \eqref{eqn:sketch_opt} holds temporarily. In Section \ref{sec:warmup}, we first demonstrate the intuition of the proof under this assumption. Then in Section \ref{sec:howto}, we show how to remove this assumption, with additional techniques.

\subsection{A warm-up: the Optimism Assumption}\label{sec:warmup}
In this section, we first demonstrate the intuition of the proof, under the assumption that optimism condition \eqref{eqn:sketch_opt} holds. 
We first recall the useful notion of $h$-th step regret in episode $K'$ as follows:
$$\text{Regret}_h(K') =\sum^{K'}_{k=1} \big(V_h^{\pi^k}(s^k_h)-V_h^*(s^k_h)\big). $$
The notion aligns with the normal regret if $h = 1$, in other words, we have $\text{Regret}_1(K') = \text{Regret}(K')$.
We first state the following lemma, which bounds the estimation error using bonus function:
\begin{lemma}[Bounding estimation error] \label{lem:regret:bound}
Suppose that optimism holds, namely for $\forall k\leq K, h\leq H, s\in\mathcal{S}, a\in\mathcal{A}$,
\begin{align*}
    -2 b^k_h(s,a)\leq Q^k_h(s,a)-c_h(s,a)-P_h(\cdot|s,a)V^k_{h+1}\leq 0.
\end{align*}
Then for $\forall K'\leq K, h' \leq H$, it holds with high probability that
\begin{align*}
    &\sum_{k=1}^{K'}\big(V^{\pi^k}_h(s^k_h)-V^k_{h}(s^k_h)\big)\\
    \leq& \tilde{O}\big(\sum_{k=1}^{K'}\sum_{h=h'}^H b^k_h(s^k_h,a^k_h)\big)+\tilde{O}(\sqrt{H^3K'}).
\end{align*}\label{lem:estimation_error}
\end{lemma}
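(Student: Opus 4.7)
The plan is to derive a per-step Bellman-style recursion for the pointwise gap $\zeta^k_h := V^{\pi^k}_h(s^k_h) - V^k_h(s^k_h)$, telescope it from $h = h'$ through $h = H$, then sum over episodes $k \le K'$; the noise accumulated from the random actions and transitions is absorbed into the $\tilde O(\sqrt{H^3 K'})$ term via Azuma--Hoeffding. Only the lower half of the optimism hypothesis, namely $Q^k_h(s,a) \ge c_h(s,a) + P_h(\cdot\mid s,a) V^k_{h+1}(\cdot) - 2 b^k_h(s,a)$, will be used.

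The first step is to write $\zeta^k_h = \langle Q^{\pi^k}_h(s^k_h,\cdot) - Q^k_h(s^k_h,\cdot),\, \pi^k_h(\cdot\mid s^k_h)\rangle$ using the identity $V^\pi_h = \langle Q^\pi_h, \pi_h\rangle$ from the Bellman equation and the analogous definition of $V^k_h$ in Algorithm \ref{alg:pore}. Pointwise, the Bellman equation for $\pi^k$ combined with the assumed lower bound on $Q^k_h$ gives
$$Q^{\pi^k}_h(s,a) - Q^k_h(s,a) \le P_h(\cdot\mid s,a)\bigl(V^{\pi^k}_{h+1} - V^k_{h+1}\bigr)(\cdot) + 2 b^k_h(s,a),$$
so averaging against $\pi^k_h(\cdot\mid s^k_h)$ yields
$$\zeta^k_h \le \E_{a \sim \pi^k_h(\cdot\mid s^k_h)}\!\bigl[P_h(\cdot\mid s^k_h,a)(V^{\pi^k}_{h+1}-V^k_{h+1})(\cdot)\bigr] + 2\,\E_{a\sim \pi^k_h(\cdot\mid s^k_h)}\!\bigl[b^k_h(s^k_h, a)\bigr].$$

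Next I would convert the two conditional expectations into realized samples plus martingale differences. Letting $\mathcal F^k_h$ denote the history through the observation of $s^k_h$, I would define
$$M^{k,1}_h := 2 b^k_h(s^k_h, a^k_h) - 2\,\E\bigl[b^k_h(s^k_h, a^k_h)\,\big|\, \mathcal F^k_h\bigr],$$
$$M^{k,2}_h := \bigl(V^{\pi^k}_{h+1} - V^k_{h+1}\bigr)(s^k_{h+1}) - P_h(\cdot\mid s^k_h, a^k_h)\bigl(V^{\pi^k}_{h+1} - V^k_{h+1}\bigr)(\cdot),$$
both martingale differences bounded by $O(H)$ in absolute value. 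The inequality above then rearranges to the one-step recursion $\zeta^k_h \le \zeta^k_{h+1} + 2\, b^k_h(s^k_h, a^k_h) - M^{k,1}_h - M^{k,2}_h$. Iterating from $h = h'$ up to $H$ (with $\zeta^k_{H+1} \equiv 0$) and summing over $k \le K'$,
$$\sum_{k=1}^{K'} \zeta^k_{h'} \le 2 \sum_{k=1}^{K'}\sum_{h=h'}^H b^k_h(s^k_h, a^k_h) \;-\; \sum_{k=1}^{K'}\sum_{h=h'}^H \bigl(M^{k,1}_h + M^{k,2}_h\bigr),$$
and Azuma--Hoeffding applied to the $O(K' H)$ increments of size $O(H)$ bounds the second double sum by $\tilde O(\sqrt{H^3 K'})$ with high probability.

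The main obstacle is bookkeeping rather than anything mathematically deep: the statement requires the inequality to hold simultaneously for \emph{all} pairs $(h', K') \in [H]\times[K]$, so I would apply a union bound over the $O(KH)$ such choices and fold the resulting $\sqrt{\log(KH/\delta)}$ factors into the $\tilde O(\cdot)$ notation. The genuinely hard work of the paper---verifying the optimism hypothesis itself for the reference-based bonus $b^k_h$, and then propagating the \propnameacro{} guarantee via forward induction---lies outside this lemma; here optimism is assumed as input, and what remains is precisely the standard ``regret $\le$ sum of bonuses $+$ martingale noise'' reduction.
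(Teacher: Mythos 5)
Your proof is correct and takes essentially the same route as the paper, which simply invokes its pre-packaged regret decomposition (Lemma \ref{lemma:regret:decomposition}) --- itself established by exactly your telescoping Bellman recursion --- and then applies the optimism hypothesis to the model-error term and Azuma--Hoeffding to the martingale term. The only nit is bookkeeping: your $M^{k,2}_h$ is centered conditionally on $a^k_h$, so closing the recursion from $\E_{a\sim\pi^k_h}\bigl[P_h(\cdot\mid s^k_h,a)(V^{\pi^k}_{h+1}-V^k_{h+1})(\cdot)\bigr]$ down to $\zeta^k_{h+1}$ also requires the action-randomness martingale difference of the value gap (the paper's $\zeta_{k,h}^1$ absorbs this), but that term is likewise bounded by $O(H)$ and changes nothing in the conclusion.
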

\begin{proof}
    See \S \ref{appendix:pf:lem:regret:bound} for a detailed proof.
\end{proof}
The above lemma is useful since it tells us that the sum of the estimation error $V^{\pi^k}_h(s^k_h)-V^k_h(s^k_h)$ can be roughly viewed as the sum of bonus function. By the choice of our bonus term, we have $b^k_h(s,a)\leq \sqrt{\frac{SH^2}{n^k(s,a)}}$. Following standard techniques from \cite{shani2020optimistic}, we get a easy corollary $\sum_{k=1}^{K'}\big(V^{\pi^k}_h(s^k_h)-V^k_{h}(s^k_h)\big)\leq \tilde{O}(\sqrt{S^2AH^4K'})$.

Next, we show that our algorithm satisfies a coarse regret bound:
\begin{lemma}[Coarse regret bound]
With the same assumptions and notations in Lemma \ref{lem:estimation_error}, we have for $\forall K'\leq K$:
\begin{equation}\label{lem:coarse_regret}
\operatorname{Regret}(K') \!= \!\sum_{k=1}^{K'} \big(V^{\pi^k}_1(s_1^k)-V^*_1(s_1^k)\big)\!\leq\! \tilde{O}(\sqrt{S^2AH^4K'}).
\end{equation}
\end{lemma}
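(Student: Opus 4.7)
The plan is to decompose the per-episode regret as
$V^{\pi^k}_1(s_1^k) - V^*_1(s_1^k) = \bigl[V^{\pi^k}_1(s_1^k) - V^k_1(s_1^k)\bigr] + \bigl[V^k_1(s_1^k) - V^*_1(s_1^k)\bigr],$
sum over $k\leq K'$, and bound the two pieces separately: the first by Lemma~\ref{lem:estimation_error}, the second by combining optimism with a standard OMD regret analysis of the update in Line~16.

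For the first (estimation error) piece, applying Lemma~\ref{lem:estimation_error} with $h'=1$ gives $\tilde O\bigl(\sum_{k=1}^{K'}\sum_{h=1}^{H} b^k_h(s^k_h,a^k_h)\bigr) + \tilde O(\sqrt{H^3 K'})$. The outer $\min$ in the bonus definition \eqref{eq:def:bonus} already forces $b^k_h(s,a) \lesssim H\sqrt{S/n^k_h(s,a)}$ up to logs, so at this stage neither the reference estimator nor any \propnameacro\ property is used. A textbook pigeonhole estimate $\sum_{k=1}^{K'} 1/\sqrt{n^k_h(s^k_h,a^k_h)} \lesssim \sqrt{SAK'}$ then controls the bonus sum by $\tilde O(\sqrt{S^2 A H^4 K'})$, matching the coarse bound in \citet{shani2020optimistic}.

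For the second (OMD) piece, the standard performance-difference identity combined with the right half of optimism \eqref{eqn:sketch_opt} (i.e.\ $Q^k_h-c_h-P_hV^k_{h+1}\le 0$) yields
$\sum_{k=1}^{K'}\bigl[V^k_1(s_1^k)-V^*_1(s_1^k)\bigr] \leq \sum_{k=1}^{K'}\sum_{h=1}^{H} \mathbb{E}^{\pi^*}\bigl[\langle Q^k_h(s_h,\cdot),\pi^k_h(\cdot\mid s_h)-\pi^*_h(\cdot\mid s_h)\rangle \bigm| s_1=s_1^k\bigr].$
Interchanging the sums and bringing the $d^{\pi^*}_h$ averaging outside, it suffices to bound, uniformly in $(s,h)$, the scalar OMD regret $\sum_{k=1}^{K'}\langle Q^k_h(s,\cdot),\pi^k_h(\cdot\mid s)-\pi^*_h(\cdot\mid s)\rangle$. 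With losses in $[0,H]^A$, the $\ell_2$-diameter of $\Delta_{\mathcal A}$ bounded by $\sqrt{2}$, and the step size $\eta_k=\Theta(1/\sqrt{H^2Ak})$, textbook projected-gradient analysis gives $O(\sqrt{H^2AK'})$ per $(s,h)$, hence $\tilde O(\sqrt{AH^4K'})$ after summing over $h$. Combining the two pieces yields the claimed $\tilde O(\sqrt{S^2AH^4K'})$.

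The main subtlety is not any individual estimate but the requirement that the bound hold for \emph{every} $K'\leq K$, not only at $K'=K$. A constant step size tuned to $K$, as in \citet{shani2020optimistic}, would give an OMD regret sublinear in $K$ but not necessarily in intermediate $K'$; that would suffice for the final regret statement but would be useless as input to the Forward Induction step later used to upgrade the coarse regret bound \eqref{lem:coarse_regret} into the \propnameacro\ property. The decreasing $\eta_k\propto 1/\sqrt{k}$ together with $\ell_2$ (rather than KL) regularization is chosen precisely so that the $O(\sqrt{H^2AK'})$ OMD bound above holds for every $K'\leq K$ simultaneously, which is what makes the whole chain of the paper go through.
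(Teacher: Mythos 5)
Your proposal is correct and follows essentially the same route as the paper: the same decomposition into the estimation-error piece (handled by Lemma~\ref{lem:estimation_error} plus the crude $H\sqrt{S/n}$ cap on the bonus and pigeonhole) and the value-difference piece (handled by the regret decomposition into an OMD term, bounded via the $\ell_2$/decreasing-stepsize analysis of Lemma~\ref{lem:omd_in_po}, and an optimism term that is nonpositive). Your closing remark about needing the bound uniformly over all $K'\leq K$, which is what the $\ell_2$ regularizer and $\eta_k\propto 1/\sqrt{k}$ buy over the KL/constant-stepsize choice of \citet{shani2020optimistic}, is exactly the point the paper makes as well.
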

\begin{proof}
We first decompose the regret term in the following way:
\begin{align}
    &\sum_{k=1}^{K'} \big(V_{1}^{\pi^{k}}\left(s_{1}\right)-V_{1}^{*}\left(s_{1}\right)\big)\notag\\
    =&\underbrace{\sum_{k=1}^{K'} \big(V_{1}^{\pi^{k}}\left(s_{1}\right)-V_{1}^{k}\left(s_{1}\right)\big)}_{(i)}+\underbrace{\sum_{k=1}^{K'}\big(V_{1}^{k}\left(s_{1}\right)-V_{1}^{*}\left(s_{1}\right)\big)}_{(ii)}\label{eqn:reg_decomp}
\end{align}
$\text{Term}\ (i)$ can be bounded by $\tilde{O}(\sqrt{S^2AH^4K'})$ using the corollary of Lemma \ref{lem:estimation_error}. We take a closer look at $\text{Term}\ (ii)$, by standard regret decomposition lemma (Lemma \ref{lemma:regret:decomposition}):
\begin{align*}
    &\text{Term}\ (ii)=\\
    &\underbrace{\sum_{k,h} \mathbb{E}\left[\left\langle Q_{h}^{k}\left(s_{h}, \cdot\right), \pi_{h}^{k}\left(\cdot \mid s_{h}\right)-\pi_{h}^*\left(\cdot \mid s_{h}\right)\right\rangle \mid s_{1}, \pi^*, P\right]}_{(iii)}\\
    &+\!\underbrace{\sum_{k,h}\! \mathbb{E}\!\left[\!Q_{h}^{k}\!\left(\!s_{h}\!,\! a_{h}\!\right)\!-\!c_{h}\!\left(\!s_{h}\!,\! a_{h}\!\right)\!-\!P_{h}\!\left(\!\cdot \!\mid\! s_{h}\!,\! a_{h}\!\right)\! V_{h+1}^{k} \!\mid\! s_{1}\!, \!\pi^*\!,\! P\right]}_{(iv)}.
\end{align*}
$\text{Term}\ (iii)$ is also called the OMD term. Using Lemma \ref{lem:omd_in_po}, we have
$$\text{Term}\ (iii)\leq \tilde{O}(\sqrt{AH^4K'}).$$
We note that the reason why we change the Bregman term and the learning rate schedule is to ensure a $\text{Term}\ (iii)\leq \tilde{O}(\sqrt{K'})$ type bound. In \cite{shani2020optimistic}, they use the KL-divergence as the Bregman term and a fixed learning rate that is a function of $K$, hence their bound is $\text{Term}\ (iii)\leq \tilde{O}(\sqrt{\log(A)H^4K})$. Although our choice of the $\ell_2$ penalty term and decreasing learning rate leads to a larger dependence on $A$, but it has a better dependence on $K'$, which is crucial to ensure that we can learn a good reference function.

Combining with the fact that optimism holds, we have $\text{Term}\ (iv)\leq 0$, therefore the lemma is proved.
\end{proof}
Now we have a $\text{Regret}(K')=\tilde{O}(\sqrt{K'})$ bound for all $K'\leq K$. We show that this immediately implies the following key theorem, which we called the Average Convergence Lemma. The following lemma guarantees that the reference value is a good approximation of $V^*$:
\begin{lemma}[Average convergence of $V^k$] \label{lem:average:convergence}
With the same assumptions and notations in Lemma \ref{lem:estimation_error}, we have for all $K'\leq K, h\leq H$,
\begin{align}\operatorname{Regret}_h(K')\leq \tilde{O} (\sqrt{S^2AH^4K'}). \label{eqn:DRegret}
\end{align}
As a consequence, \propnameacro\;holds, for $\forall K'\leq K,h\leq H$,
\begin{align*}
    \sum_{k=1}^{K'} |V^k_h(s^k_h)-V^*_h(s^k_h)|\leq \tilde{O}(\sqrt{S^2AH^4K'}).
\end{align*}\label{lem:avg}
\end{lemma}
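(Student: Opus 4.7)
The plan is to establish \eqref{eqn:DRegret} by a \emph{forward induction} on $h$, starting from $\operatorname{Regret}_1(K')\leq\tilde O(\sqrt{S^2AH^4K'})$ of Lemma \ref{lem:coarse_regret}, and to deduce the \propnameacro\ consequence by combining the resulting bound with the estimation-error control of Lemma \ref{lem:regret:bound} together with the one-sided optimism $V_h^k\leq V_h^{\pi^k}$.

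For the inductive step, assume the $\tilde O(\sqrt{S^2AH^4K'})$ bound for $\operatorname{Regret}_h$ for every $K'\leq K$, and derive the analogous bound for $\operatorname{Regret}_{h+1}$. Decompose
\begin{align*}
    \operatorname{Regret}_{h+1}(K')&=\sum_{k=1}^{K'}\big(V_{h+1}^{\pi^k}-V_{h+1}^k\big)(s_{h+1}^k)\\
    &\quad+\sum_{k=1}^{K'}\big(V_{h+1}^k-V_{h+1}^*\big)(s_{h+1}^k).
\end{align*}
Applying Lemma \ref{lem:regret:bound} with $h'=h+1$, the first sum is at most $\tilde O\big(\sum_k\sum_{h''\geq h+1} b_{h''}^k(s_{h''}^k,a_{h''}^k)\big)+\tilde O(\sqrt{H^3K'})$; the crude bonus bound $b_{h''}^k(s,a)\lesssim\sqrt{SH^2/n_{h''}^k(s,a)}$ from \eqref{eq:def:bonus} together with the usual pigeonhole summation over visits yields $\tilde O(\sqrt{S^2AH^4K'})$. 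For the second sum, apply the standard performance-difference identity starting from step $h+1$; on the optimism event \eqref{eqn:sketch_opt} the residual cost-gap piece is non-positive, leaving the OMD-type expression
\begin{align*}
    \sum_{h''=h+1}^H\sum_{k=1}^{K'}\mathbb{E}\!\left[\langle Q_{h''}^k(s_{h''},\cdot),\pi_{h''}^k-\pi_{h''}^*\rangle\,\big|\,s_{h+1}^k,\pi^*,P\right].
\end{align*}

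The technical heart is bounding this OMD-style sum: the expectation is taken under $\pi^*$ rather than the executed $\pi^k$, and the conditioning state $s_{h+1}^k$ changes with $k$, both of which rule out the fixed-start, on-trajectory OMD analysis used at $h=1$. The remedy combines two ingredients. First, unfold each inner expectation as $\sum_s \Pr[s_{h''}=s\mid s_{h+1}^k,\pi^*,P]\langle Q_{h''}^k(s,\cdot),\pi_{h''}^k(\cdot|s)-\pi_{h''}^*(\cdot|s)\rangle$; for every fixed $(s,h'')$, the $\ell_2$-regularized update on Line~16 with the decreasing step size $\eta_k=O(\sqrt{1/(H^2Ak)})$ delivers an \emph{any-time} per-state OMD bound $\sum_{k=1}^{K'}\langle Q_{h''}^k(s,\cdot),\pi_{h''}^k(\cdot|s)-\pi_{h''}^*(\cdot|s)\rangle\leq \tilde O(\sqrt{AH^2K'})$ uniform in the prefix length $K'\leq K$ (Lemma \ref{lem:omd_in_po})---a guarantee the $K$-tuned KL scheme of \citet{shani2020optimistic} cannot provide, motivating our switch to $\ell_2$. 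Second, the inductive hypothesis at step $h$ controls the discrepancy between the $\pi^k$- and $\pi^*$-driven state distributions that would otherwise appear in passing from the $\pi^*$-weighted sum to the on-trajectory one, allowing the per-state OMD bounds to be recombined without loss. Summing over $s\in\mathcal S$ and $h''\in\{h+1,\ldots,H\}$ yields $\tilde O(SH\sqrt{AH^2K'})=\tilde O(\sqrt{S^2AH^4K'})$, closing the induction.

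The \propnameacro\ consequence then follows from the pointwise inequality
$$|V_h^k-V_h^*|(s_h^k)\leq (V_h^{\pi^k}-V_h^*)(s_h^k)+(V_h^{\pi^k}-V_h^k)(s_h^k),$$
which is valid because $V_h^k\leq V_h^{\pi^k}$ (optimism) and $V_h^*\leq V_h^{\pi^k}$ (definition of $V^*$). Summing over $k\leq K'$, the first right-hand sum equals $\operatorname{Regret}_h(K')$, just bounded; the second is the step-$h$ estimation error, bounded by the same order via Lemma \ref{lem:regret:bound}. The main obstacle is the forward-induction OMD sum above: the combination of varying initial state and change of measure to $\pi^*$ is exactly what makes this lemma ``highly non-trivial,'' and its resolution relies jointly on the new $\ell_2$-with-decreasing-rate OMD bound and on the inductive hypothesis at step $h$.
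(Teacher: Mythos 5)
Your overall architecture (forward induction on $h$ starting from Lemma \ref{lem:coarse_regret}, then deducing \propnameacro\ from $|V_h^k-V_h^*|\leq(V_h^{\pi^k}-V_h^k)+(V_h^{\pi^k}-V_h^*)$ via optimism and Lemma \ref{lem:regret:bound}) matches the paper, and the second half of your argument is fine. But the inductive step is where your proposal diverges from the paper and where it has a genuine gap. You route $\operatorname{Regret}_{h+1}$ through the estimated values and the regret-decomposition identity, which forces you to control the OMD-type sum $\sum_{k}\mathbb{E}[\langle Q_{h''}^k(s_{h''},\cdot),\pi_{h''}^k-\pi_{h''}^*\rangle\mid s_{h+1}^k,\pi^*,P]$ with $k$-varying conditioning states. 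Unfolding this as $\sum_k\sum_s p_k(s)\,g_k(s)$ with $p_k(s)=\Pr[s_{h''}=s\mid s_{h+1}^k,\pi^*,P]$ and $g_k(s)=\langle Q_{h''}^k(s,\cdot),\pi_{h''}^k(\cdot|s)-\pi_{h''}^*(\cdot|s)\rangle$, the anytime per-state OMD bound only controls the \emph{prefix sums} $\sum_{k\le m}g_k(s)$; since $g_k(s)$ can be negative, prefix-sum control does not imply control of a weighted sum with $k$-dependent weights (take $g=(-M,+M)$, weights $(0,1)$). At $h'=1$ this problem does not arise because $p_k(s)$ is independent of $k$ and one can interchange the sums; for $h'\ge 2$ it is exactly the obstruction the paper flags, and your proposed fix --- that ``the inductive hypothesis at step $h$ controls the discrepancy between the $\pi^k$- and $\pi^*$-driven state distributions'' --- is not substantiated: the hypothesis is a bound on cumulative value gaps along the realized trajectory, and there is no evident way to convert it into a bound on $\sum_k\sum_s|p_k(s)-q_k(s)|$ or whatever coupling your recombination step would need. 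As written, the technical heart of your induction does not close.

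The paper's inductive step avoids the OMD term entirely for $h\ge 2$ and is much more elementary: it works only with the \emph{true} value functions. Since $V_h^*(s)=\langle Q_h^*(s,\cdot),\pi_h^*(\cdot|s)\rangle\le\langle Q_h^*(s,\cdot),\pi_h^k(\cdot|s)\rangle$ by optimality of $\pi^*$, one gets
\begin{align*}
V_h^{\pi^k}(s_h^k)-V_h^*(s_h^k)\;\ge\;\bigl\langle P_h(\cdot\mid s_h^k,\cdot)\bigl(V_{h+1}^{\pi^k}-V_{h+1}^*\bigr),\pi_h^k(\cdot\mid s_h^k)\bigr\rangle
=\bigl(V_{h+1}^{\pi^k}-V_{h+1}^*\bigr)(s_{h+1}^k)+\xi_{k,h},
\end{align*}
where $\xi_{k,h}$ is a bounded martingale difference. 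Summing over $k$ and applying Azuma--Hoeffding gives $\operatorname{Regret}_{h+1}(K')\le\operatorname{Regret}_h(K')+O(\sqrt{H^2K'})$, and iterating from $h=1$ yields \eqref{eqn:DRegret} with no change of measure and no per-state OMD bookkeeping. If you want to salvage your route you would need a genuinely new argument for the weighted OMD sum; otherwise, replace the inductive step with this direct comparison of $\operatorname{Regret}_{h+1}$ to $\operatorname{Regret}_h$.
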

\begin{proof}
For the proof of Equation (\ref{eqn:DRegret}), we use a novel technique called \emph{Forward Induction}, starting from the case when $h=1$,  $\text{Regret}_1(K') = \text{Regret}(K')\leq \tilde{O}(\sqrt{S^2AH^4K'})$, which is true according to Lemma \ref{lem:coarse_regret}. Using induction we can proof Equation (\ref{eqn:DRegret}) for all $h$, which we will leave the details to Appendix \ref{appendix:pf:average:convergence}. For the second statement, we note that 
\begin{align*}
&|V^k_h(s^k_h)-V^*_h(s^k_h)|\\
\leq& |V^k_h(s^k_h)-V^{\pi^k}(s^k_h)|+|V^{\pi^k}_h(s^k_h)-V^*_h(s^k_h)|\\
=&\big(V^{\pi^k}_h(s^k_h)-V^k_h(s^k_h)\big)+\big(V^{\pi^k}_h(s^k_h)-V^*_h(s^k_h)\big).
\end{align*}
These two terms can be handled by Lemma \ref{lem:estimation_error} and Equation (\ref{eqn:DRegret}) separately, hence finished the proof of Lemma \ref{lem:avg}.
\end{proof}

As mentioned before, Theorem \ref{lem:avg} is significant in the sense that it implies the reference value $V^{\mathrm{ref}}_h(s) = \frac{1}{n^k_h(s)}\sum_{i=1}^k V^i_h(s^i_h)\textbf{1}[s^i_h=s]$ is close to the optimal value if $n^k_h(s)$ is large enough, in other words, if $n^k_h(s)\geq C_0\sqrt{k}=\sqrt{S^3AH^3k}$:
\begin{align*}
    &|V^{\mathrm{ref}}_h(s)-V^*_h(s)|\\
    \leq& \frac{1}{n^k_h(s)}\sum_{i=1}^k |V^i_h(s^i_h)-V_h^*(s_h^i)|\textbf{1}[s^i_h=s]\\
    \leq &\frac{1}{n^k_h(s)}\sum_{i=1}^k |V^i_h(s^i_h)-V_h^*(s_h^i)|\\
    \leq & \tilde{O}(\sqrt{\frac{H}{S}})
\end{align*}
This explains the reason why we choose $C_0=\sqrt{S^3AH^3}$ in the algorithm, the $1/\sqrt{S}$ bound serves a key role in reducing the $\sqrt{S}$ factor in the regret. 

Finally, we are ready to prove Theorem \ref{thm:fine_reg}. Combining Lemma \ref{lem:estimation_error} and Equation (\ref{eqn:reg_decomp}),
\begin{align}
    &\sum_{k=1}^K \big(V_1^{\pi^k}(s_1^k)-V_1^*(s_1^k)\big)\notag\\
    \leq & \tilde{O}\big(\sum_{k=1}^K\sum_{h=1}^H b^k_h(s^k_h,a^k_h)\big)+\tilde{O}(\sqrt{AH^4K}).
    \label{eqn:v3}
\end{align}
The only thing left is to prove that 
\begin{align*}
    \sum_{k=1}^K\sum_{h=1}^H b^k_h(s^k_h,a^k_h) \leq \tilde{O}(\sqrt{SAH^3K}+S^{\frac{5}{2}}A^{\frac{5}{4}}H^{\frac{11}{4}}K^{\frac{1}{4}}).
\end{align*} 
By Lemma~\ref{lem:sumbonus}, this is true. Therefore we have finished the proof.

\subsection{How to prove optimism}\label{sec:howto}

In this section, we show how to remove the optimism assumption in Lemma \ref{lem:estimation_error}, Lemma \ref{lem:coarse_regret} and Lemma \ref{lem:average:convergence}. In fact, we will first prove the following lemma , which shows that both optimism and coarse regret bound holds for all episodes.
\begin{lemma}[Coarse analysis of regret]\label{lem:co_wo_opt_2}
For any $K'\in [K]$, we have the following regret bound:
\begin{equation}
\operatorname{Regret}(K') \!= \!\sum_{k=1}^{K'} \big(V^{\pi^k}_1(s_1^k)-V^*_1(s_1^k)\big)\!\leq\! \tilde{O}(\sqrt{S^2AH^4K'}).
\end{equation}
and the optimism holds, namely 
\begin{align}
    -2b_h^k(s, a) \!\leq \!Q_{h}^{k}(s, a)\!-\!c_{h}(s, a)\!-\!P_{h}(\cdot \mid s, a) V_{h+1}^{k}(\cdot)\! \leq \!0.
\end{align}
\end{lemma}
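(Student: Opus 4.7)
The plan is a simultaneous induction on $K'$ that establishes, at each step, both the optimism statement at episode $K'$ and the coarse regret bound through episode $K'$. The two claims bootstrap one another through the reference estimator: the regret bound up to $K'-1$ forces $V^{\mathrm{ref}}$ to approximate $V^{*}$ well (via Lemma~\ref{lem:average:convergence}), which is exactly what is needed to verify that the reference-based bonus $u_h^{K'}$ produces optimism at episode $K'$; conversely, optimism at $K'$ extends the proof of Lemma~\ref{lem:coarse_regret} one step further.

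For the inductive step I assume both claims hold for all $k \le K'-1$ and first apply Lemma~\ref{lem:average:convergence} to the first $K'-1$ episodes to obtain the pointwise bound $|V^{\mathrm{ref}}_h(s)-V^{*}_h(s)| \le \tilde{O}(\sqrt{H/S})$ at every $(h,s)$ for which $V^{\mathrm{ref}}_h(s)$ has been updated; the calibration $C_0=\sqrt{S^3AH^3}$ is chosen precisely so that the $\tilde{O}(\sqrt{S^2AH^4k})$ from the average-convergence bound, divided by the triggering count $C_0\sqrt{k}$, collapses to the desired $\sqrt{H/S}$. I then verify optimism at episode $K'$ by upper bounding $|(\bar{P}_h^{K'}-P_h)(\cdot\mid s,a)V_{h+1}^{K'}(\cdot)|$ via the decomposition \eqref{eqn:desiredbonus2}: Term $(a)$ is handled by Bernstein with the variance of $V^{*}_{h+1}$ translated into the observable variance of $V^{\mathrm{ref}}_{h+1}$, the translation error being absorbed into Terms~$(ii)$ and $(iv)$ of the bonus; Term $(b)$ is handled by empirical Bernstein applied to $V^{K'}_{h+1}-V^{\mathrm{ref}}_{h+1}$, matching Term~$(iii)$; and Term $(c)$ is handled by Hoeffding applied to $V^{\mathrm{ref}}_{h+1}-V^{*}_{h+1}$, again controlled by $\sqrt{H/S}$. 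Combined with Term~$(i)$ for the reward deviation, this shows that $u_h^{K'}(s,a)$ dominates the one-step deviation. The alternative branch of the $\min$ in \eqref{eq:def:bonus} dominates the deviation by the standard $\ell_1$-concentration argument of \citet{shani2020optimistic}, so $b_h^{K'}$ dominates regardless of which branch is active; unrolling the $\max\{\cdot,0\}$ clipping in the definition of $Q_h^{K'}$ then yields the two-sided optimism inequality. With optimism now established through episode $K'$, the proof of Lemma~\ref{lem:coarse_regret} carries over verbatim to the first $K'$ episodes and gives $\operatorname{Regret}(K') \le \tilde{O}(\sqrt{S^2AH^4K'})$.

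The base case consists of small $K'$ for which no state has accumulated $C_0\sqrt{k}$ visits; there $V^{\mathrm{ref}} \equiv 0$, optimism comes entirely from the naive branch of the $\min$ by the standard Hoeffding argument, and the regret bound is trivially $O(H)$ per episode. I expect the main obstacle to be the variance translation in Step~$(a)$: substituting $\mathbb{V}_Y V^{*}_{h+1}(Y)$ by $\mathbb{V}_Y V^{\mathrm{ref}}_{h+1}(Y)$ introduces a cross term that must be absorbed into Term~$(iv)$, and careful bookkeeping is required to show that the $\tilde{O}(\sqrt{H/S})$ pointwise closeness delivers precisely the $\sqrt{H/(S\cdot n_h^k(s,a))}$ factor sitting inside Term~$(ii)$ together with the $S^{3/2}A^{1/4}H^{7/4}K^{1/4}/n_h^k(s,a)$ correction in Term~$(iv)$; the oddly-shaped constants in \eqref{eqn:longlong} have been calibrated precisely for these matches, and making the induction close requires tracking them carefully.
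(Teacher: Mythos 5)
Your proposal follows the paper's argument almost exactly: a simultaneous induction over episodes in which optimism and the coarse regret bound bootstrap each other through the accuracy of $V^{\mathrm{ref}}$, with Lemma~\ref{lem:average:convergence} converting the coarse regret bound for earlier episodes into the pointwise guarantee $|V^{\mathrm{ref}}_h(s)-V^*_h(s)|\le \sqrt{S^2AH^4}/C_0=\sqrt{H/S}$, and the decomposition \eqref{eqn:desiredbonus2} verifying that the reference-based bonus restores optimism at the new episode (this is the paper's Lemma~\ref{lem:coarseopt}). Two points in your sketch would not survive as written, though both are repairable with ingredients you already have. First, the base case: the window in which no state has triggered a reference update is not short --- the trigger $n_h^k(s)\ge C_0\sqrt{k}$ cannot fire before $k\ge C_0^2=S^3AH^3$, and at $K'=S^3AH^3$ the trivial bound $HK'$ exceeds $\sqrt{S^2AH^4K'}$ by a factor of $\sqrt{SH}$. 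The paper instead runs the full regret decomposition in the base case, using Lemma~\ref{lem:opt_beg} for optimism and the pigeonhole bound $\sum_{k,h} b_h^k \le \tilde{O}(\sqrt{S^2AH^4K'})$ on the naive branch of the bonus plus the OMD bound; you need that argument, not the per-episode bound. Second, your treatment of Term $(c)$, $(\bar{P}_h^k-P_h)(V^{\mathrm{ref}}_{h+1}-V^*_{h+1})$, assumes the $\sqrt{H/S}$ closeness uniformly over next states $y$, but that closeness is only available for states whose reference has actually been updated; for $y$ with $n_h^k(y)<C_0\sqrt{k}$ one only has $|V^{\mathrm{ref}}_{h+1}(y)-V^*_{h+1}(y)|\le H$. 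The paper splits $\mathcal{S}$ into $\mathcal{S}_0$ and $\mathcal{S}_0^c$ and exploits $n_h^k(s,a,y)\le n_h^k(y)<C_0\sqrt{k}$ on $\mathcal{S}_0^c$; this is precisely the origin of the $2S^{3/2}A^{1/4}H^{7/4}K^{1/4}/n_h^k(s,a)$ term in Term~$(iv)$ of the bonus, which you instead attribute to the variance translation in Term~$(a)$.
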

The full proof is in Appendix \ref{app:full}. Here we mainly explain the core idea. To prove Lemma \ref{lem:co_wo_opt_2} we use induction. For the first few episodes of the algorithm, the bonus term is dominated by $\sqrt{\frac{2 \ln \frac{2 S A H T}{\delta^{\prime}}}{n_{h}^{k}(s, a) }}+H\sqrt{\frac{4 S \ln \frac{3 S A H T}{\delta^{\prime}}}{n_{h}^{k}(s, a) }}$, which is the same as in \citet{shani2020optimistic}. Therefore the optimism and coarse regret bound automatically holds.

In the induction step, assume that optimism and coarse regret holds for all $k\leq K^{\prime}$ and $h$. Then by optimism we have that $1$-step regret is bounded by $\tilde{O}(\sqrt{S^2AH^4K^{\prime}})$. Then by Lemma \ref{lem:average:convergence}, we have that $\operatorname{Regret}_h(K')\leq \tilde{O} (\sqrt{S^2AH^4K'})$. This means that if in the $K'+1$-th episode, a new reference function is calculated, then this new reference function must be $\sqrt{H/S}$ accurate, i.e. $|V^{\mathrm{ref}}(s)-V^{*}(s)|\leq O(\sqrt{H/S})$. Hence, the carefully designed bonus guarantees that optimism still holds for $k= K^{\prime}+1$, which finishes the proof. The induction process is shown in Figure \ref{fig:proof}.

Finally, thanks to optimism, we can bound the regret by $\sum_{k=1}^K\sum_{h=1}^H b^k_h(s^k_h,a^k_h)$. The rest follows from the same arguments in the previous section.

\tikzset{global scale/.style={
    scale=#1,
    every node/.append style={scale=#1}
  }
}
\begin{figure}[H]
    \centering
    \begin{tikzpicture}[
roundnode/.style={circle, draw=green!60, fill=green!5, very thick, minimum size=7mm},
squarednode/.style={rectangle, draw=red!60, fill=red!5, very thick, minimum size=5mm}, global scale = 0.8
]
%Nodes
\node[squarednode][align=center]      (optimism)          {Optimism for $\forall (k,h,s,a)\in [K^{\prime}]\times [H]\times\mathcal{S}\times\mathcal{A}$\\ $Q_h^k(s,a)-c_h(s,a)-P_h(\cdot|s,a)V_{h+1}^{k}(\cdot)\leq 0$};
\node[squarednode]        (dynamic)       [below=of optimism] {\emph{1}-st step regret: $\sum_{k=1}^{K^{\prime}}V_1^{\pi^{k}}(s_1^k)-V_1^{*}(s_1^k)\leq O(\sqrt{K^{\prime}})$};
\node[squarednode]      (regret)       [below=of dynamic] {$h$-th step regret:$\sum_{k=1}^{K^{\prime}}V_h^{\pi^{k}}(s_h^k)-V_h^{*}(s_h^k)\leq O(\sqrt{K^{\prime}})$};
\node[squarednode][align=center]         (ref)       [below=of regret] {For any $V^{\mathrm{ref}}(s)$ updated before $K^{\prime}+1$ episode \\$|V^{\mathrm{ref}}(s)-V^{*}(s)|\leq O(\sqrt{H/S})$};
\node[squarednode]      (optimism2)     [below=of ref]                         {Optimism for $\forall (k,h,s,a)\in [K^{\prime}+1]\times [H]\times\mathcal{S}\times\mathcal{A}$\ };

%Lines
\draw[-{Implies},double distance=3pt] (optimism.south) -- (dynamic.north);
\draw[-{Implies},double distance=3pt] (dynamic.south) -- (regret.north) node[midway,auto,bend right] {Forward Induction};
\draw[-{Implies},double distance=3pt] (regret.south) -- (ref.north);
\draw[-{Implies},double distance=3pt] (ref.south) -- (optimism2.north);
\draw[-{Implies},double distance=3pt] (optimism2.east) -| (5.5,0) -- (optimism.east) node[midway,auto,bend right] {$h\leftarrow h+1$};;
\end{tikzpicture}
    \caption{Proving optimism and $h$-th step regret by induction}
    \label{fig:proof}
\end{figure}
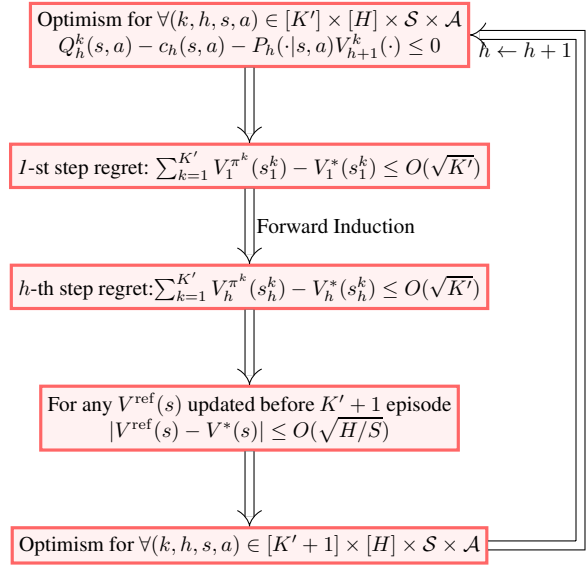

\section{Conclusion and Future Work}

In this paper, we proposed the first optimistic policy optimization algorithm for tabular, episodic RL that can achieve regret guarantee $\tilde{O}(\sqrt{SAH^3K}+\sqrt{AH^4K}+\operatorname{poly}(S,A,H)K^{1/4})$. This algorithm improves upon previous results \citep{shani2020optimistic} and matches the information theoretic limit $\Omega(\sqrt{SAH^3K})$ when $S > H$. Our results also raise a number of promising directions for future work. Theoretically, can we design better policy-based methods that can eliminate the constraint $S > H$? Practically, can we leverage the insight of \algnameacro\,to improve practical policy-based RL algorithms? Specifically, can we design different regularization terms to stabilize the $V/Q$ estimation process to make the algorithm more sample efficient? We look forward to answering these questions in the future.

\section*{Acknowledgement}
Liwei Wang was supported by Exploratory Research Project
of Zhejiang Lab (No. 2022RC0AN02),
Project 2020BD006 supported by PKUBaidu Fund, the major
key project of PCL (PCL2021A12).

% In the unusual situation where you want a paper to appear in the
% references without citing it in the main text, use \nocite
\nocite{langley00}

\bibliography{example_paper}
\bibliographystyle{icml2022}

%%%%%%%%%%%%%%%%%%%%%%%%%%%%%%%%%%%%%%%%%%%%%%%%%%%%%%%%%%%%%%%%%%%%%%%%%%%%%%%
%%%%%%%%%%%%%%%%%%%%%%%%%%%%%%%%%%%%%%%%%%%%%%%%%%%%%%%%%%%%%%%%%%%%%%%%%%%%%%%
% APPENDIX
%%%%%%%%%%%%%%%%%%%%%%%%%%%%%%%%%%%%%%%%%%%%%%%%%%%%%%%%%%%%%%%%%%%%%%%%%%%%%%%
%%%%%%%%%%%%%%%%%%%%%%%%%%%%%%%%%%%%%%%%%%%%%%%%%%%%%%%%%%%%%%%%%%%%%%%%%%%%%%%
\newpage
\appendix
\onecolumn
\section{Regret Decomposition and Failure Events}

 \subsection{Regret Decomposition}

 For any $(k, h) \in [K] \times [H]$, we define 
 \begin{equation}
     \begin{aligned} \label{eq:def:martingale}
     \zeta_{k, h}^1 &= [V_{h}^k(s_h^k) - V_{h}^{\pi^k}(s_h^k)] - [Q_{h}^k(s_h^k, a_h^k) - Q_{h}^{\pi^k}(s_h^k, a_h^k)], \\
     \zeta_{k, h}^2 &= [P_h(\cdot \mid s_h^k, a_h^k)V_{h+1}^k(\cdot) - P_h(\cdot \mid s_h^k, a_h^k)V_{h+1}^{\pi^k}(\cdot)] - [V_{h+1}^k(s_{h+1}^k) - V_{h+1}^{\pi^k}(s_{h+1}^k)].
     \end{aligned}
     \end{equation}
     By the definition, we have that $\zeta_{k, h}^1$ and $\zeta_{k, h}^2$ represent the randomness of executing a stochastic policy $\pi_h^k(\cdot \mid s_h^k)$ and the randomness of observing the next state from stochastic transition kernel $P_h(\cdot \mid s_h^k, a_h^k)$, respectively.
 With these notations, we have the following standard regret decomposition lemma \citep{cai2020provably,shani2020optimistic}.
 \begin{lemma}[Regret Decomposition] \label{lemma:regret:decomposition}
     For any $(K', h') \in [K] \times [H]$, it holds that 
     \begin{align*}
         \operatorname{Regret}_{h'}(K') & = \underbrace{\sum_{k=1}^{K'} \big(V_{h'}^{\pi^{k}}(s_{h'}^k)-V_{h'}^{k}(s_{h'}^k)\big)}_{{(i)}} + \underbrace{\sum_{k=1}^{K'} \big(V_{h'}^{k}(s_{h'}^k)-V_{h'}^{*}(s_{h'}^k)\big)}_{{(ii)}}\\
         &=\underbrace{\sum_{k=1}^{K'} \sum_{h=h'}^{H} \left[c_{h}(s_{h}^k, a_{h}^k) + P_{h}(\cdot \mid s_{h}^k, a_{h}^k) V_{h+1}^{k}(\cdot) - Q_{h}^{k}(s_{h}^k, a_{h}^k) \right]}_{{(i.1)}} + \underbrace{\sum_{k=1}^{K'} \sum_{h=h'}^{H} (\zeta_{k, h}^1 + \zeta_{k, h}^2)}_{{(i.2)}}\\
         &+\underbrace{\sum_{k=1}^{K'} \sum_{h=h'}^{H} \mathbb{E}\Big[\langle Q_{h}^{k}(s_{h}, \cdot), \pi_{h}^{k}(\cdot \mid s_{h})-\pi_{h}(\cdot \mid s_{h})\rangle \mid s_{h'} = s_{h'}^k, \pi, P\Big]}_{(ii.1)}\\
         &+\underbrace{\sum_{k=1}^{K'} \sum_{h=h'}^{H} \mathbb{E}\left[Q_{h}^{k}\left(s_{h}, a_{h}\right)-c_{h}\left(s_{h}, a_{h}\right)-P_{h}\left(\cdot \mid s_{h}, a_{h}\right) V_{h+1}^{k}(\cdot) \mid s_{h'}=s_{h'}^k, \pi, P\right]}_{(ii.2)} 
     \end{align*}

     \begin{proof}
         See \citet{cai2020provably} for a detailed proof.
     \end{proof}
 \end{lemma}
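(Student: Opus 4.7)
The first equality is immediate: inside each summand of $\operatorname{Regret}_{h'}(K')$, add and subtract $V_{h'}^k(s_{h'}^k)$. The real content is two value-difference decompositions, one for term $(i)$ (on-policy, against $\pi^k$) and one for term $(ii)$ (against the comparator $\pi^*$), each established by a single algebraic identity plus telescoping in $h$.

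For term $(i)$, my plan is to set $\delta_h^k := V_h^k(s_h^k) - V_h^{\pi^k}(s_h^k)$ and derive the one-step recursion
\begin{equation*}
\delta_h^k \;=\; \delta_{h+1}^k + \big[Q_h^k(s_h^k,a_h^k) - c_h(s_h^k,a_h^k) - P_h(\cdot\mid s_h^k,a_h^k)V_{h+1}^k(\cdot)\big] + \zeta_{k,h}^1 + \zeta_{k,h}^2.
\end{equation*}
This comes from combining (a) the definition of $\zeta_{k,h}^1$ in \eqref{eq:def:martingale} to rewrite $Q_h^k(s_h^k,a_h^k) - Q_h^{\pi^k}(s_h^k,a_h^k)$ in terms of $\delta_h^k$, (b) the Bellman equation $Q_h^{\pi^k} = c_h + P_h V_{h+1}^{\pi^k}$, (c) an add-and-subtract of $P_h V_{h+1}^k$, and (d) the definition of $\zeta_{k,h}^2$ to rewrite $P_h V_{h+1}^k - P_h V_{h+1}^{\pi^k}$ as $\delta_{h+1}^k + \zeta_{k,h}^2$. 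Telescoping from $h=h'$ to $H$ with the boundary condition $\delta_{H+1}^k\equiv 0$, and then summing over $k\in[K']$, reproduces $(i.1)+(i.2)$ (the sign convention on the martingale differences is immaterial since they are mean-zero with respect to the natural filtration).

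For term $(ii)$, I will iterate the single-step extended value-difference identity
\begin{equation*}
V_h^k(s) - V_h^{\pi}(s) = \big\langle Q_h^k(s,\cdot),\,\pi_h^k(\cdot\mid s) - \pi_h(\cdot\mid s)\big\rangle + \mathbb{E}_{a\sim\pi_h(\cdot\mid s)}\!\big[Q_h^k(s,a) - Q_h^{\pi}(s,a)\big],
\end{equation*}
which follows from $V_h^k(s) = \langle Q_h^k(s,\cdot),\pi_h^k(\cdot\mid s)\rangle$ and $V_h^{\pi}(s) = \langle Q_h^{\pi}(s,\cdot),\pi_h(\cdot\mid s)\rangle$ by adding and subtracting $\langle Q_h^k(s,\cdot),\pi_h(\cdot\mid s)\rangle$. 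Expanding $Q_h^{\pi}(s,a) = c_h(s,a) + P_h(\cdot\mid s,a) V_{h+1}^{\pi}(\cdot)$ and adding and subtracting $P_h V_{h+1}^k$ splits the second summand into a Bellman-residual contribution $\mathbb{E}_{a\sim\pi_h}[Q_h^k - c_h - P_h V_{h+1}^k]$ plus a recursion on $V_{h+1}^k - V_{h+1}^{\pi}$ transported under the $(\pi,P)$-rollout. Iterating from $h=h'$ up to $H$ with $V_{H+1}^k\equiv V_{H+1}^{\pi}\equiv 0$, specializing to $\pi=\pi^*$ and to the starting state $s_{h'}^k$, and then summing over $k$, yields exactly $(ii.1)+(ii.2)$.

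I do not anticipate a genuine obstacle: both decompositions are algebraic identities followed by telescoping in $h$. The only bookkeeping care is to keep straight the two kinds of conditional expectations that appear. In $(i)$ the martingale differences $\zeta^{1,2}_{k,h}$ are increments along the \emph{actually played} trajectory, so the resulting ``error'' term is evaluated at $(s_h^k,a_h^k)$; in $(ii)$ the expectations are counterfactual rollouts under $\pi$ and $P$ started at the on-policy state $s_{h'}^k$, which is why $(ii.1)$ and $(ii.2)$ appear inside an expectation under $(\pi,P)$ rather than along a realized trajectory.
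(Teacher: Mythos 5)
Your proposal is correct and follows essentially the same route as the proof the paper defers to (\citet{cai2020provably}): the first equality by adding and subtracting $V_{h'}^k(s_{h'}^k)$, term $(i)$ via the one-step recursion built from the Bellman equation and the martingale definitions in \eqref{eq:def:martingale} followed by telescoping, and term $(ii)$ via the extended value-difference identity iterated under the $(\pi^*,P)$-rollout. The sign mismatch you flag on $(i.2)$ is real --- with the paper's definition of $\zeta_{k,h}^1,\zeta_{k,h}^2$ the exact identity carries $-(\zeta_{k,h}^1+\zeta_{k,h}^2)$ --- but as you note this is an inconsistency in the stated lemma rather than in your argument, and it is harmless since these terms are only ever controlled in absolute value by Azuma--Hoeffding.
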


\subsection{Failure Events}
\begin{definition} \label{def:failure:event}
We define the following failure events:
\begin{gather*}
F_k^0 = \left\{ \exists h: \left| \sum_{k'=1}^{k} \sum_{h'=h}^{H} (\zeta_{k', h'}^1 + \zeta_{k', h'}^2)\right| \ge \sqrt{16H^3K\ln\frac{2H}{\delta'}}\right\}, \\
F_{k}^{1}=\left\{\exists s, a, h:\left|c_{h}(s, a)-\bar{c}_{h}^{k}(s, a)\right| \geq \sqrt{\frac{2 \ln \frac{2 S A H T}{\delta^{\prime}}}{n_{h}^{k}(s, a) }}\right\},\\
F_{k}^{2}=\left\{\exists s, a, h:\left\|P_{h}(\cdot \mid s, a)-\bar{P}_{h}^{k}(\cdot \mid s, a)\right\|_{1} \geq \sqrt{\frac{4 S \ln \frac{3 S A H T}{\delta^{\prime}}}{n_{h}^{k}(s, a) }}\right\},\\
F_{k}^{3}=\left\{\exists s, a, h:\left|\left(P_{h}(\cdot \mid s, a)-\bar{P}_{h}^{k}(\cdot \mid s, a)\right)V_{h}^{*}(\cdot)\right| \geq H\sqrt{\frac{4  \ln \frac{3 S A H T}{\delta^{\prime}}}{n_{h}^{k}(s, a) }}+\frac{2H\ln\frac{2SAHT}{\delta^{\prime}}}{3n_{h}^{k}(s, a) }\right\},\\
F_{k}^{4}=\left\{\exists s^{\prime}, s, a, h: \left|\bar{P}_{h}^{k}(s^{\prime} \mid s, a)-P_{h}(s^{\prime} \mid s, a)\right| \geq \sqrt{\frac{2\bar{P}_{h}^k(s^{\prime} \mid s, a)(1-\bar{P}_{h}^k(s^{\prime} \mid s, a)) \ln \left(\frac{2 SAHK}{\delta^{\prime}}\right)}{n_{h}^{k}(s, a)-1}}+\frac{7 \ln \left(\frac{2 SAHK}{\delta^{\prime}}\right)}{3 n_{h}^{k}(s, a)}\right\},\\
F= \bigcup_{k=1}^{K} \left( F^{0}_k \cup F_{k}^{1}\cup F_{k}^{2}\cup F_{k}^{3} \cup F_{k}^{4}\right),
\end{gather*}
where $\delta^{\prime}=\frac{\delta}{5}$.
\end{definition}
\begin{lemma}
It holds that 
\begin{align*}
    \Pr(F) \le \delta. 
\end{align*}
\end{lemma}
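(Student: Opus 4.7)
The plan is to show each of the five families $\bigcup_{k\le K} F_k^i$ has probability at most $\delta/5=\delta'$, and then apply a union bound so that $\Pr(F)\le 5\delta'=\delta$. Each family is handled by an appropriate off-the-shelf concentration inequality, combined with a union bound over the randomness of the visitation counts $n_h^k(s,a)$.

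For $F_k^0$, I would fix $h\in[H]$ and observe that the random variables $\{\zeta_{k,h'}^1+\zeta_{k,h'}^2\}_{k\le K,\,h'\ge h}$, enumerated in the natural order of play, form a martingale difference sequence with respect to the filtration generated by the trajectories, each term bounded in absolute value by $2H$ since $V_h^{\pi^k},V_h^k\in[0,H]$. Applying Azuma--Hoeffding to the maximal partial sum (for instance via the standard maximal inequality, or by a union bound over $k$) gives that the partial sum exceeds $\sqrt{16H^3K\ln(2H/\delta')}$ with probability at most $\delta'/H$; union bounding over $h\in[H]$ handles the ``$\exists h$'' quantifier and yields $\Pr(\bigcup_k F_k^0)\le\delta'$.

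For $F_k^1,F_k^3,F_k^4$ the argument is analogous but complicated by the random denominator $n_h^k(s,a)$. I would condition on each $(s,a,h)$ and, following the standard trick, enumerate the visits to $(s,a,h)$ by an internal index $n=1,2,\dots,T$. For each fixed $n$ the rewards/next states seen at the first $n$ visits are i.i.d.\ by the Markov property, so Hoeffding (for $F_k^1$ and the Hoeffding--Bernstein bound of $F_k^3$ applied to the fixed function $V^*_h\in[0,H]^{\mathcal{S}}$) and the empirical Bernstein inequality (for $F_k^4$, applied coordinate-wise to the indicator of $s'$) apply cleanly. Setting the failure probability of each of these at level $\delta'/(SAHT)$ and union bounding over $(s,a,h,n)$ gives $\Pr(\bigcup_k F_k^i)\le\delta'$ for $i\in\{1,3,4\}$; note that the $1/(n-1)$ denominator in $F_k^4$ matches the usual empirical Bernstein form. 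For $F_k^2$ I would use the Weissman $L_1$ deviation inequality for empirical distributions over an alphabet of size $S$, again stratified by $n=n_h^k(s,a)$ and union bounded over $(s,a,h,n)$.

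The main obstacle is the conjunction of two issues: handling the random counter $n_h^k(s,a)$ cleanly, and making the log-factor inside each inequality match the exact constants written in Definition~\ref{def:failure:event}. The counter issue is resolved by the stratification-over-$n$ trick described above (equivalently, a peeling/doubling argument), which is routine but must be invoked uniformly to justify the ``for all $k$'' guarantee; the log constants are matched by setting the per-event failure probability to $\delta'$ divided by the appropriate $SAHT$ or $SAHK$ factor visible in each bonus expression. Once every family is controlled at level $\delta'$, a final union bound over $\{F^0,F^1,F^2,F^3,F^4\}$ gives $\Pr(F)\le5\delta'=\delta$, completing the proof.
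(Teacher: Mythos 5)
Your proposal is correct and follows essentially the same route as the paper: apply the appropriate off-the-shelf concentration inequality to each family (Azuma--Hoeffding for the martingale term, Hoeffding for the rewards, the Weissman $L_1$ bound for $F^2$, a Bernstein-type bound on the fixed function $V^*$ for $F^3$, and empirical Bernstein for $F^4$), stratify over the possible values of the counters $n_h^k(s,a)$ and union bound over $(s,a,h)$ (and $s'$ where needed), then take a final union bound over the five families with $\delta'=\delta/5$. The only nit is that the correct envelope is $|\zeta_{k,h}^1+\zeta_{k,h}^2|\le 4H$ rather than $2H$ (each of $\zeta^1$ and $\zeta^2$ is separately bounded by $2H$), which only affects a constant inside the square root and not the structure of the argument.
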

\begin{proof}
These standard concentration inequalities also appear in \citet{azar2017minimax,cai2020provably,shani2020optimistic}. For completeness, we present the proof sketch here.

Note that the martingales $\zeta_{k, h}^1$ and $\zeta_{k, h}^2$ defined in \eqref{eq:def:martingale} satisfy $|\zeta_{k, h}^1 + \zeta_{k, h}^2| \le 4H$. By Azuma-Hoeffding inequality, we have $\Pr(F^0) \le \delta'$.

Let $F^{1}=\bigcup_{k=1}^{K} F_{k}^{1} .$  By Hoeffding's inequality, we have 
\begin{equation}
\operatorname{Pr}\left\{\left|c_{h}(s, a)-\bar{c}_{h}^{k}(s, a)\right| \geq \sqrt{\frac{2 \ln \frac{1}{\delta^{\prime}}}{n_{h}^{k}(s, a) }}\right\}\leq \delta^{\prime}
\end{equation}
Using a union bound over all $s,a$ and all possible values of $n_{k}(s, a)$ and $k$., we have $\operatorname{Pr}\left\{F^{c}\right\} \leq \delta^{\prime}$.

Let $F^{2}=\bigcup_{k=1}^{K} F_{k}^{2}$. Then $\operatorname{Pr}\left\{F^{2}\right\} \leq \delta^{\prime}$, which is implied by \cite{weissman2003inequalities} while applying union bound on all $s,a$ and all possible values of $n_{k}(s, a)$ and $k$.

Let $F^{3}=\bigcup_{k=1}^{K} F_{k}^{3}$. According to \cite{azar2017minimax}, we have that with probability at least $1-\delta^{\prime}$
\begin{equation}
\left|\left[\left(P_{h}-\bar{P}_{h}^k\right) V_{h}^{*}\right](s, a)\right| \leq \sqrt{\frac{2 H^2 \ln \left(\frac{2 HK}{\delta^{\prime}}\right)}{n_{h}^{k}(s, a)}}+\frac{2 H \ln \left(\frac{2 HK}{\delta^{\prime}}\right)}{3 n_{h}^{k}(s, a)}.
\end{equation}
Take a union bound over $s,a,k$, we have $\operatorname{Pr}\left\{F^{3}\right\} \leq \delta^{\prime}$.

Let $F^{4}=\bigcup_{k=1}^{K} F_{k}^{4}$. The Empirical Bernstein inequality (Theorem 4 in \cite{maurer2009empirical}) combined with a union bound argument on $s,a,s^{\prime}$,$n_{h}^{k}(s,a)$ also implies the following bound holds with probability at least $1-\delta^{\prime}$:
\begin{equation}
\left|\bar{P}_h^{k}(s^{\prime} \mid s, a)-P_h(s^{\prime} \mid s, a)\right| \leq \sqrt{\frac{2\bar{P}(s^{\prime} \mid s, a)(1-\bar{P}(s^{\prime} \mid s, a)) \ln \left(\frac{2 SAHK}{\delta^{\prime}}\right)}{n_{h}^{k}(s, a)-1}}+\frac{7 \ln \left(\frac{2 SAHK}{\delta^{\prime}}\right)}{3 n_{h}^{k}(s, a)}
\end{equation}
Therefore, $\operatorname{Pr}\{F^{4}\}\leq \delta^{\prime}$

Finally, take a union bound with $\delta^{\prime}=\frac{\delta}{5}$, we have $\operatorname{Pr}\left\{F\right\} \leq \delta$.
\end{proof}

Below we will assume that the failure event $F$ does not happen, which is with high probability.

\section{Missing Proofs for Section \ref{sec:proof:sketch}} \label{appendix:full:proof}

\subsection{Proof of Lemma \ref{lem:regret:bound}} \label{appendix:pf:lem:regret:bound}
\begin{proof}
By Lemma \ref{lemma:regret:decomposition}, we have 
    \begin{align*}
        &\sum_{k=1}^{K'}\big(V^{\pi_k}_h(s^k_h)-V^k_{h}(s^k_h)\big) \\
         +&\sum_{k=1}^{K'} \sum_{h=h'}^{H} \mathbb{E}\Big[\langle Q_{h}^{k}(s_{h}, \cdot), \pi_{h}^{k}(\cdot \mid s_{h})-\pi_{h}(\cdot \mid s_{h})\rangle \mid s_{h'} = s_{h'}^k, \pi, P\Big] + \sum_{k=1}^{K'} \sum_{h=h'}^{H} (\zeta_{k, h}^1 + \zeta_{k, h}^2).
    \end{align*}
    Here $\zeta_{k, h}^1$ and $\zeta_{k, h}^2$ are martingales defined in \eqref{eq:def:martingale} satisfying $|\zeta_{k, h}^1 + \zeta_{k, h}^2| \le 4H$.
    Under Azuma-Hoeffding inequalities and the optimistic assumption that $-2 b^k_h(s,a)\leq Q^k_h(s,a)-c_h(s,a)-P_h(\cdot|s,a)V^k_{h+1}\leq 0$, we finish the proof of Lemma \ref{lem:regret:bound}.
\end{proof}

\subsection{Full Proof of Theorem \ref{thm:fine_reg}} \label{app:full}
As discussed in Section \ref{sec:proof:sketch}, we only need to prove the coarse regret in Lemma \ref{lem:co_wo_opt_2}, but this time without the optimism assumption. We restate the lemma for ease of reading.
\begin{lemma}[Coarse analysis of dynamic regret, restatement of Lemma \ref{lem:co_wo_opt_2}]\label{lem:co_wo_opt}
Conditioned on $F^c$, for any $K'\in [K]$, we have the following regret bound:
\begin{equation}\label{eqn:coarse_reg}
\operatorname{Regret}(K') = \sum_{k=1}^{K'} \big(V^{\pi^k}_1(s_1^k)-V^*_1(s_1^k)\big)\leq \tilde{O}(\sqrt{S^2AH^4K'}).
\end{equation}
and the optimism holds, namely 
\begin{align}\label{eqn:opt_coarse}
    -2b_h^k(s, a) \le Q_{h}^{k}(s, a)-c_{h}(s, a)-P_{h}(\cdot \mid s, a) V_{h+1}^{k}(\cdot) \leq 0.
\end{align}
\end{lemma}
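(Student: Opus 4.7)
\textbf{Proof plan for Lemma \ref{lem:co_wo_opt}.} The plan is to run a double induction: an outer induction on the episode index $K'$, and inside the inductive step an application of the Forward Induction on $h$ from Lemma \ref{lem:avg}. At each stage of the outer induction we will use the current inductive hypothesis (optimism plus coarse regret up through episode $K'$) to certify that every reference value $V^{\mathrm{ref}}_{h+1}$ that has been written into memory by the end of episode $K'$ is already close to $V^*_{h+1}$; this closeness is exactly what is needed to verify that the carefully designed bonus $u_h^{K'+1}$ dominates the transition-estimation error, which in turn gives optimism at episode $K'+1$ and allows us to extend the coarse regret bound.

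\emph{Base case.} For the first few episodes no state--action pair can have $n_h^k(s,a)$ large enough for Term $(ii)$--Term $(iv)$ of $u_h^k$ to be small; concretely, as long as $n_h^k(s,a)$ is below a universal constant the $\min$ in \eqref{eq:def:bonus} activates the right branch
\begin{equation*}
\sqrt{\tfrac{2 \ln(2SAHT/\delta')}{n_h^k(s,a)}} + H\sqrt{\tfrac{4S\ln(3SAHT/\delta')}{n_h^k(s,a)}},
\end{equation*}
which is precisely the Hoeffding-style bonus used by \citet{shani2020optimistic}. Conditioned on $F_k^1\cup F_k^2$ not happening, combining the reward deviation with $\|(\bar P_h^k-P_h)(\cdot\mid s,a)\|_1\cdot\|V^k_{h+1}\|_\infty\le H\sqrt{4S\ln(\cdot)/n_h^k(s,a)}$ yields the two-sided inequality \eqref{eqn:opt_coarse} directly. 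The coarse regret bound for those episodes is trivial since each contributes at most $H$.

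\emph{Inductive step.} Assume \eqref{eqn:coarse_reg} and \eqref{eqn:opt_coarse} hold for all $k\le K'$ and every $h$. First, because optimism holds on the entire past, we may invoke Lemma \ref{lem:avg}: the coarse $1$-st step regret bound and Forward Induction give $\operatorname{Regret}_h(k)\le \tilde O(\sqrt{S^2AH^4k})$ for every $k\le K'$ and every $h$, which in turn yields $\sum_{i=1}^{k}|V_h^i(s_h^i)-V_h^*(s_h^i)|\le\tilde O(\sqrt{S^2AH^4 k})$. For any state $s$ whose reference value was refreshed at some episode $k\le K'$, the triggering condition $n_h^k(s)\ge C_0\sqrt{k}=\sqrt{S^3AH^3 k}$ together with this inequality gives
\begin{equation*}
|V^{\mathrm{ref}}_h(s)-V^*_h(s)|\le \frac{1}{n_h^k(s)}\sum_{i=1}^k |V_h^i(s_h^i)-V_h^*(s_h^i)|\mathbf{1}[s_h^i=s]\le \tilde O\!\left(\sqrt{H/S}\right).
\end{equation*}
Now consider episode $K'+1$. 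Decomposing $(\bar P_h^{K'+1}-P_h)V^{K'+1}_{h+1}$ as in \eqref{eqn:desiredbonus2} and conditioning on $F^c$, each of the three pieces is controlled by one block of the bonus: Bernstein on $V^*_{h+1}$ (replacing its variance by $\mathbb{V}V^{\mathrm{ref}}_{h+1}$ at the price of the lower-order correction $\sqrt{4H\ln(\cdot)/(S\,n_h^{K'+1}(s,a))}$) gives Term $(ii)$; empirical Bernstein per coordinate against $V^{K'+1}_{h+1}-V^{\mathrm{ref}}_{h+1}$ gives Term $(iii)$; and the $\ell_1$ bound times $\|V^{\mathrm{ref}}_{h+1}-V^*_{h+1}\|_\infty\le\tilde O(\sqrt{H/S})$ gives Term $(iv)$, where the $K^{1/4}$ summand absorbs the dependence on $C_0$. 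Adding the reward Term $(i)$ certifies $|Q^{K'+1}_h-c_h-P_h V^{K'+1}_{h+1}|\le u_h^{K'+1}$, i.e.\ optimism at $K'+1$. Once optimism at $K'+1$ is in hand, we apply Lemma \ref{lem:regret:bound} to Term $(i)$ of \eqref{eqn:reg_decomp}, the OMD bound from Lemma \ref{lem:omd_in_po} plus nonpositivity of Term $(iv)$ in the regret decomposition to Term $(ii)$, and bound the resulting sum of bonuses using the simple branch $b_h^k\le \tilde O(\sqrt{SH^2/n_h^k(s,a)})$ to obtain $\operatorname{Regret}(K'+1)\le\tilde O(\sqrt{S^2AH^4(K'+1)})$.

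\emph{Main obstacle.} The conceptual difficulty is the circularity flagged in Figure \ref{fig:proof}: optimism at episode $K'+1$ needs $V^{\mathrm{ref}}$ accurate to $\tilde O(\sqrt{H/S})$, that accuracy needs $h$-step regret bounds for all $h$, and those in turn rely on optimism. Breaking this requires two careful choices that must be maintained throughout: (a) doing the outer induction strictly per episode, so that at step $K'+1$ we only ever use $V^{\mathrm{ref}}$ entries that were committed during episodes $\le K'$, and (b) ensuring the $1$-st step coarse regret bound is truly $\tilde O(\sqrt{K'})$ with no $K$-dependence inside the rate, which is precisely why the algorithm uses an $\ell_2$ Bregman term with a $t$-dependent stepsize instead of the fixed, $K$-tuned KL scheme of \citet{shani2020optimistic}; verifying that Lemma \ref{lem:omd_in_po} still delivers a $\sqrt{AH^4K'}$-type OMD bound for every prefix $K'$ under this schedule is the real technical engine of the induction.
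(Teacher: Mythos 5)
Your overall strategy --- an outer induction over episodes in which optimism up to episode $K'$ feeds Lemma \ref{lem:avg} (Forward Induction over $h$) to certify that every committed reference entry satisfies $|V^{\mathrm{ref}}_h(s)-V^*_h(s)|\le\tilde O(\sqrt{H/S})$, which in turn certifies that the bonus dominates the transition error at episode $K'+1$ --- is exactly the paper's argument, and your diagnosis of how the circularity is broken matches Figure \ref{fig:proof}. However, your base case fails as written. The regime in which the $\min$ in \eqref{eq:def:bonus} selects the Hoeffding branch is \emph{not} confined to ``the first few episodes'': the right branch is selected whenever the $K^{1/4}/n_h^k(s,a)$ summand of Term $(iv)$ exceeds $\sqrt{SH^2/n_h^k(s,a)}$, i.e.\ whenever $n_h^k(s,a)\lesssim \mathrm{poly}(S,A,H)\sqrt{K}$, and in particular this regime can cover \emph{all} $K$ episodes if no state is ever visited $C_0\sqrt{k}$ times. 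Consequently ``each episode contributes at most $H$'' only gives $\operatorname{Regret}(K')\le HK'$, which is linear and does not establish \eqref{eqn:coarse_reg}. The correct base-case argument is the one you already deploy in the inductive step: once optimism with the Hoeffding bonus is established (your $F^1,F^2$ argument is fine for that), apply the regret decomposition, the OMD bound of Lemma \ref{lem:omd_in_po}, and the pigeonhole bound $\sum_{k\le K'}\sum_h \sqrt{SH^2/n_h^k(s_h^k,a_h^k)}\le\tilde O(\sqrt{S^2AH^4K'})$. This is what the paper does.

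A second, smaller imprecision: the claim $\|V^{\mathrm{ref}}_{h+1}-V^*_{h+1}\|_\infty\le\tilde O(\sqrt{H/S})$ is false for states whose reference update has never been triggered --- there $V^{\mathrm{ref}}_{h+1}=0$ and the error can be as large as $H$. Term $(c)$ of \eqref{eqn:desiredbonus2} must be split over $\mathcal{S}_0=\{y: n^k_h(y)\ge C_0\sqrt k\}$ and its complement: on $\mathcal{S}_0$ one uses the $\sqrt{H/S}$ accuracy against the $\ell_1$ deviation, while on $\mathcal{S}_0^c$ one uses $n^k_h(y)<C_0\sqrt k$ together with the per-coordinate empirical Bernstein bound, and it is precisely this second piece that produces the $K^{1/4}/n_h^k$ summand of Term $(iv)$. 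You gesture at this (``the $K^{1/4}$ summand absorbs the dependence on $C_0$''), but the uniform $\infty$-norm bound as stated is the point where a literal reading of your argument breaks; with the split made explicit, the rest of your inductive step goes through as in the paper.
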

\begin{remark}
In our algorithm, we change the Bregman penalty term from KL-divergence $d_{KL}(\pi_h\|\pi^k_h)$ to $\|\pi_h-\pi^k_h\|_2^2$, we also let the learning rate to be of the form $\eta_t = O(\frac{1}{\sqrt{t}})$ instead of a constant depending on $K$. We note that although this will cause more regret in the OMD term, it is crucial to obtain a $\text{Regret}(K')=O(\sqrt{K'})$ bound instead of $\text{Regret}(K')=O(K')$.
\end{remark}
\begin{proof}
First, by Lemma \ref{lemma:regret:decomposition}, we decompose the regret in the following way.
\begin{align*}
    &\sum_{k=1}^{K^{\prime}} \big(V_{1}^{\pi^{k}}\left(s_{1}\right)-V_{1}^{*}\left(s_{1}\right)\big)\\
    =&\underbrace{\sum_{k=1}^{K^{\prime}} \sum_{h=1}^{H} \left[c_{h}(s_{h}^k, a_{h}^k) + P_{h}(\cdot \mid s_{h}^k, a_{h}^k) V_{h+1}^{k}(\cdot) - Q_{h}^{k}(s_{h}^k, a_{h}^k) \right]}_{{(i)}} + \underbrace{\sum_{k=1}^{K} \sum_{h=1}^{H} (\zeta_{k, h}^1 + \zeta_{k, h}^2)}_{{(ii)}}\\
         &+\underbrace{\sum_{k=1}^{K^{\prime}} \sum_{h=1}^{H} \mathbb{E}\Big[\langle Q_{h}^{k}(s_{h}, \cdot), \pi_{h}^{k}(\cdot \mid s_{h})-\pi_{h}(\cdot \mid s_{h})\rangle \mid s_{1} = s_{1}^k, \pi, P\Big]}_{(iii)}\\
         &+\underbrace{\sum_{k=1}^{K^{\prime}} \sum_{h=1}^{H} \mathbb{E}\left[Q_{h}^{k}\left(s_{h}, a_{h}\right)-c_{h}\left(s_{h}, a_{h}\right)-P_{h}\left(\cdot \mid s_{h}, a_{h}\right) V_{h+1}^{k}(\cdot) \mid s_{1}=s_{1}^k, \pi, P\right]}_{(iv)} .
\end{align*}

We first prove \eqref{eqn:coarse_reg} and \eqref{eqn:opt_coarse} for early episodes, namely when $n_{h}^{k}<C_0 \sqrt{k}$ for all $s$ (which is true at the beginning of the algorithm), then the regret bound holds. Note that in this case, conditioned on $F^c$, we have $\frac{SH\sqrt{C_0}{K^{\prime}}^{1/4}}{n^k_h(s,a)}\geq \sqrt{\frac{H^2S}{n^k_h(s,a)}}$, which implies $b_h^k(s,a)=\sqrt{\frac{2 \ln \frac{2 S A H T}{\delta^{\prime}}}{n_{h}^{k}(s, a)  }}+H\sqrt{\frac{4 S \ln \frac{3 S A H T}{\delta^{\prime}}}{n_{h}^{k}(s, a)  }}$. In this case, by Lemma \ref{lem:opt_beg}, we have $-2b_h^k(s, a) \le Q_{h}^{k}(s, a)-c_{h}(s, a)-P_{h}(\cdot \mid s, a) V_{h+1}^{k}(\cdot) \leq 0$, which further implies that 
\begin{align*}
    \text{Term }  (i) \leq & O\big(\sum_{k=1}^{K^{\prime}}\sum_{h=1}^H b^k_h(s^k_h,a^k_h)\big)\\
    \leq & O\big(\sum_{k=1}^{K^{\prime}}\sum_{h=1}^H \sqrt{\frac{2 \ln \frac{2 S A H T}{\delta^{\prime}}}{n_{h}^{k}(s, a)  }}+H\sqrt{\frac{4 S \ln \frac{3 S A H T}{\delta^{\prime}}}{n_{h}^{k}(s, a)  }} \big)\\
    \leq &  \tilde{O}(\sqrt{S^2AH^4K^{\prime}}),
\end{align*}
and
\begin{equation*}
    \text{Term } (iv) \leq 0.
\end{equation*}
Also, by Lemma \ref{lem:opt_beg}, we know that \eqref{eqn:opt_coarse} holds in this period.
Applying Lemma \ref{lem:omd_in_po} to Term $(iii)$, we have
\begin{equation*}
    \text{Term } (iii) \leq \tilde{O}(\sqrt{AH^4{K^{\prime}}})
\end{equation*}
Therefore, under event $F^c$, we have
\begin{equation*}
    \sum_{k=1}^{K^{\prime}} \big(V_{1}^{\pi^{k}}\left(s_{1}\right)-V_{1}^{*}\left(s_{1}\right)\big) \leq \tilde{O}(\sqrt{S^2AH^4{K^{\prime}}})
\end{equation*}

Next, we prove \eqref{eqn:coarse_reg} and \eqref{eqn:opt_coarse} for the remaining episodes. We prove this claim by induction. In fact, we will prove the following claim: for each episode, \eqref{eqn:coarse_reg} and \eqref{eqn:opt_coarse} hold.
We have shown that this claim holds for the first episodes.

Assume that for $k= K^{\prime}-1$, we have $\sum_{k=1}^{K^{\prime}-1} (V_{1}^{\pi^{k}}\left(s_{1}\right)-V_{1}^{*}\left(s_{1}\right))\leq \tilde{O}(\sqrt{S^2AH^4(K^{\prime}-1)})$, we want to prove that for $k= K^{\prime}$, we have $\sum_{k=1}^{K^{\prime}} (V_{1}^{\pi^{k}}\left(s_{1}\right)-V_{1}^{*}\left(s_{1}\right))\leq \tilde{O}(\sqrt{S^2AH^4K^{\prime}})$.

If there exists $(h,s,a)$ such that $n_h^{K^{\prime}}(s)\geq C_0\sqrt{K^{\prime}}$ (i.e. we construct $V_{h}^{\mathrm{ref}}(s)$ in this episode), then since \eqref{eqn:coarse_reg} and \eqref{eqn:opt_coarse} holds for previous episodes, using Lemma \ref{lem:average:convergence} we have 
\begin{align*}
    \left|V_h^{\mathrm{ref}}(s)-V_{h}^{*}(s)\right|&=\frac{1}{n^{K^{\prime}}_h(s)}\left|\sum_{i=1}^{K^{\prime}} \left[V^i_h(s^i_h)-V_{h}^{*}(s)\right]\textbf{1}[s^i_h=s]\right|\\
    &\leq \frac{1}{n^{K^{\prime}}_h(s)}\sum_{i=1}^{K^{\prime}} \left|V^i_h(s^i_h)-V_{h}^{*}(s)\right|\textbf{1}[s^i_h=s]\\
    &\leq \frac{1}{C_0\sqrt{K^{\prime}}}\sum_{i=1}^{K^{\prime}} \left|V^i_h(s)-V_{h}^{*}(s)\right|\\
    &\leq \frac{1}{C_0\sqrt{K^{\prime}}}\sqrt{S^2AH^4K^{\prime}}=\frac{\sqrt{S^2AH^4}}{C_0}
\end{align*}

Therefore, by Lemma \ref{lem:coarseopt} we know that $Q_{h}^{k}$ is an optimistic estimation of $Q_{h}^{*}$. Together with Lemmas \ref{lemma:regret:decomposition} and \ref{lem:omd_in_po}, we have
\begin{align*}
    &\sum_{k=1}^{K^{\prime}} \big(V_{1}^{\pi^{k}}\left(s_{1}\right)-V_{1}^{*}\left(s_{1}\right) \big)\\
    \leq & O\big(\sum_{k=1}^{K^{\prime}}\sum_{h=1}^H b^k_h(s^k_h,a^k_h)\big)+O(\sqrt{AH^4{K^{\prime}}})\\
    \leq & O\big(\sum_{k=1}^{K^{\prime}}\sum_{h=1}^H \sqrt{\frac{2 \ln \frac{2 S A H T}{\delta^{\prime}}}{n_{h}^{k}(s, a)  }}+H\sqrt{\frac{4 S \ln \frac{3 S A H T}{\delta^{\prime}}}{n_{h}^{k}(s, a)  }} \big)+O(\sqrt{AH^4{K^{\prime}}})\\
    \leq &  \tilde{O}(\sqrt{S^2AH^4{K^{\prime}}}),
\end{align*}
which concludes our proof.

\end{proof}

\subsection{Proof of Lemma \ref{lem:average:convergence}} \label{appendix:pf:average:convergence}
\begin{proof}
        We have
        \begin{align*}
            &\sum_{k=1}^{K'} |V_h^k(s^k_h)-V^*_h(s^k_h)|\\
            \leq& \sum_{k=1}^{K'} |V_h^k(s^k_h)-V^{\pi^k}_h(s^k_h)|+\sum_{k=1}^{K'} |V_h^{\pi^k}(s^k_h)-V^*_h(s^k_h)|\\
            =&\sum_{k=1}^{K'} (V^{\pi^k}_h(s^k_h)-V_h^k(s^k_h))+\sum_{k=1}^{K'} (V_h^{\pi^k}(s^k_h)-V^*_h(s^k_h))
        \end{align*}
        where in the last step we use optimism and definition of $V^{*}$. By Lemma \ref{lem:regret:bound}, the first term $\sum_{k=1}^{K'} (V^{\pi^k}_h(s^k_h)-V_h^k(s^k_h))$ can be bounded as
        \begin{equation}
        \begin{aligned}
        &\sum_{k=1}^{K'} \bigl(V_{h}^{\pi_{k}}\left(s_{h}\right)-V_{h}^{k}\left(s_{h}\right)\bigr) \\
        \leq & O\Big(\sum_{k=1}^{K'} \sum_{h^{\prime}=h}^{H} b_{h^{\prime}}^{k}(s_{h^{\prime}}^k,a_{h^{\prime}}^k)  \Big) + O(\sqrt{H^3K'})\\
        \leq & O\Big(\sum_{k=1}^{K'} \sum_{h^{\prime}=h}^{H} \sqrt{\frac{H^2S}{n_{h^{\prime}}^{k}(s_{h^{\prime}}^k,a_{h^{\prime}}^k)}}\Big) + O(\sqrt{H^3K'})\\ 
        \leq & \Tilde{O}(\sqrt{S^2AH^4K'}).
        \end{aligned}
        \end{equation}
        For the second term $\sum^{K'}_{k=1} (V_h^{\pi^k}(s^k_h)-V^*_h(s^k_h))$, we use a forward induction trick:
        
        First we have $\sum^{K'}_{k=1} (V_h^{\pi^k}(s^k_h)-V^*_h(s^k_h))\leq O(\sqrt{S^2AH^4K'})$ holds for $h=1$. By induction, if the claim holds up to $h$, then we have
        \begin{align*}
            &\sum_{k=1}^{K'} \big(V^{\pi^k}_h(s^k_h)-V^*_h(s^k_h)\big)\\
            =&\sum_{k=1}^{K'} \big(\langle Q^{\pi^k}_h(s^k_h,\cdot),\pi^k_h(\cdot|s^k_h)\rangle - \langle Q^{*}_h(s^k_h,\cdot),\pi^*_h(\cdot|s^k_h)\rangle \big)\\
            \geq& \sum_{k=1}^{K'} \big(\langle Q^{\pi^k}_h(s^k_h,\cdot),\pi^k_h(\cdot|s^k_h)\rangle - \langle Q^{*}_h(s^k_h,\cdot),\pi^k_h(\cdot|s^k_h)\rangle\big)\\
            =&\sum_{k=1}^{K'} \langle P_h(s_h^k, \cdot)(V^{\pi^k}_{h+1}-V^*_{h+1}),\pi^k_h(\cdot|s^k_h)\rangle\\
            =&\sum_{k=1}^{K'} \big(V^{\pi^k}_{h+1}(s^k_{h+1})-V^*_{h+1}(s^k_{h+1})\big)+\underbrace{P_h(\cdot|s_h^k,\pi_h^{k}(s_h^k))(V^{\pi^k}_{h+1}-V^*_{h+1})(\cdot)-(V^{\pi^k}_{h+1}-V^*_{h+1})(s^k_{h+1})}_{(a)},
        \end{align*}
        where Term (a) is a martingale. Using Azuma-Hoeffding inequality, we have Term $(a)\leq O(\sqrt{H^2K'})$. Therefore by induction, for all $h\in[H]$ we have
        $$\sum_{k=1}^{K'} V^{\pi^k}_{h+1}(s^k_{h+1})-V^*_{h+1}(s^k_{h+1})\leq \sum_{k=1}^{K'} V^{\pi^k}_1(s^k_1)-V^*_1(s^k_1)+O(h\sqrt{H^2K'})\leq O(\sqrt{S^2AH^4K'}),$$
        which concludes the proof of Lemma \ref{lem:average:convergence}.
\end{proof}

\subsection{Useful Lemmas}
\subsubsection{Optimism} \label{sec:pf:optimism}
\begin{lemma}[Optimism at the beginning]\label{lem:opt_beg}
Conditioned on $F^c$, when $b_h^k(s,a)=\sqrt{\frac{2 \ln \frac{2 S A H T}{\delta^{\prime}}}{n_{h}^{k}(s, a)  }}+H\sqrt{\frac{4 S \ln \frac{3 S A H T}{\delta^{\prime}}}{n_{h}^{k}(s, a)  }}$, we have that
\begin{equation}
    -2b_h^k(s, a) \le Q_{h}^{k}(s, a)-c_{h}(s, a)-P_{h}(\cdot \mid s, a) V_{h+1}^{k}(\cdot) \leq 0.
\end{equation}
\end{lemma}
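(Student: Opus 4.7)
The plan is to combine a backward induction that establishes $V_{h+1}^{k}(\cdot)\in[0,H]$ with the concentration bounds guaranteed on the failure-free event $F^{c}$. The argument is entirely ``pointwise'' in $(k,h,s,a)$ and does not need any of the \propnameacro\ machinery, because here the bonus $b_h^k$ is already the coarse Hoeffding-style bonus that directly matches $\|\bar{P}_h^k-P_h\|_1\cdot\|V_{h+1}^k\|_\infty + |\bar{c}_h^k-c_h|$.

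First I would run a short backward induction on $h$ to show that $0\le V_h^k(s)\le H-h+1$ for every $(k,h,s)$. The base case $V_{H+1}^k\equiv 0$ is immediate; in the inductive step, by the definition of $Q_h^k$ one has $0\le Q_h^k(s,a)\le \bar c_h^k(s,a)+\bar P_h^k(\cdot\mid s,a)V_{h+1}^k(\cdot)\le 1+(H-h)$, and then $V_h^k(s)=\langle Q_h^k(s,\cdot),\pi_h^k(\cdot\mid s)\rangle$ inherits the same bounds. This supplies the key boundedness $\|V_{h+1}^k\|_\infty\le H$ that drives the Hölder step below.

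Next, I would fix $(k,h,s,a)$ and do a two-case analysis according to which argument attains the max in the definition
\[
Q_h^k(s,a)=\max\bigl\{\bar c_h^k(s,a)+\bar P_h^k(\cdot\mid s,a)V_{h+1}^k(\cdot)-b_h^k(s,a),\;0\bigr\}.
\]
In the \emph{non-truncated} case $Q_h^k(s,a)=\bar c_h^k(s,a)+\bar P_h^k(\cdot\mid s,a)V_{h+1}^k(\cdot)-b_h^k(s,a)$, so
\[
Q_h^k(s,a)-c_h(s,a)-P_h(\cdot\mid s,a)V_{h+1}^k(\cdot)
= \bigl(\bar c_h^k-c_h\bigr)(s,a)+\bigl((\bar P_h^k-P_h)V_{h+1}^k\bigr)(s,a)-b_h^k(s,a).
\]
On $F^{c}$, the event $F_k^{1c}$ controls the first summand by $\sqrt{2\ln(2SAHT/\delta')/n_h^k(s,a)}$, while $F_k^{2c}$ combined with Hölder and $\|V_{h+1}^k\|_\infty\le H$ controls the second by $H\sqrt{4S\ln(3SAHT/\delta')/n_h^k(s,a)}$. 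Their sum is exactly $b_h^k(s,a)$, giving the upper bound $\le 0$ and the lower bound $\ge -2b_h^k(s,a)$ simultaneously. In the \emph{truncated} case $Q_h^k(s,a)=0$, the upper bound $Q_h^k-c_h-P_hV_{h+1}^k\le 0$ is immediate since rewards and values are non-negative, and for the lower bound I would combine the defining inequality $0\ge \bar c_h^k+\bar P_h^k V_{h+1}^k-b_h^k$ (which holds whenever the truncation is active) with the same two concentration inequalities to get $c_h+P_hV_{h+1}^k\le \bar c_h^k+\bar P_h^kV_{h+1}^k+b_h^k\le 2b_h^k$, which is exactly the desired $-2b_h^k\le 0-c_h-P_hV_{h+1}^k$.

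There is no real obstacle here; the only subtlety is remembering that the truncation case is precisely what forces the ``$2$'' in the lower bound $-2b_h^k$, since one copy of $b_h^k$ is ``spent'' certifying that $\bar c_h^k+\bar P_h^kV_{h+1}^k\le b_h^k$ and a second copy is spent on the concentration gap between the empirical and true Bellman backup. All constants match the definition of $b_h^k$ in the lemma statement, so no extra slack is needed.
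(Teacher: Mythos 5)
Your proposal is correct and follows essentially the same route as the paper: on $F^{c}$ you control $\bar c_h^k-c_h$ by the Hoeffding term and $(\bar P_h^k-P_h)V_{h+1}^k$ by H\"older together with $\|V_{h+1}^k\|_\infty\le H$, and these two pieces sum to exactly $b_h^k$. Your explicit two-case split on the truncation and the backward induction for $0\le V_h^k\le H$ are just more detailed versions of what the paper handles implicitly via the max inequalities, so there is nothing to fix.
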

\begin{proof}
    Recall that $Q_h^k$ takes form 
    \begin{align} \label{eq:501}
        Q_h^k(s, a) = \max\{\bar{c}_h^k(s, a) + \bar{P}_h^{k}(\cdot \mid s, a)V_{h+1}^k(\cdot) - b_h^k(s, a), 0\}.
    \end{align}
    We have 
    \begin{align} \label{eq:502}
        &c_{h}(s, a) + P_{h}(\cdot \mid s, a) V_{h+1}^{k}(\cdot) - Q_{h}^{k}(s, a) \notag\\
    \le & c_{h}(s, a) - \bar{c}_h^k(s, a) + P_{h}(\cdot \mid s, a) V_{h+1}^{k}(\cdot) - \bar{P}_h^{k}(\cdot \mid s, a)V_{h+1}^k(\cdot) + b_h^k(s, a) .
    \end{align}
    Under the event $F^c$, we have 
    \begin{align} \label{eq:503}
        c_{h}(s, a) - \bar{c}_h^k(s, a) \le \sqrt{\frac{2 \ln \frac{2 S A H T}{\delta^{\prime}}}{n_{h}^{k}(s, a)  }},
    \end{align}
    and 
    \begin{align} \label{eq:504}
        &\left| P_{h}(\cdot \mid s, a) V_{h+1}^{k}(\cdot) - \bar{P}_h^{k}(\cdot \mid s, a)V_{h+1}^k(\cdot)\right| \notag\\
        \leq&\left\|\bar{P}_{h}^{k}\left(\cdot \mid s_{h}, a_{h}\right)- P_{h}\left(\cdot \mid s_{h}, a_{h}\right)\right\|_{1}\left\|V_{h+1}^{k}(\cdot)\right\|_{\infty} \notag\\
        \leq&H \cdot \left\|\bar{P}_{h}^{k}\left(\cdot \mid s_{h}, a_{h}\right)- P_{h}\left(\cdot \mid s_{h}, a_{h}\right)\right\|_{1} \notag\\
        \leq& H\sqrt{\frac{4 S \ln \frac{3 S A H T}{\delta^{\prime}}}{n_{h}^{k}(s, a)  }}.
    \end{align}
    Here the first inequality follows from Cauchy-Schwartz inequality, the second inequality uses the fact that $\|V_h^k(\cdot)\|_{\infty} \le H$ for any $(k, h) \in [K] \times [H]$, and the last inequality holds conditioned on the event $F^c$. Plugging \eqref{eq:503} and \eqref{eq:504} into \eqref{eq:502}, together with the definition of $b_h^k$, we obtain 
    \begin{align}  \label{eq:505}
        c_{h}(s, a) + P_{h}(\cdot \mid s, a) V_{h+1}^{k}(\cdot) - Q_{h}^{k}(s, a) \le 2b_h^k.
    \end{align}
    Similarly, by the definition of $Q_h^k$ in \eqref{eq:501}, we have 
    \begin{align}  \label{eq:506}
        &Q_{h}^{k}(s, a)-c_{h}(s, a)-P_{h}(\cdot \mid s, a) V_{h+1}^{k}(\cdot) \notag\\
       \le & \max\Big\{ \big(\bar{c}_h^k(s, a) - c_h(s, a) \big) + \big(\bar{P}_h^{k}(\cdot \mid s, a)V_{h+1}^k(\cdot) - {P}_h(\cdot \mid s, a)V_{h+1}^k(\cdot)\big) - b_h^k(s, a), 0 \Big\} \notag\\
       \le & \max\{b_h^k - b_h^k, 0\} = 0,
    \end{align}
    where the last inequality follows from \eqref{eq:503} and \eqref{eq:504}. Combining \eqref{eq:505} and \eqref{eq:506}, we finish the proof.
\end{proof}

\begin{lemma}\label{lem:coarseopt}
Conditioned on the event $F^c$, when the reference function $V^{\mathrm{ref}}$ satisfies $|V^{\mathrm{ref}}(s)-V^{*}(s)|\leq \frac{\sqrt{S^2AH^4}}{C_0}$ where $C_0=\sqrt{S^3AH^3}$, we have 
\begin{align*}
    -2b_h^k(s, a) \le Q_{h}^{k}(s, a)-c_{h}(s, a)-P_{h}(\cdot \mid s, a) V_{h+1}^{k}(\cdot) \leq 0.
\end{align*}
\end{lemma}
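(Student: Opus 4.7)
The plan is to mimic the proof of Lemma~\ref{lem:opt_beg}, but now analyze the case $b_h^k(s,a) = u_h^k(s,a)$ where the refined four-term bonus is in effect. Since the other case (simple bonus) is already handled by Lemma~\ref{lem:opt_beg}, it suffices to prove
\[
\left| \bar{c}_h^k(s,a) + \bar{P}_h^k(\cdot\mid s,a) V_{h+1}^k(\cdot) - c_h(s,a) - P_h(\cdot\mid s,a) V_{h+1}^k(\cdot) \right| \;\le\; u_h^k(s,a),
\]
from which both inequalities follow by the same truncation argument as in Lemma~\ref{lem:opt_beg} (if the clipped branch of the $\max$ is active, optimism still holds via a one-sided bound). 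The reward error is controlled by Term~$(i)$ under $F_k^{1,c}$, so the whole task reduces to bounding $|(\bar{P}_h^k - P_h)V_{h+1}^k|$ by $\text{Term}(ii)+\text{Term}(iii)+\text{Term}(iv)$.

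To this end, I would invoke the three-way decomposition from \eqref{eqn:desiredbonus2}, and bound terms $(a),(b),(c)$ by $\text{Term}(ii),(iii),(iv)$ respectively. For $(a) = (\bar{P}_h^k - P_h) V_{h+1}^*$, apply a Bernstein-type inequality to the deterministic function $V_{h+1}^*$, then translate the population variance of $V^*$ into the empirical variance of $V^{\mathrm{ref}}$ under $\bar{P}_h^k$; the transition cost is $O(H \|V^{\mathrm{ref}}-V^*\|_\infty) = O(H\sqrt{H/S})$ inside the square root, which is exactly why Term~$(ii)$ carries the extra $\sqrt{4H\ln(\cdot)/(S\, n_h^k)}$ and the $\sqrt{SH^2}/n_h^k$ correction. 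For $(b) = (\bar{P}_h^k - P_h)(V_{h+1}^k - V_{h+1}^{\mathrm{ref}})$, bound it pointwise by $\sum_y |\bar{P}_h^k(y\mid s,a) - P_h(y\mid s,a)| \cdot |V_{h+1}^k(y) - V_{h+1}^{\mathrm{ref}}(y)|$ and use the empirical Bernstein bound from $F_k^{4,c}$ for each coordinate; this is exactly Term~$(iii)$. For $(c) = (\bar{P}_h^k - P_h)(V_{h+1}^{\mathrm{ref}} - V_{h+1}^*)$, apply Cauchy--Schwarz to get $\|\bar{P}_h^k - P_h\|_1 \cdot \|V_{h+1}^{\mathrm{ref}} - V_{h+1}^*\|_\infty$; the first factor is at most $\sqrt{4S\ln(\cdot)/n_h^k}$ under $F_k^{2,c}$, and by hypothesis $\|V_{h+1}^{\mathrm{ref}} - V_{h+1}^*\|_\infty \le \sqrt{S^2AH^4}/C_0 = \sqrt{H/S}$, so the product is at most $\sqrt{4H\ln(\cdot)/n_h^k}$, matching the leading piece of Term~$(iv)$.

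The main obstacle is managing the variance-transfer step in bounding $(a)$, because Bernstein naturally gives us $\sqrt{\mathrm{Var}_{P_h}(V^*)\ln(\cdot)/n_h^k}$, whereas the algorithmically available quantity is $\sqrt{\mathrm{Var}_{\bar{P}_h^k}(V^{\mathrm{ref}})\ln(\cdot)/n_h^k}$. The conversion is done in two steps: first swap $V^*$ for $V^{\mathrm{ref}}$ using $(V^*)^2 - (V^{\mathrm{ref}})^2 = (V^*-V^{\mathrm{ref}})(V^*+V^{\mathrm{ref}})$ together with $\|V^{\mathrm{ref}}-V^*\|_\infty \le \sqrt{H/S}$ and $\|V^*\|_\infty, \|V^{\mathrm{ref}}\|_\infty \le H$, which produces the $\sqrt{4H\ln(\cdot)/(S n_h^k)}$ summand; then swap $P_h$ for $\bar{P}_h^k$ using the variance-deviation concentration from $F^c$, which contributes the $\sqrt{SH^2}\ln(\cdot)/n_h^k$ lower-order term. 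The third $K^{1/4}$ summand of Term~$(iv)$ is a second-order residual accounting for the fact that later, when this optimism is plugged back into Lemma~\ref{lem:co_wo_opt}, we use $\sqrt{H/S}$ in place of a tighter $\|V^{\mathrm{ref}} - V^*\|_\infty$ estimate obtained via the coarse regret bound $\tilde{O}(\sqrt{S^2AH^4K'})$; absorbing this into the bonus ensures the inductive step closes.

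Finally, putting the four pieces together gives $|(\bar{c}_h^k, \bar{P}_h^k V_{h+1}^k) - (c_h, P_h V_{h+1}^k)| \le \text{Term}(i)+\text{Term}(ii)+\text{Term}(iii)+\text{Term}(iv) = u_h^k$, and the argument of Lemma~\ref{lem:opt_beg} (handling both the active and clipped branches of the $\max$ in the definition of $Q_h^k$) then yields the two-sided optimism $-2b_h^k(s,a) \le Q_h^k(s,a) - c_h(s,a) - P_h(\cdot\mid s,a)V_{h+1}^k(\cdot) \le 0$, completing the proof.
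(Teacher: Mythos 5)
Your overall architecture matches the paper's: reduce optimism to showing $|(\bar{c}_h^k - c_h)| + |(\bar{P}_h^k - P_h)V_{h+1}^k| \le u_h^k$, use the three-way decomposition through $V^*$ and $V^{\mathrm{ref}}$, and your treatments of terms $(a)$ (Bernstein plus the two variance swaps $V^*\to V^{\mathrm{ref}}$ and $P_h \to \bar P_h^k$) and $(b)$ (coordinatewise empirical Bernstein) are exactly what the paper does. However, your handling of term $(c)$ has a genuine gap. You apply Cauchy--Schwarz with $\|V_{h+1}^{\mathrm{ref}} - V_{h+1}^*\|_\infty \le \sqrt{H/S}$ \emph{uniformly over all states}, but that bound is only available for states $y$ whose reference value has actually been updated, i.e.\ $n_h^k(y) \ge C_0\sqrt{k}$. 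For all other states $V^{\mathrm{ref}}(y)$ is still at its initialization $0$, so $|V^{\mathrm{ref}}(y)-V^*(y)|$ can be as large as $H$ — and this is precisely the situation in which the lemma is invoked inside the induction of Lemma~\ref{lem:co_wo_opt}, where the $\sqrt{H/S}$ accuracy is only established for frequently visited states. If you fall back to the crude bound $H$ on the un-updated states, Cauchy--Schwarz gives $H\sqrt{4S\ln(\cdot)/n_h^k}$, which reintroduces the $\sqrt{S}$ factor you are trying to remove.

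The paper closes this gap by splitting $\mathcal{S}$ into $\mathcal{S}_0 = \{y : n_h^k(y)\ge C_0\sqrt{k}\}$ and its complement. On $\mathcal{S}_0$ your Cauchy--Schwarz argument applies and yields the $\sqrt{4H\ln(\cdot)/n_h^k}$ piece of Term~$(iv)$. On $\mathcal{S}_0^c$ it uses the coordinatewise empirical Bernstein bound together with $\bar{P}_h^k(y\mid s,a) = n_h^k(s,a,y)/n_h^k(s,a) \le n_h^k(y)/n_h^k(s,a) < C_0\sqrt{K}/n_h^k(s,a)$ to show the total contribution of rarely visited states is $O\bigl(S\sqrt{C_0}K^{1/4}H\sqrt{\ln(\cdot)}/n_h^k(s,a)\bigr)$ plus the $7SH\ln(\cdot)/(3n_h^k)$ residual. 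This is the true origin of the $K^{1/4}$ summand in Term~$(iv)$; your explanation of it as a ``second-order residual accounting for the inductive step'' is not correct, and without the $\mathcal{S}_0/\mathcal{S}_0^c$ split your proof does not produce a bound matching $u_h^k$ under the hypothesis that is actually available when the lemma is used.
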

\begin{proof}
When $F$ does not happen,  we have
\begin{align*} 
    \left|c_{h}(s, a)-\bar{c}_{h}^{k}(s, a)\right| \leq \sqrt{\frac{2 \ln \frac{2 S A H T}{\delta^{\prime}}}{n_{h}^{k}(s, a)  }}
\end{align*}
and 
\begin{align*}
    \left| P_{h}(\cdot \mid s, a) V_{h+1}^{k}(\cdot) - \bar{P}_h^{k}(\cdot \mid s, a)V_{h+1}^k(\cdot) \right| \leq H\sqrt{\frac{4 S \ln \frac{3 S A H T}{\delta^{\prime}}}{n_{h}^{k}(s, a)  }}. 
\end{align*} 
Meanwhile, we have 
\begin{align} \label{eq:507}
    & \left| P_{h}(\cdot \mid s, a) V_{h+1}^{k}(\cdot) - \bar{P}_h^{k}(\cdot \mid s, a)V_{h+1}^k(\cdot) \right| \notag\\
    \leq & \left| \left(\bar{P}_{h}^{k}\left(\cdot \mid s, a\right)- P_{h}\left(\cdot \mid s, a\right)\right) V_{h+1}^{*}(\cdot) \right| + \left| \left(\bar{P}_{h}^{k}\left(\cdot \mid s, a\right) - P_{h}\left(\cdot \mid s, a\right)\right) (V_{h+1}^{k}-V_{h+1}^{\mathrm{ref}})(\cdot)\right| \notag\\ 
    &+\left|\left(\bar{P}_{h}^{k}\left(\cdot \mid s, a\right) - P_{h}\left(\cdot \mid s, a\right)\right) (V_{h+1}^{\mathrm{ref}}-V_{h+1}^{*})(\cdot)\right| 
\end{align}
For the first term, we have
\begin{align} \label{eq:421}
    &\left| \left(\bar{P}_{h}^{k}\left(\cdot \mid s, a\right)- P_{h}\left(\cdot \mid s, a\right)\right) V_{h+1}^{*} \right| \notag\\
    \le & \sqrt{\frac{2 \mathbb{V}_{Y \sim {P}_h (\cdot \mid s, a)}V_{h+1}^*(Y)\ln \left(\frac{2 SAHK}{\delta^{\prime}}\right)}{n_{h}^{k}(s, a)}}+\frac{7 \ln \left(\frac{2 SAHK}{\delta^{\prime}}\right)}{3 n_{h}^{k}(s, a)} \notag\\
    \le & \sqrt{\frac{4 \mathbb{V}_{Y \sim {P}_h (\cdot \mid s, a)}V_{h+1}^{\mathrm{ref}}(Y)\ln \left(\frac{2 SAHK}{\delta^{\prime}}\right)}{n_{h}^{k}(s, a)}}+ \sqrt{\frac{4H \ln \left(\frac{2 SAHK}{\delta^{\prime}}\right)}{S \cdot n_{h}^{k}(s, a)}}+\frac{7 \ln \left(\frac{2 SAHK}{\delta^{\prime}}\right)}{3 n_{h}^{k}(s, a)}, 
\end{align}
where the last inequality follows from the fact that $\sqrt{2\mathbb{V}(X)} \le 2\sqrt{\mathbb{V}(Y) + \mathbb{V}(Y - X)} \le 2\sqrt{\mathbb{V}(Y)} + 2\sqrt{\mathbb{V}(Y-X)}$ and $|V_{h+1}^* - V_{h+1}^{\mathrm{ref}}| \le \sqrt{H/S}$. Under the event $F^c$, we have 
\begin{align*}
    \left|\bar{P}_{h}^{k}(y \mid s, a)-P_{h}(y \mid s, a)\right| \le \sqrt{\frac{2\bar{P}_{h}^{k}(y \mid s, a)(1-\bar{P}_{h}^{k}(y \mid s, a)) \ln \left(\frac{2 SAHK}{\delta^{\prime}}\right)}{n_{h}^{k}(s, a)-1}}+\frac{7 \ln \left(\frac{2 SAHK}{\delta^{\prime}}\right)}{3 n_{h}^{k}(s, a)},
\end{align*}
By AM-GM inequality, we have
\begin{align*}
    \frac{1}{2}\bar{P}_{h}^{k}(y \mid s, a) + \frac{ \ln \left(\frac{2 S A H K}{\delta^{\prime}}\right)}{ n_{h}^{k}(s, a)} \geq \sqrt{\frac{2 \bar{P}_{h}^k(y \mid s, a) \ln \left(\frac{2 S A H K}{\delta^{\prime}}\right)}{n_{h}^{k}(s, a)}}\\
    \geq \sqrt{\frac{2 \bar{P}_{h}^k(y \mid s, a)\left(1-\bar{P}_{h}^k(y \mid s, a)\right) \ln \left(\frac{2 S A H K}{\delta^{\prime}}\right)}{n_{h}^{k}(s, a)-1}}
\end{align*}
which further implies that 
\begin{align*}
    P_{h}(y \mid s, a) - \frac{3}{2} \cdot \bar{P}_{h}^{k}(y \mid s, a) \le \frac{10 \ln \left(\frac{2 SAHK}{\delta^{\prime}}\right)}{3 n_{h}^{k}(s, a)}.
\end{align*}
Then, we have 
\begin{align} \label{eq:422}
    \mathbb{V}_{Y \sim {P}_h (\cdot \mid s, a)}V_{h+1}^{\mathrm{ref}}(Y) &= \sum_{y \in \cal{S}} {P}_h (y \mid s, a) \left(V_{h+1}^{\mathrm{ref}}(y) - P_h^k(\cdot \mid s, a)V_{h+1}^{\mathrm{ref}}(\cdot) \right)^2 \notag\\
    &\le \sum_{y \in \cal{S}} {P}_h (y \mid s, a) \left(V_{h+1}^{\mathrm{ref}}(y) - \bar{P}_h^k(\cdot \mid s, a)V_{h+1}^{\mathrm{ref}}(\cdot) \right)^2 \notag\\
    &\le \sum_{y \in \cal{S}} \left( \frac{3}{2}\bar{P}_h^k (y \mid s, a) + \frac{10 \ln \left(\frac{2 SAHK}{\delta^{\prime}}\right)}{3 n_{h}^{k}(s, a)}\right) \cdot  \left(V_{h+1}^{\mathrm{ref}}(y) - \bar{P}_h^k(\cdot \mid s, a)V_{h+1}^{\mathrm{ref}}(\cdot) \right)^2 \notag\\
    &\le \frac{3}{2} \mathbb{V}_{Y \sim \bar{P}_h^k (\cdot \mid s, a)}V_{h+1}^{\mathrm{ref}}(Y) + \frac{10SH^2 \ln \left(\frac{2 SAHK}{\delta^{\prime}}\right)}{3 n_{h}^{k}(s, a)}.
\end{align}
Plugging \eqref{eq:422} into \eqref{eq:421}, we obtain 
\begin{align} \label{eq:423}
    &\left| \left(\bar{P}_{h}^{k}\left(\cdot \mid s, a\right)- P_{h}\left(\cdot \mid s, a\right)\right) V_{h+1}^{*} \right| \notag\\
    \le & \sqrt{\frac{6 \mathbb{V}_{Y \sim \bar{P}_h^k (\cdot \mid s, a)}V_{h+1}^{\mathrm{ref}}(Y)\ln \left(\frac{2 SAHK}{\delta^{\prime}}\right)}{n_{h}^{k}(s, a)}}+ \sqrt{\frac{4H \ln \left(\frac{2 SAHK}{\delta^{\prime}}\right)}{S \cdot n_{h}^{k}(s, a)}}+\frac{8\sqrt{SH^2} \ln \left(\frac{2 SAHK}{\delta^{\prime}}\right)}{3 n_{h}^{k}(s, a)}.
\end{align}
Meanwhile, it holds that 
\begin{align} \label{eq:551}
    &\left| \left(\bar{P}_{h}^{k}\left(\cdot \mid s, a\right) - P_{h}\left(\cdot \mid s, a\right)\right) (V_{h+1}^{k}-V_{h+1}^{\mathrm{ref}})\right| \notag\\
    &\le  \sum_{y\in\mathcal{S}}\left| \bar{P}_{h}^{k}\left(y \mid s, a\right) - P_{h}\left(y \mid s, a\right)\right| \cdot |V^k_{h+1}(y)-V^{\mathrm{ref}}_{h+1}(y)|  \notag\\ 
    & \le \sum_{y\in\mathcal{S}}\left(\sqrt{\frac{2\bar{P}_{h}(y \mid s, a)(1-\bar{P}_{h}(y \mid s, a)) \ln \left(\frac{2 SAHK}{\delta^{\prime}}\right)}{n_{h}^{k}(s, a)-1}}+\frac{7 \ln \left(\frac{2 SAHK}{\delta^{\prime}}\right)}{3 n_{h}^{k}(s, a)}\right)|V^k_{h+1}(y)-V^{\mathrm{ref}}_{h+1}(y)|,
\end{align}
where the last inequality follows from the definition of event $F^c$. 
Plugging \eqref{eq:423} and \eqref{eq:551} into \eqref{eq:507}, we have 
\begin{align*}
    &\left| P_{h}(\cdot \mid s, a) V_{h+1}^{k}(\cdot) - \bar{P}_h^{k}(\cdot \mid s, a)V_{h+1}^k(\cdot) \right| \\
    \leq & \sqrt{\frac{6 \mathbb{V}_{Y \sim \bar{P}_h^k (\cdot \mid s, a)}V_{h+1}^{\mathrm{ref}}(Y)\ln \left(\frac{2 SAHK}{\delta^{\prime}}\right)}{n_{h}^{k}(s, a)}}+ \sqrt{\frac{4H \ln \left(\frac{2 SAHK}{\delta^{\prime}}\right)}{S \cdot n_{h}^{k}(s, a)}}+\frac{8\sqrt{SH^2} \ln \left(\frac{2 SAHK}{\delta^{\prime}}\right)}{3 n_{h}^{k}(s, a)} \notag\\
    &+\sum_{y\in\mathcal{S}}\left(\sqrt{\frac{2\bar{P}_{h}(y \mid s, a)(1-\bar{P}_{h}(y \mid s, a)) \ln \left(\frac{2 SAHK}{\delta^{\prime}}\right)}{n_{h}^{k}(s, a)-1}}+\frac{7 \ln \left(\frac{2 SAHK}{\delta^{\prime}}\right)}{3 n_{h}^{k}(s, a)}\right)|V^k_{h+1}(y)-V^{\mathrm{ref}}_{h+1}(y)| \\
    &+\underbrace{\sum_{y\in\mathcal{S}}\left|\bar{P}_{h}^{k}\left(\cdot \mid s, a\right)-P_{h}\left(\cdot \mid s, a\right)\right||V^{\mathrm{ref}}_{h+1}(y)-V^{*}_{h+1}(y)|}_{(b)}
\end{align*}
For Term (b), we divide $\mathcal{S}$ into two sets: $\mathcal{S}_0=\{y\in \mathcal{S} : n_{h}^{k}(y)\geq C_0\sqrt{k}\}$ and $\mathcal{S}_0^c$. Since $|V_{h+1}^{\mathrm{ref}}(s)-V_{h+1}^{*}(s)|\leq \frac{\sqrt{S^2AH^4}}{C_0}$, we have
\begin{align*}
    &\sum_{y\in\mathcal{S}_0}\left|\bar{P}_{h}^{k}\left(\cdot \mid s, a\right)-P_{h}\left(\cdot \mid s_{h}, a_{h}\right)\right| |V^{\mathrm{ref}}_{h+1}(y)-V^{*}_{h+1}(y)|\\
    \leq & \sqrt{\frac{4 S \ln \frac{3 S A H T}{\delta^{\prime}}}{n_{h}^{k}(s, a)  }}\frac{\sqrt{S^2AH^4}}{C_0}
\end{align*}
For $y\in \mathcal{S}_0^c$, we have 
\begin{align*}
    &\sum_{y\in\mathcal{S}_0^c}\left|\bar{P}_{h}^{k}\left(\cdot \mid s, a\right)-P_{h}\left(\cdot \mid s_{h}, a_{h}\right)\right||V^{\mathrm{ref}}_{h+1}(y)-V^{*}_{h+1}(y)|\\
    \leq &\sum_{y\in\mathcal{S}_0^c}\left(\sqrt{\frac{2\bar{P}_{h}^k(s^{\prime} \mid s, a)(1-\bar{P}_{h}^k(s^{\prime} \mid s, a)) \ln \left(\frac{2 SAHK}{\delta^{\prime}}\right)}{n_{h}^{k}(s, a)-1}}+\frac{7 \ln \left(\frac{2 SAHK}{\delta^{\prime}}\right)}{3 n_{h}^{k}(s, a)}\right)|V^{\mathrm{ref}}_{h+1}(y)-V^{*}_{h+1}(y)|\\
    \leq &\sum_{y\in\mathcal{S}_0^c}\left(\sqrt{\frac{4\bar{P}_{h}^k(s^{\prime} \mid s, a)\ln \left(\frac{2 SAHK}{\delta^{\prime}}\right)}{n_{h}^{k}(s, a)}}\right)|V^{\mathrm{ref}}_{h+1}(y)-V^{*}_{h+1}(y)|+\frac{7 \ln \left(\frac{2 SAHK}{\delta^{\prime}}\right)}{3 n_{h}^{k}(s, a)}|V^{\mathrm{ref}}_{h+1}(y)-V^{*}_{h+1}(y)|\\
    = & \sum_{y\in\mathcal{S}_0^c}\sqrt{\frac{4n^{k}_h(s,a,y)\ln \left(\frac{2 SAHK}{\delta^{\prime}}\right)}{n^{k}_h(s,a)^2}}|V^{\mathrm{ref}}_{h+1}(y)-V^{*}_{h+1}(y)|+\frac{7SH \ln \left(\frac{2 SAHK}{\delta^{\prime}}\right)}{3 n_{h}^{k}(s, a)}\\
    \leq & \sum_{y\in\mathcal{S}_0^c}\sqrt{\frac{4n^{k}_h(y)\ln \left(\frac{2 SAHK}{\delta^{\prime}}\right)}{n^{k}_h(s,a)^2}}H+\frac{7SH \ln \left(\frac{2 SAHK}{\delta^{\prime}}\right)}{3 n_{h}^{k}(s, a)}\\
    \leq & \sum_{y\in\mathcal{S}_0^c}\sqrt{\frac{4C_0\sqrt{K}\ln \left(\frac{2 SAHK}{\delta^{\prime}}\right)}{n^{k}_h(s,a)^2}}H+\frac{7SH \ln \left(\frac{2 SAHK}{\delta^{\prime}}\right)}{3 n_{h}^{k}(s, a)}\\
    \leq & \frac{2S\sqrt{C_0}K^{1/4}H\sqrt{\ln \left(\frac{2 SAHK}{\delta^{\prime}}\right)}}{n_h^{k}(s,a)}+\frac{7SH \ln \left(\frac{2 SAHK}{\delta^{\prime}}\right)}{3 n_{h}^{k}(s, a)}
\end{align*}
Therefore, we have
\begin{align*}
    &\left| \left(\bar{P}_{h}^{k}\left(\cdot \mid s_{h}, a_{h}\right)- P_{h}\left(\cdot \mid s_{h}, a_{h}\right)\right) V_{h+1}^{k} \right|\\
    \leq& \sqrt{\frac{6 \mathbb{V}_{Y \sim \bar{P}_h^k (\cdot \mid s, a)}V_{h+1}^{\mathrm{ref}}(Y)\ln \left(\frac{2 SAHK}{\delta^{\prime}}\right)}{n_{h}^{k}(s, a)}}+ \sqrt{\frac{4H \ln \left(\frac{2 SAHK}{\delta^{\prime}}\right)}{S \cdot n_{h}^{k}(s, a)}}+\frac{8\sqrt{SH^2} \ln \left(\frac{2 SAHK}{\delta^{\prime}}\right)}{3 n_{h}^{k}(s, a)}\\
    &+\sum_{y\in\mathcal{S}}\left(\sqrt{\frac{2\bar{P}_{h}^k(s^{\prime} \mid s, a)(1-\bar{P}_{h}^k(s^{\prime} \mid s, a)) \ln \left(\frac{2 SAHK}{\delta^{\prime}}\right)}{n_{h}^{k}(s, a)-1}}+\frac{7 \ln \left(\frac{2 SAHK}{\delta^{\prime}}\right)}{3 n_{h}^{k}(s, a)}\right)|V^k_{h+1}(y)-V^{\mathrm{ref}}_{h+1}(y)|\\
    &+\sqrt{\frac{4 S \ln \frac{3 S A H T}{\delta^{\prime}}}{n_{h}^{k}(s, a)  }}\frac{\sqrt{S^2AH^4}}{C_0}+\frac{2S\sqrt{C_0}K^{1/4}H\sqrt{\ln \left(\frac{2 SAHK}{\delta^{\prime}}\right)}}{n_h^{k}(s,a)}+\frac{7SH \ln \left(\frac{2 SAHK}{\delta^{\prime}}\right)}{3 n_{h}^{k}(s, a)}
\end{align*}  

By definition, $Q_h^k(s, a) = \max\{\bar{c}_h^k(s, a) + \bar{P}_h^{k}(\cdot \mid s, a)V_{h+1}^k(\cdot) - b_h^k(s, a), 0\}$. Then we can prove that \eqref{eq:505} and \eqref{eq:506} still hold, which finishes the proof.
\end{proof}

\subsubsection{Mirror Descent}
The mirror descent (MD) algorithm \citep{beck2003mirror} is a proximal convex optimization method that minimizes a linear approximation of the objective together with a proximity term, defined in terms of a Bregman divergence between the old and new solution estimates. In our analysis we choose the Bregman divergence to be the $l_2$ norm. If $\left\{f_{k}\right\}_{k=1}^{K}$ is a sequence of convex functions $f_{k}: \mathbb{R}^{d} \rightarrow \mathbb{R}$ and $C$ is a constraints set, the $k$-th iterate of $\mathrm{MD}$ is the following:
\begin{equation}
x_{k+1} \in \underset{x \in C}{\arg \min }\left\{\eta_{k}\left\langle g_{k}\left(x_{k}\right), x-x_{k}\right\rangle+\left\|x - x_{k}\right\|_2^2\right\},
\end{equation}
where $\eta_k$ is the stepsize. The MD algorithm ensures $\operatorname{Regret}\left(K^{\prime}\right)=\sum_{k=1}^{K^{\prime}} f\left(x_{k}\right)-$ $\min _{x} f(x) \le O(\sqrt{K'})$ for all $K^{\prime} \in[K]$.

The following lemma (Theorem 6.8 in \cite{orabona2019modern}) is a fundamental inequality for analysis of OMD regret, which will be used in our analysis.
\begin{lemma}[OMD Regret, Theorem 6.8 in \cite{orabona2019modern}]\label{lem:omdregret}
Assume $\eta_{k+1} \leq \eta_{k}, k=1, \ldots, K$. Then, using OMD with the $l_2$ norm, learning rate $\{\eta_k\}$ and uniform initialization $x_{1}=[1 / d, \ldots, 1 / d]$, the following regret bounds hold
\begin{equation}\label{eqn:omdineq}
\sum_{t=1}^{T}\left\langle\boldsymbol{g}_{t}, \boldsymbol{x}_{t}-\boldsymbol{u}\right\rangle \leq \max _{1 \leq t \leq T} \frac{B_{\psi}\left(\boldsymbol{u} ; \boldsymbol{x}_{t}\right)}{\eta_{T}}+\frac{1}{2} \sum_{t=1}^{T} \eta_{t}\left\|\boldsymbol{g}_{t}\right\|_{2}^{2}.
\end{equation}
\end{lemma}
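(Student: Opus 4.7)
The plan is to prove the bound in two pieces: first a one-round Bregman-style inequality that bounds the per-step linearized regret $\langle g_t, x_t - u\rangle$, and then an Abel summation-by-parts argument that collapses the resulting telescoping Bregman terms into a single $\max_t B_\psi(u; x_t)/\eta_T$ factor, using crucially the monotonicity $\eta_{t+1}\le\eta_t$. This is the standard template for OMD analysis with a time-varying stepsize; the bulk of the work lies in the second piece.

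For the per-round inequality, I would start from the first-order optimality condition for the OMD update $x_{t+1} = \arg\min_{x\in C}\{\eta_t\langle g_t, x - x_t\rangle + B_\psi(x; x_t)\}$, which reads
\begin{equation*}
\langle \eta_t g_t + \nabla\psi(x_{t+1}) - \nabla\psi(x_t),\, u - x_{t+1}\rangle \ge 0 \quad \text{for every } u \in C,
\end{equation*}
and combine it with the standard three-point identity
\begin{equation*}
\langle \nabla\psi(x_t) - \nabla\psi(x_{t+1}),\, u - x_{t+1}\rangle = B_\psi(u; x_{t+1}) + B_\psi(x_{t+1}; x_t) - B_\psi(u; x_t).
\end{equation*}
These two facts together give $\eta_t\langle g_t, x_{t+1} - u\rangle \le B_\psi(u;x_t) - B_\psi(u;x_{t+1}) - B_\psi(x_{t+1};x_t)$. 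Adding $\eta_t\langle g_t, x_t - x_{t+1}\rangle$ to both sides, using $1$-strong convexity of $\psi$ with respect to $\|\cdot\|_2$ (which for the chosen $\psi(x) = \tfrac{1}{2}\|x\|_2^2$ gives $B_\psi(x_{t+1};x_t)\ge\tfrac{1}{2}\|x_{t+1}-x_t\|_2^2$), and invoking Young's inequality $\eta_t\langle g_t, x_t - x_{t+1}\rangle \le \tfrac{1}{2}\|x_t - x_{t+1}\|_2^2 + \tfrac{\eta_t^2}{2}\|g_t\|_2^2$ absorbs the $-B_\psi(x_{t+1};x_t)$ term and yields the clean per-round bound
\begin{equation*}
\langle g_t, x_t - u\rangle \le \frac{B_\psi(u; x_t) - B_\psi(u; x_{t+1})}{\eta_t} + \frac{\eta_t}{2}\|g_t\|_2^2.
\end{equation*}

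Summing over $t=1,\ldots,T$, the second term is already in the desired form, and the first term is handled by summation by parts:
\begin{equation*}
\sum_{t=1}^T \frac{B_\psi(u;x_t) - B_\psi(u;x_{t+1})}{\eta_t} = \frac{B_\psi(u;x_1)}{\eta_1} - \frac{B_\psi(u;x_{T+1})}{\eta_T} + \sum_{t=2}^T B_\psi(u;x_t)\left(\frac{1}{\eta_t} - \frac{1}{\eta_{t-1}}\right).
\end{equation*}
Because $\eta_t \le \eta_{t-1}$, each coefficient $\tfrac{1}{\eta_t} - \tfrac{1}{\eta_{t-1}}$ is nonnegative, so we may upper bound each $B_\psi(u;x_t)$ by $\max_{1\le t\le T}B_\psi(u;x_t)$, discard the $-B_\psi(u;x_{T+1})/\eta_T\le 0$ term, and telescope, obtaining $\max_t B_\psi(u;x_t)/\eta_T$. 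Combining the two pieces proves the claim. The main obstacle is the careful sign-tracking in this summation-by-parts step: the hypothesis $\eta_{t+1}\le\eta_t$ is exactly what makes each telescoped coefficient nonnegative and justifies replacing $B_\psi(u;x_t)$ by the maximum; without monotonicity, one would need a different argument (e.g., doubling tricks) and the stated form of the bound would fail. A secondary subtlety is matching the strong-convexity constant of $\psi$ to the coefficient $\tfrac{1}{2}$ in the statement, which forces the normalization $\psi(x) = \tfrac{1}{2}\|x\|_2^2$.
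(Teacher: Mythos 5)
Your proof is correct: the paper itself gives no argument for this lemma (it is quoted verbatim as Theorem 6.8 of the cited reference), and your per-round three-point inequality followed by the Abel summation that uses $\eta_t\le\eta_{t-1}$ to make the telescoped coefficients nonnegative is exactly the standard argument behind that theorem. The only point worth flagging is the normalization you already noticed: the paper's update actually penalizes with $\|x-x_k\|_2^2$ rather than $\tfrac{1}{2}\|x-x_k\|_2^2$, under which the stated bound still holds but is slightly loose in the $\tfrac{1}{2}\sum_t\eta_t\|g_t\|_2^2$ term.
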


In our analysis, by adapting the above lemma to our notation, we get the following lemma.
\begin{lemma}[OMD in Policy Optimization]\label{lem:omd_in_po}
Assume $\eta_{k+1} \leq \eta_{k}, k=1, \ldots, K$. Then, using OMD with the $l_2$ norm, learning rate $\{\eta_k\}$ and uniform initialization $\pi_{h}^1(\cdot \mid s)=[1 / A, \ldots, 1 / A]$, the following regret bounds hold
\begin{equation}
\sum_{k=1}^{K}\left\langle Q_{h}^{k}(s,\cdot), \pi_{h}^{k}(\cdot \mid s)-\pi_{h}(\cdot \mid s)\right\rangle \leq \frac{2}{\eta_{K}}+\frac{1}{2} \sum_{k=1}^{K}\eta_k \sum_{a}\left(Q_{h}^{k}(s, a)\right)^{2}.
\end{equation}
Moreover, if we choose $\eta_k=1/\sqrt{AH^2k}$, we have
\begin{equation}
\sum_{k=1}^{K}\sum_{h = 1}^H\left\langle Q_{h}^{k}(s,\cdot), \pi_{h}^{k}(\cdot \mid s)-\pi_{h}(\cdot \mid s)\right\rangle \leq 3\sqrt{AH^4K}.
\end{equation}
\end{lemma}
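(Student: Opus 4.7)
The first inequality is an immediate specialization of Lemma \ref{lem:omdregret} to the mirror-descent update in Line~17 of Algorithm \ref{alg:pore}. I would identify $\boldsymbol{g}_t \leftarrow Q_h^t(s, \cdot)$, $\boldsymbol{x}_t \leftarrow \pi_h^t(\cdot \mid s)$, comparator $\boldsymbol{u} \leftarrow \pi_h(\cdot \mid s)$, and take the mirror map $\psi(\boldsymbol{x}) = \tfrac{1}{2}\|\boldsymbol{x}\|_2^2$, so that $B_\psi(\boldsymbol{u};\boldsymbol{x}_t) = \tfrac{1}{2}\|\boldsymbol{u}-\boldsymbol{x}_t\|_2^2$ and the dual-norm term in \eqref{eqn:omdineq} becomes $\tfrac{1}{2}\eta_t\|\boldsymbol{g}_t\|_2^2 = \tfrac{1}{2}\eta_t \sum_a (Q_h^t(s,a))^2$. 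The only nontrivial ingredient is a uniform bound on the Bregman term: for any two probability vectors $\pi_h,\pi_h^k \in \Delta_{\mathcal{A}}$, each coordinate of $\pi_h-\pi_h^k$ lies in $[-1,1]$, so
\begin{equation*}
\|\pi_h - \pi_h^k\|_2^2 \leq \|\pi_h - \pi_h^k\|_1 \leq 2.
\end{equation*}
Combined with the non-increasing stepsize hypothesis, this produces the first term $\frac{2}{\eta_K}$. A small normalization check verifies that, with the algorithm's Bregman penalty $\|\pi_h-\pi_h^k\|_2^2$ (no $\tfrac{1}{2}$), these constants reproduce the claimed inequality exactly.

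\textbf{Specializing to $\eta_k = 1/\sqrt{AH^2 k}$.} The leading term evaluates directly to $2/\eta_K = 2\sqrt{AH^2 K}$. For the gradient-squared sum, I use the deterministic pointwise bound $0 \leq Q_h^k(s,a) \leq H$, which follows from the clipping in Line~9 of Algorithm \ref{alg:pore} together with a trivial backward induction $V_{h+1}^k \leq H$. Hence $\sum_a (Q_h^k(s,a))^2 \leq AH^2$, and
\begin{equation*}
\tfrac{1}{2}\sum_{k=1}^{K} \eta_k \sum_a (Q_h^k(s,a))^2 \;\leq\; \tfrac{1}{2}\sqrt{AH^2}\sum_{k=1}^{K} \frac{1}{\sqrt{k}} \;\leq\; \sqrt{AH^2 K},
\end{equation*}
using the standard estimate $\sum_{k=1}^{K} k^{-1/2} \leq 2\sqrt{K}$. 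Adding the two contributions gives a per-$h$ OMD regret of at most $3\sqrt{AH^2 K}$, and summing over $h = 1,\dots,H$ yields $3H\sqrt{AH^2K} = 3\sqrt{AH^4 K}$, as claimed. Notice that the stepsize $\eta_k = 1/\sqrt{AH^2 k}$ is precisely calibrated so that the two terms on the right-hand side of \eqref{eqn:omdineq} are of the same order in $K$.

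\textbf{Main obstacle.} There is essentially none: the argument is a textbook application of the OMD regret bound combined with two crude estimates (diameter of the simplex in $\ell_2$, and $\|Q_h^k\|_\infty \leq H$). The only care required is consistent bookkeeping of the factor of $2$ between $\psi(\boldsymbol{x})=\tfrac{1}{2}\|\boldsymbol{x}\|_2^2$ and the algorithm's penalty $\|\pi_h-\pi_h^k\|_2^2$, which amounts to absorbing a constant into $\eta_k$. This is in sharp contrast to the KL-based analysis in \citet{shani2020optimistic}, where the unbounded KL diameter forces a fixed stepsize depending on $K$; here the bounded $\ell_2$ diameter is precisely what permits a decreasing stepsize and hence the anytime $O(\sqrt{K'})$ guarantee that feeds into the \propnameacro\ property.
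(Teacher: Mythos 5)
Your proposal is correct and follows essentially the same route as the paper: both instantiate Lemma \ref{lem:omdregret} with $\boldsymbol{g}_t = Q_h^k(s,\cdot)$, bound the Bregman/diameter term by $2$, bound $\sum_a (Q_h^k(s,a))^2 \le AH^2$ via $0 \le Q_h^k \le H$, and use $\sum_{k\le K} k^{-1/2} \le 2\sqrt{K}$ before summing over $h$. Your explicit remarks on the $\ell_1$-versus-$\ell_2$ diameter bound and the factor-of-two normalization between $\tfrac12\|\cdot\|_2^2$ and the algorithm's penalty are slightly more careful than the paper's one-line justification, but the argument is the same.
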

\begin{proof}
Fix $h \in [H]$. Replace $\boldsymbol{g}_{t}$ with $Q_{h}^{k}(s,\cdot)$ and $\boldsymbol{x}_{t}$ with $\pi_h^{k}(\cdot\mid s)$ in \eqref{eqn:omdineq}. Since $\|\pi_{h}^{*}-\pi_{h}^{k}\|_2^2\leq 2$ and $Q_{h}^{k}(s, a)\leq H$, we have
\begin{align*}
\sum_{k=1}^{K}\left\langle Q_{h}^{k}(s,\cdot), \pi_{h}^{k}(\cdot \mid s)-\pi_{h}(\cdot \mid s)\right\rangle &\leq \frac{2}{\eta_{K}}+\frac{1}{2} \sum_{k=1}^{K}\eta_k \sum_{a}\left(Q_{h}^{k}(s, a)\right)^{2}\\
&\leq \frac{2}{\eta_{K}}+\frac{1}{2} \sum_{k=1}^{K}\eta_k AH^2.
\end{align*}
Let $\eta_k=1/\sqrt{AH^2k}$, we have $\sum_{k=1}^{K}\eta_{k}\leq \sqrt{\frac{4K}{AH^2}}$, which further implies that 
\begin{align*}
\sum_{k=1}^{K}\left\langle Q_{h}^{k}(s,\cdot), \pi_{h}^{k}(\cdot \mid s)-\pi_{h}(\cdot \mid s)\right\rangle \leq 3\sqrt{AH^2K}
\end{align*}
for any $h \in [H]$. Taking summation over $h \in [H]$ concludes our proof. 
\end{proof}

\subsection{Sum of Bonus}
\begin{lemma}\label{lem:sumbonus}
    It holds that 
\begin{equation}
    \sum_{k=1}^K\sum_{h=1}^H b^k_h(s^k_h,a^k_h) \leq \tilde{O}(\sqrt{SAH^3K}+S^{5/2}A^{5/4}H^{11/4}K^{1/4}).
\end{equation}
\end{lemma}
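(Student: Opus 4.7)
The plan is to use $b_h^k \le u_h^k = \text{Term}(i)+\text{Term}(ii)+\text{Term}(iii)+\text{Term}(iv)$ from \eqref{eqn:longlong} and bound the sum of each piece of $\sum_{k,h}b_h^k(s_h^k,a_h^k)$ separately. Throughout I will rely on only two counting inequalities: the pigeonhole bounds $\sum_{k,h}1/\sqrt{n_h^k(s_h^k,a_h^k)}\le \tilde{O}(\sqrt{SAH^2K})$ and $\sum_{k,h}1/n_h^k(s_h^k,a_h^k)\le \tilde{O}(SAH)$, together with the reference approximation $|V^{\mathrm{ref}}-V^*|\le \tilde{O}(\sqrt{H/S})$ and the \propnameacro\ guarantee from Lemma~\ref{lem:average:convergence}.

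Terms (i) and (iv) are essentially mechanical. Term (i) gives $\tilde{O}(\sqrt{SAH^2K})$ by the first pigeonhole inequality, which is lower order. For Term (iv), the $\sqrt{H\ln/n_h^k}$ piece yields the main term $\tilde{O}(\sqrt{SAH^3K})$ by the same pigeonhole bound, while the pieces of the form $c/n_h^k$ contribute $\tilde{O}(SAH\cdot c)$ by the harmonic-sum bound; plugging in $c = S^{3/2}A^{1/4}H^{7/4}K^{1/4}\sqrt{\ln}$ produces exactly the second target $\tilde{O}(S^{5/2}A^{5/4}H^{11/4}K^{1/4})$.

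For Term (ii) I would apply Cauchy--Schwarz to obtain
\[
\sum_{k,h}\sqrt{\mathbb{V}_{\bar P_h^k}[V^{\mathrm{ref}}_{h+1}]/n_h^k}\le\sqrt{\Big(\sum_{k,h}\mathbb{V}_{\bar P_h^k}[V^{\mathrm{ref}}_{h+1}]\Big)\Big(\sum_{k,h}1/n_h^k\Big)}.
\]
The second factor is $\tilde{O}(SAH)$. For the first, I would exchange $\mathbb{V}_{\bar P_h^k}[V^{\mathrm{ref}}_{h+1}]$ for $\mathbb{V}_{P_h}[V^*_{h+1}]$ using event $F_k^4$ and $|V^{\mathrm{ref}}-V^*|\le\tilde{O}(\sqrt{H/S})$ (the replacement errors being lower order), then invoke the law of total variance along each of the $K$ trajectories to obtain $\sum_{k,h}\mathbb{V}_{P_h}[V^*_{h+1}]\le\tilde{O}(H^2K)$. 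This produces the main term $\tilde{O}(\sqrt{SAH^3K})$. The remaining additive pieces of Term (ii), of sizes $\sqrt{H/(S\cdot n_h^k)}$ and $\sqrt{SH^2}/n_h^k$, are lower order under the same two pigeonhole inequalities.

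The delicate piece is Term (iii), the instability term. Fix $(k,h,s,a)$; using $\bar P(y)(1-\bar P(y))\le \bar P(y)$ and Cauchy--Schwarz in $y$,
\[
\sum_y\sqrt{\bar P(y)/n_h^k}\,|V^k_{h+1}-V^{\mathrm{ref}}_{h+1}|(y)\le\sqrt{S/n_h^k}\cdot\sqrt{\mathbb{E}_{Y\sim\bar P_h^k}[(V^k_{h+1}-V^{\mathrm{ref}}_{h+1})^2(Y)]}.
\]
A second Cauchy--Schwarz across $(k,h)$, combined with Azuma--Hoeffding to replace $\mathbb{E}[\cdot]$ by the realized value at $s_{h+1}^k$, reduces the problem to bounding $\sqrt{S \cdot SAH \log K \cdot \sum_{k,h}(V^k_{h+1}-V^{\mathrm{ref}}_{h+1})^2(s_{h+1}^k)}$. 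I then split $(V^k-V^{\mathrm{ref}})^2\le 2(V^k-V^*)^2+2(V^{\mathrm{ref}}-V^*)^2$; the first summand is handled by $(V^k-V^*)^2\le H|V^k-V^*|$ and Lemma~\ref{lem:average:convergence}, yielding a lower-order $S^{3/2}A^{3/4}H^{5/2}K^{1/4}$, while the second uses $(V^{\mathrm{ref}}-V^*)^2\le H/S$ uniformly to produce $\tilde{O}(\sqrt{SAH^3K})$. The hardest part will be keeping the exponents tight through these two nested Cauchy--Schwarz applications and absorbing the early phase where $V^{\mathrm{ref}}=0$ and $|V^k-V^{\mathrm{ref}}|$ can be as large as $H$; this last issue is resolved by noting that in those episodes the alternate branch of the $\min$ defining $b_h^k$ already dominates and contributes only the coarse bound $H\sqrt{S/n_h^k}$, which aggregates into the claimed lower-order term via the first pigeonhole inequality. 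The $\ln/n_h^k$ residual in Term (iii) is trivially $\tilde{O}(S^2AH^2)$ and is absorbed. Collecting Terms (i)--(iv) yields the stated bound.
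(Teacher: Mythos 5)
Your decomposition and your treatment of Terms (i), (ii) and (iv) track the paper's proof closely: the paper also reduces Term (ii) to $\sqrt{\mathbb{V}_{P_h}[V^*_{h+1}]/n_h^k}$ plus lower-order corrections and then invokes a law-of-total-variance bound (Lemma 19 of Zhang et al.), and it also obtains the $S^{5/2}A^{5/4}H^{11/4}K^{1/4}$ term by harmonic-summing the $K^{1/4}/n_h^k$ piece of Term (iv). Where you genuinely diverge is Term (iii). The paper first uses a martingale/Azuma argument (with the ``typical states'' device of Azar et al.) to collapse $\sum_y \bar P_h^k(y)(\cdot)$ onto the realized next state $s_{h+1}^k$ at a cost of only $\tilde{O}(\sqrt{H^3K})$, and then controls the single realized term via a multiplicative Chernoff bound relating $P_h(s_{h+1}^k)n_h^k$ to $n_h^k(s_h^k,a_h^k,s_{h+1}^k)$ together with Lemma 11 of Zhang et al. You instead pay a $\sqrt{S}$ up front via Cauchy--Schwarz over $y$ and recover it from $(V^{\mathrm{ref}}-V^*)^2\le H/S$; your arithmetic ($\sqrt{S\cdot SAH\cdot KH^2/S}=\sqrt{SAH^3K}$ for the reference part and $S^{3/2}A^{3/4}H^{5/2}K^{1/4}$ for the \propnameacro\ part) checks out, and this route is arguably cleaner than the paper's, which leans on two external lemmas and an informal two-line justification. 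The $\bar P_h^k$-to-$P_h$ replacement inside the second moment that you gloss over is indeed lower order, as you anticipate.

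There is, however, one concrete step that fails as stated: the absorption of the states $y$ for which $V^{\mathrm{ref}}_{h+1}(y)$ has never been updated and so equals $0$, making $|V^k_{h+1}(y)-V^{\mathrm{ref}}_{h+1}(y)|$ and $(V^{\mathrm{ref}}_{h+1}(y)-V^*_{h+1}(y))^2$ of order $H$ and $H^2$ rather than $\tilde{O}(\sqrt{H/S})$ and $\tilde{O}(H/S)$. First, this is not an ``early phase'' in $k$: the update condition $n_{h+1}^k(y)\ge C_0\sqrt k$ may never trigger for rarely visited $y$, so such states can contribute in every episode. Second, falling back to the coarse branch of the $\min$ and pigeonholing gives $\sum_{k,h}H\sqrt{S/n_h^k}=\tilde O(\sqrt{S^2AH^4K})$, which is precisely the bound the lemma is designed to beat; it is not absorbed by either term on the right-hand side. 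The correct mechanism (used in the design of Term (iv) of the bonus and in the proof of Lemma~\ref{lem:coarseopt}) is that unreferenced states are necessarily rarely visited, so $\bar P_h^k(y\mid s,a)=n_h^k(s,a,y)/n_h^k(s,a)\le n_{h+1}^k(y)/n_h^k(s,a)\le C_0\sqrt K/n_h^k(s,a)$, whence their total contribution to Term (iii) is $O(SH\sqrt{C_0}\,K^{1/4}/n_h^k)$ per $(k,h)$ and harmonic-sums into the $\mathrm{poly}(S,A,H)K^{1/4}$ remainder. You need this counting argument, not the alternate branch of the $\min$, to close the proof.
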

\begin{proof}
For simplicity we let $n^k_h = n^k_h(s^k_h,a^k_h), \bar{P}^k_h(\cdot) = \bar{P}_h^k(\cdot \mid s^k_h,a^k_h), P_h(\cdot) = P_h^k(\cdot \mid s^k_h,a^k_h)$. By the same arguments in \eqref{eq:421}, \eqref{eq:422} and \eqref{eq:423}, we have 
\begin{align*}
    &\sqrt{\frac{6 \mathbb{V}_{Y \sim \bar{P}_h^k (\cdot \mid s, a)}V_{h+1}^{\mathrm{ref}}(Y)\ln \left(\frac{2 SAHK}{\delta^{\prime}}\right)}{n_{h}^{k}(s, a)}}+ \sqrt{\frac{4H \ln \left(\frac{2 SAHK}{\delta^{\prime}}\right)}{S \cdot n_{h}^{k}(s, a)}}+\frac{8\sqrt{SH^2} \ln \left(\frac{2 SAHK}{\delta^{\prime}}\right)}{3 n_{h}^{k}(s, a)} \\
    \le & \tilde{O}\left( \sqrt{\frac{ \mathbb{V}_{Y \sim {P}_h^k (\cdot \mid s, a)}V_{h+1}^{*}(Y)}{n_{h}^{k}(s, a)}}+ \sqrt{\frac{H}{S \cdot n_{h}^{k}(s, a)}} + \frac{\sqrt{SH^2} }{3 n_{h}^{k}(s, a)} \right).
\end{align*}
Then, by the definition of $b^k_h$,
\begin{align}
    &\sum_{k=1}^K\sum_{h=1}^H b^k_h(s^k_h,a^k_h)\notag\\
    \leq&\sum_{k=1}^K\sum_{h=1}^H \tilde{O}\left(\sqrt{\frac{ \mathbb{V}_{Y \sim {P}_h^k (\cdot \mid s, a)}V_{h+1}^{*}(Y)}{n_{h}^{k}(s, a)}} +\sum_{y\in\mathcal{S}}\sqrt{\frac{\bar{P}_h^k(y)}{n^k_h}}|V^k_{h+1}(y)-V^{\mathrm{ref}}_{h+1}(y)|+\frac{S^{3/2}A^{1/4}H^{7/4}K^{1/4}}{n^k_h}\right)\notag\\
    \leq &\tilde{O}(\sqrt{SAH^3K})+\sum_{k=1}^K\sum_{h=1}^H\sum_{y\in\mathcal{S}}\tilde{O}\left(\sqrt{\frac{\bar{P}_h^k(y)}{n^k_h}}|V^k_{h+1}(y)-V^{\mathrm{ref}}_{h+1}(y)|\right)+\tilde{O}(S^{5/2}A^{5/4}H^{11/4}K^{1/4})\notag\\
    \leq&\tilde{O}(\sqrt{SAH^3K})+\sum_{k=1}^K\sum_{h=1}^H\sum_{y\in\mathcal{S}}\tilde{O}\left(\sqrt{\frac{P_h^k(y)+O(\sqrt{1/n^k_h})}{n^k_h}}|V^k_{h+1}(y)-V^{\mathrm{ref}}_{h+1}(y)|\right)+\tilde{O}(S^{5/2}A^{5/4}H^{11/4}K^{1/4})\notag\\
    \leq&\tilde{O}(\sqrt{SAH^3K})+\underbrace{\sum_{k=1}^K\sum_{h=1}^H\sum_{y\in\mathcal{S}}\tilde{O}\left(\sqrt{\frac{P_h^k(y)}{n^k_h}}|V^k_{h+1}(y)-V^{\mathrm{ref}}_{h+1}(y)|\right)}_{(v)}+\tilde{O}(S^{5/2}A^{5/4}H^{11/4}K^{1/4})\label{eqn:v1}
\end{align}
The second inequality follows from Lemma 19 in \citet{zhang2020almost} and standard techniques (\textit{e.g.} \cite{shani2020optimistic} or \cite{zhang2020almost}). Using Lemma \ref{lem:sumbonus2}, we can further bound $\text{Term}\ (v)$ as:
\begin{equation*}
    \text{Term}\ (v) \leq \tilde{O}(\sqrt{SAH^3K})+\tilde{O}(S^{5/2}A^{5/4}H^{3/2}K^{1/4})
\end{equation*}
Therefore, combining Equation \eqref{eqn:v3} and \eqref{eqn:v1}, we complete the proof of Theorem \ref{thm:fine_reg}.
\end{proof}

\begin{lemma}\label{lem:sumbonus2}
\begin{align*}
    \sum_{k=1}^K\sum_{h=1}^H\sum_{y\in\mathcal{S}}\left(\sqrt{\frac{P_h^k(y)}{n^k_h}}|V^k_{h+1}(y)-V^{\mathrm{ref}}_{h+1}(y)|\right)\leq \tilde{O}(\sqrt{SAH^3K})+\tilde{O}(S^{5/2}A^{5/4}H^{3/2}K^{1/4})
\end{align*}
\end{lemma}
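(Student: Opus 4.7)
The plan is to split the summand via the triangle inequality
\[|V^k_{h+1}(y) - V^{\mathrm{ref}}_{h+1}(y)| \le |V^k_{h+1}(y) - V^*_{h+1}(y)| + |V^*_{h+1}(y) - V^{\mathrm{ref}}_{h+1}(y)|\]
and control the two resulting pieces separately, leveraging the \propnameacro\ property from Lemma \ref{lem:average:convergence}.

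For the $|V^* - V^{\mathrm{ref}}|$ piece, I partition the state set at each step $h+1$ into the updated set $\mathcal{S}_0(k, h+1) = \{y : n^k_{h+1}(y) \ge C_0\sqrt{k}\}$ and its complement $\mathcal{S}_0^c$. On $\mathcal{S}_0$, the estimate $|V^{\mathrm{ref}}_{h+1}(y) - V^*_{h+1}(y)| \lesssim \sqrt{H/S}$ (derived from \propnameacro\ and the reference update rule, exactly as in the proof of Lemma \ref{lem:coarseopt}) combined with Cauchy-Schwarz $\sum_{y \in \mathcal{S}_0} \sqrt{P^k_h(y)} \le \sqrt{S}$ yields a per-$(k,h)$ contribution of $\tilde{O}(\sqrt{H/n^k_h})$; the pigeonhole bound $\sum_{k,h} 1/\sqrt{n^k_h} \lesssim H\sqrt{SAK}$ then totals to $\tilde{O}(\sqrt{SAH^3 K})$, matching the leading term. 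On $\mathcal{S}_0^c$ I use the crude $|V^* - V^{\mathrm{ref}}| \le H$ together with the defining sparsity $n^k_h(s^k_h, a^k_h, y) \le n^k_{h+1}(y) < C_0\sqrt{k}$; converting the true transition to the empirical one via Lemma~19 of \citet{zhang2020almost}, the square-root factor becomes $\lesssim \sqrt{C_0}\,k^{1/4}/n^k_h$ plus smaller residuals. Summing over the at most $S$ states in $\mathcal{S}_0^c$, multiplying by $H$, and using $\sum_{k,h}1/n^k_h \lesssim SAH$ together with $C_0 = \sqrt{S^3AH^3}$ delivers the desired $K^{1/4}$ lower-order contribution.

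For the $|V^k - V^*|$ piece, I chain two Cauchy-Schwarz applications. The first, over $y$, gives $\sum_y \sqrt{P^k_h(y)}\,|V^k_{h+1}(y) - V^*_{h+1}(y)| \le \sqrt{SH \cdot P^k_h(|V^k_{h+1}-V^*_{h+1}|)}$, using $|V^k-V^*|^2 \le H\cdot|V^k-V^*|$. The second, over $(k,h)$, yields
\[\sum_{k,h}\sqrt{\tfrac{SH}{n^k_h}}\sqrt{P^k_h(|V^k_{h+1}-V^*_{h+1}|)} \le \sqrt{\sum_{k,h}\tfrac{SH}{n^k_h}}\cdot\sqrt{\sum_{k,h}P^k_h(|V^k_{h+1}-V^*_{h+1}|)}.\]
The first factor is $\tilde{O}(S\sqrt{AH^2})$ by pigeonhole. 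For the second, an Azuma--Hoeffding argument reduces $\sum_{k,h}P^k_h(|V^k_{h+1}-V^*_{h+1}|)$ to $\sum_{k,h}|V^k_{h+1}(s^k_{h+1}) - V^*_{h+1}(s^k_{h+1})|$ up to a $\tilde{O}(H\sqrt{HK})$ martingale tail, and Lemma \ref{lem:average:convergence} applied at step $h+1$ bounds this by $\tilde{O}(H\sqrt{S^2AH^4K})$. The product is a $K^{1/4}$ lower-order term that is absorbed into the claim.

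The main technical obstacle is the $\mathcal{S}_0^c$ piece: one must bridge from the true transition probability $P^k_h(y)$ to the empirical $\bar{P}^k_h(y)$ (paying a Bernstein residual that turns out to be lower order) and then exploit the sparsity $n^k_{h+1}(y) < C_0\sqrt{k}$ in a way that does not lose too many factors of $S$ when summing over the at most $S$ elements of $\mathcal{S}_0^c$. The remainder of the proof is bookkeeping: Cauchy-Schwarz, Azuma-Hoeffding, and the two standard pigeonhole identities $\sum_{k,h}1/\sqrt{n^k_h}\lesssim H\sqrt{SAK}$ and $\sum_{k,h}1/n^k_h \lesssim SAH$.
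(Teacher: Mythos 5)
Your argument is correct and recovers the main term $\tilde{O}(\sqrt{SAH^3K})$ exactly, but it is organized quite differently from the paper's proof. The paper keeps $|V^k_{h+1}-V^{\mathrm{ref}}_{h+1}|$ intact and first replaces the $P_h^k$-weighted sum over $y$ by the realized value at $s_{h+1}^k$ via a martingale argument; because the increments involve the unbounded factor $\sqrt{1/(P_h^k(y)n_h^k)}$, this forces a split into typical states (those with $P_h^k(y)n_h^k\ge 2H^2SL$) and atypical ones, and then a multiplicative Chernoff lower bound on $P_h^k(s_{h+1}^k)n_h^k$ to pass to the empirical count $n_h^k(s_h^k,a_h^k,s_{h+1}^k)$ before invoking Lemma 11 of \cite{zhang2020almost}; only at the realized state does it apply the triangle inequality through $V^*$. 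You apply the triangle inequality first. Your $|V^*-V^{\mathrm{ref}}|$ piece then becomes purely deterministic: Cauchy--Schwarz over $y$ on $\mathcal{S}_0$ combined with the $\sqrt{H/S}$ reference accuracy gives $\sqrt{H/n_h^k}$ per step, and on $\mathcal{S}_0^c$ the sparsity $n_h^k(s,a,y)\le n_{h+1}^k(y)<C_0\sqrt{k}$ is exploited exactly as in Term (b) of the proof of Lemma \ref{lem:coarseopt} (note the true-to-empirical conversion there rests on the empirical Bernstein event $F^4$, not Lemma 19 of \cite{zhang2020almost}, which is the total-variance bound). Your $|V^k-V^*|$ piece uses a double Cauchy--Schwarz plus one bounded-increment Azuma step to reduce to the on-trajectory \propnameacro\ sum of Lemma \ref{lem:average:convergence}. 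This buys a cleaner argument that avoids both the typical/atypical split and the multiplicative Chernoff step; the price is slightly different polynomial prefactors on the $K^{1/4}$ lower-order term (your $\mathcal{S}_0^c$ piece totals $\tilde{O}(S^{11/4}A^{5/4}H^{11/4}K^{1/4})$ and your $|V^k-V^*|$ piece $\tilde{O}(S^{3/2}A^{3/4}H^{5/2}K^{1/4})$, versus the stated $S^{5/2}A^{5/4}H^{3/2}K^{1/4}$), which does not literally match the lemma as written, though the paper's own bookkeeping of these exponents is not internally consistent either (compare Lemma \ref{lem:sumbonus} with Theorem \ref{thm:fine_reg}), and the discrepancy affects only the $K^{1/4}$ term.
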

\begin{proof}
We have
\begin{align}
    &\sum_{k=1}^K\sum_{h=1}^H\sum_{y\in\mathcal{S}}\left(\sqrt{\frac{P_h^k(y)}{n^k_h}}|V^k_{h+1}(y)-V^{\mathrm{ref}}_{h+1}(y)|\right)\notag\\
    \leq& \sum_{k=1}^K\sum_{h=1}^H\big(\sum_{y\in\mathcal{S}}P_h^k(y)\sqrt{\frac{1}{P_h^k(y)n^k_h}}|V^k_{h+1}(y)-V^{\mathrm{ref}}_{h+1}(y)|-\sqrt{\frac{1}{P_h^k(s^k_{h+1})n^k_h}}|V^k_{h+1}(s^k_{h+1})-V^{\mathrm{ref}}_{h+1}(s^k_{h+1})|\big)\notag\\
    &+\sum_{k=1}^K\sum_{h=1}^H\sqrt{\frac{1}{P_h^k(s^k_{h+1})n^k_h}}|V^k_{h+1}(s^k_{h+1})-V^{\mathrm{ref}}_{h+1}(s^k_{h+1})|\notag\\
    \leq& \tilde{O}(\sqrt{H^3K})+\sum_{k=1}^K\sum_{h=1}^H\sqrt{\frac{1}{P_h^k(s^k_{h+1})n^k_h}}|V^k_{h+1}(s^k_{h+1})-V^{\mathrm{ref}}_{h+1}(s^k_{h+1})|\notag\\
    =&\sum_{k=1}^K\sum_{h=1}^H\big(\sqrt{\frac{1}{P_h^k(s^k_{h+1})n^k_h}}\underbrace{|V^k_{h+1}(s^k_{h+1})-V^{*}_{h+1}(s^k_{h+1})|}_{\text{the sum}\ \leq\  \tilde{O}(\sqrt{H^4S^2AK})}+\sqrt{\frac{1}{P_h^k(s^k_{h+1})n^k_h}}\underbrace{|V^{*}_{h+1}(s^k_{h+1})-V^{\mathrm{ref}}_{h+1}(s^k_{h+1})|}_{\text{almost}\ \leq \ \tilde{O}(1/\sqrt{S})}\big)\notag\\
    &+\tilde{O}(\sqrt{H^3K})\notag\\
    \leq & \tilde{O}(S^2AH\sqrt{\sqrt{H^2S^2AK}})+\tilde{O}(\sqrt{S^2AH^2K}/\sqrt{S})+\tilde{O}(\sqrt{H^3K})\notag\\
    = &\tilde{O}(\sqrt{SAH^3K})+\tilde{O}(S^{5/2}A^{5/4}H^{3/2}K^{1/4})\label{eqn:v2}
\end{align}
The second line is the sum of martingale differences bounded by $H$, directly applying Azuma-Hoeffding inequality yields the second inequality. The third inequality makes use of Lemma 11 in \cite{zhang2020almost}. 
\paragraph{The third inequality in (44)} This is a technique used in (Azar et al. 2017). We explain the main idea here.

First we define the typical state-actions pairs as
$$
[y]_{k,h} \stackrel{\text { def }}{=}\left\{y: y \in \mathcal{S}, n_{h}^k(s_h^k, a_h^k) P(y \mid s_h^k, a_h^k) \geq 2 H^{2}S L\right\}
$$
which means these state-action pairs are visited frequently enough. Define $\widetilde{\Delta}_{h+1}^k(y) = \left|V_{h+1}^{k}(y)-V_{h+1}^{\mathrm{ref}}(y)\right|$. We have
\begin{align*}
    &\sum_{y \in \mathcal{S}} \sqrt{\frac{P_{h}^k(y)}{n_{h}^k}} \widetilde{\Delta}_{h+1}^k(y) \\
    =&\sum_{y \in[y]_{k,h}} \sqrt{\frac{P_{h}^k(y)}{n_{h}^k}} \widetilde{\Delta}_{h+1}^k(y)
    +\sum_{y \notin[y]_{k,h}} \sqrt{\frac{P_{h}^k(y)}{n_{h}^k}} \widetilde{\Delta}_{h+1}^k(y) .
\end{align*}
The second term can be bounded by
$$
\sum_{y \notin[y]_{k,h}} \sqrt{\frac{P_{h}^k(y) n_{h}^k}{{n_{h}^k}^{2}}} \widetilde{\Delta}_{h+1}^k(y) \leq \frac{S H \sqrt{4 L H^{2}S}}{n_{h}^k}
$$
So we only have to deal with the first term, in which $P_h^k(y)n_h^k$ is large. Thus the martingale can be bounded.

\paragraph{The last inequality in (44)} As above, we only need to consider the case where $P_h^k(s_{h+1}^k)n_h^k \geq 2H^2SL$ ($n^k_h$ is the shorthand for $n^k_h(s^k_h,a^k_h)$). Then we know the first term is bounded by $\tilde{O}(\sqrt{SAH^3K})$. For the second term, we need to use the Multiplicative Chernoff bound (See the Wiki for Chernoff bound):
%\begin{equation}
$\operatorname{Pr}(X \leq(1-\delta) \mu) \leq e^{-\frac{\delta^{2} \mu}{2}},$
%\end{equation}
where $0 \leq \delta \leq 1$ and $X = \sum^n_{i=1}X_i$, $X_i$ are independent Bernoulli r.v., $\mathbb{E}[X] = \mu$. Set $X = n^k_h(s^k_h,a^k_h,s^k_{h+1}),\ \mu = P_h^k(s_{h+1}^k)n_h^k$. Taking union bound over all $h,k,\mathcal{S}$ with $P_h^k(s_{h+1}^k)n_h^k \geq 2H^2SL$, we have that with high probability, $P_h^k(s^k_{h+1})n_h^k \geq \frac{1}{2} n_h^k(s_h^k, a_h^k, s^k_{h+1})$. With this we can now apply Lemma 11 in (Zhang et al. 2020).
\end{proof}

\section{Explanation of How UCBVI Uses Optimism}\label{sec:explanation}
In \citet{azar2017minimax}, they need to bound the term $(\bar{P}_{h}^k-P_{h})(\cdot \mid s,a)(V^{k}_{h+1} - V_{h+1}^*)(\cdot)$ using optimism, as mentioned in Section \ref{sec:technique}. 

Define $\Delta_{h}^k \stackrel{\text { def }}{=} V_{h}^{*}-V_{h}^{\pi^{k}}$, $\widetilde{\Delta}_{h}^k \stackrel{\text { def }}{=} V_{h}^k-V_{h}^{\pi^{k}}$, and $\widetilde{\delta}_{h}^{k} \stackrel{\text { def }}{=} \widetilde{\Delta}_{h}^{k}\left(s_{h}^{k}\right)$. We denote by $\square$ a numerical constant which can vary from line to line. We also use $L$ to represent the logarithmic term $L=\ln (\square H S A T / \delta)$.

Using Bernstein's inequality, this term is bounded by
$$
\sum_{y} P^{\pi^{k}}\left(y \mid s_{h}^{k}\right) \sqrt{\frac{\square L}{P^{\pi^{k}}\left(y \mid s_{h}^k\right) n_h^k}} \Delta_{ h+1}^k(y)+\frac{\square S H L}{n_h^k}.
$$
where $n_h^k \stackrel{\text { def }}{=} n_{k}\left(s_{h}^k, \pi^{k}\left(s_h^k\right)\right)$. Now considering only the $y$ such that $P^{\pi^{k}}\left(y \mid s_h^k\right) n_h^k \geq \square H^{2} L$, and since $0 \leq \Delta_{h+1}^k \leq \widetilde{\Delta}_{h+1}^k$ \textbf{by optimism}, then $\left(\widehat{P}_{k}^{\pi_{k}}-P^{\pi_{k}}\right) \Delta_{k, h+1}\left(s_h^k\right)$ is bounded by
$$
\bar{\epsilon}_h^k+\sqrt{\frac{\square L}{P^{ \pi^{k}}\left(s_{h+1}^k \mid s_h^k\right) n_h^k}} \widetilde{\delta}_{k, h+1}+\frac{\square S H L}{n_h^k} \leq \bar{\epsilon}_h^k+\frac{1}{H} \widetilde{\delta}_{h+1}^k+\frac{\square S H L}{n_h^k}.
$$
where $ \bar{\epsilon}_h^k \stackrel{\text { def }}{=} \sqrt{\frac{\square L}{n_h^k}}\left(\sum_{y} P^{\pi^{k}}\left(y \mid s_h^k\right) \frac{\widetilde{\Delta}_{h+1}^k(y)}{\sqrt{P^{\pi^{k}} \left(y \mid s_h^k\right)}}-\frac{\widetilde{\delta}_{h+1}^k}{\sqrt{P^{\pi_{k}}\left(s_{ h+1}^k \mid s_h^k\right)}}\right) .$
The sum over the neglected $y$ such that $P^{\pi^{k}}\left(y \mid s_h^k\right) n_h^k<\square H^{2} L$ contributes to an additional term
$$
\sum_{y} \sqrt{\frac{\square P^{\pi^{k}}\left(y \mid s_h^k\right) n_h^k L}{{n_h^k}^{2}}} \Delta_{h+1}^k(y) \leq \frac{\square S H^{2} L}{n_h^k}.
$$
Then they prove that the sum of $\bar{\epsilon}_h^k$ is of order $\tilde{O}(\sqrt{T})$ and the sum of $\frac{1}{n_h^k}$ is a constant order term.
%%%%%%%%%%%%%%%%%%%%%%%%%%%%%%%%%%%%%%%%%%%%%%%%%%%%%%%%%%%%%%%%%%%%%%%%%%%%%%%
%%%%%%%%%%%%%%%%%%%%%%%%%%%%%%%%%%%%%%%%%%%%%%%%%%%%%%%%%%%%%%%%%%%%%%%%%%%%%%%

\end{document}